\documentclass{article}
        
\usepackage{arxiv}

\usepackage[utf8]{inputenc} 
\usepackage{hyperref}       
\usepackage{url}            
\usepackage{xcolor}         
\usepackage{booktabs}       
\usepackage{url}
\usepackage{amsmath}
\usepackage{amsthm}
\usepackage{mathtools}
\usepackage{wrapfig}

\DeclareMathOperator*{\argmin}{arg\,min}
\DeclarePairedDelimiter\floor{\lfloor}{\rfloor}
\newcommand{\D}{\mathcal{D}}

\newcommand{\G}{\mathcal{G}}
\newcommand{\E}{\mathbb{E}}
\newcommand{\R}{\mathbb{R}}
\newcommand{\CX}{\mathcal{X}}

\newcommand{\CF}{\mathcal{F}}

\newcommand{\CP}{\mathcal{P}}
\newcommand{\CM}{\mathcal{M}}
\newcommand{\CN}{\mathcal{N}}
\newcommand{\GNN}{\mathcal{G}_{NN}}
\newcommand{\DNN}{\mathcal{D}_{NN}}
\newcommand{\GNNS}{\mathcal{G}_{NN}^{\Sigma}}
\newcommand{\DNNS}{\mathcal{D}_{NN}^{\Sigma}}

\usepackage{amsfonts}
\usepackage{amssymb}
\usepackage{xcolor}
\usepackage{graphicx}
\usepackage{float}
\usepackage{enumerate}
\usepackage{enumitem}
\usepackage{multirow}

\newcommand{\norm}[1]{\left\lVert#1\right\rVert}
\newcommand{\abs}[1]{\left |#1\right |}
\usepackage{mathtools}

\usepackage{algorithmic, algorithm} 

\newtheorem*{rep@theorem}{\rep@title}
\newcommand{\newreptheorem}[2]{%
\newenvironment{rep#1}[1]{%
 \def\rep@title{#2 \ref{##1}}%
 \begin{rep@theorem}}%
 {\end{rep@theorem}}}
\makeatother

\newtheorem{theorem}{Theorem}

\newtheorem{assumption}{Assumption}
\newtheorem{definition}{Definition}
\newtheorem{lemma}{Lemma}
\newtheorem{proposition}{Proposition}

\newtheorem{remark}{Remark}
\newtheorem{example}{Example}
\newreptheorem{theorem}{Theorem}
\newreptheorem{lemma}{Lemma}

\usepackage[export]{adjustbox}
\usepackage{commath}
\usepackage{hyperref}
\usepackage{cleveref}
\usepackage{todonotes}

\allowdisplaybreaks
\hypersetup{
	colorlinks,
	linkcolor=black,
	anchorcolor=black,
	citecolor=black
}
\newcommand*\diff{\mathop{}\!\mathrm{d}}

\graphicspath{ {./figure/} }

\title{Statistical Guarantees of Group-Invariant GANs}

\author{  Ziyu Chen\\
  Department of Mathematics and Statistics\\
  University of Massachusetts Amherst\\
  Amherst, MA 01003,  USA \\
  \texttt{ziyuchen@umass.edu} \\
  \And
    Markos A. Katsoulakis\\
    Department of Mathematics and Statistics\\
  University of Massachusetts Amherst\\
  Amherst, MA 01003,  USA \\
  \texttt{markos@umass.edu} \\
\And
    Luc Rey-Bellet\\
    Department of Mathematics and Statistics\\
  University of Massachusetts Amherst\\
  Amherst, MA 01003,  USA \\
  \texttt{luc@umass.edu} 
  \And
    Wei Zhu\\
    Department of Mathematics and Statistics\\
  University of Massachusetts Amherst\\
  Amherst, MA 01003,  USA \\
  \texttt{weizhu@umass.edu} 
}

\begin{document}
\date{}

\maketitle

\begin{abstract}
This work presents the first statistical performance guarantees for group-invariant generative models. Many real data, such as images and molecules, are invariant to certain group symmetries, which can be taken advantage of to learn more efficiently as we rigorously demonstrate in this work. Here we specifically study generative adversarial networks (GANs), and quantify the gains when incorporating symmetries into the model. Group-invariant GANs are a type of GANs in which the generators and discriminators are hardwired with group symmetries. Empirical studies have shown that these networks are capable of learning group-invariant distributions with significantly improved data efficiency. In this study, we aim to rigorously quantify this improvement by analyzing the reduction in sample complexity and in the discriminator approximation error for group-invariant GANs. Our findings indicate that when learning group-invariant distributions, the number of samples required for group-invariant GANs decreases proportionally by a factor of the group size and the discriminator approximation error has a reduced lower bound. Importantly, the overall error reduction cannot be achieved merely through data augmentation on the training data. Numerical results substantiate our theory and highlight the stark contrast between learning with group-invariant GANs and using data augmentation. This work also sheds light on the study of other generative models with group symmetries, such as score-based generative models.
\end{abstract}

\begin{keywords}
{Generative modeling, generative adversarial networks, group invariance, performance guarantees, generalization error}
\end{keywords}

\section{Introduction}
The machine learning community has shown a growing interest in generative models, which aim to understand the underlying distribution of data and generate new samples from it. Among the various generative models, Generative Adversarial Networks (GANs) \cite{goodfellow2014generative} have garnered significant attention due to their ability to learn and sample from complex data distributions. Empirical evidence shows that GANs achieve remarkable performance in diverse applications such as image synthesis and text generation \cite{reed2016generative, yu2018generative, zhu2017unpaired}. Such success, however, hinges upon the availability of \textit{abundant} training data.

Recent research has proposed to leverage the group-invariant structure of underlying distributions to improve the data efficiency of GANs \cite{dey2020group, birrell2022structure}.  Additionally, related work has explored other generative models: \cite{kohler2020equivariant} proposes equivariant flows, while \cite{lu2024structure} and \cite{hoogeboom2022equivariant} study equivariant diffusion models. Such approaches are motivated by the prevalence of \textit{group symmetry} observed in various real-world distributions. For instance, in the case of medical images captured without orientation alignment, the distribution should be rotation-invariant, i.e., an image and its rotated copy are \textit{equiprobable}. Furthermore, many physical, physicochemical, and biochemical systems possess intrinsic symmetries or equivariance structures, making structured probabilistic modeling crucial \cite{pretti2020symmetry,ohlsson2020symmetry,jacobs2021symmetries,nicoli2020asymptotically,li2018neural,noid2013perspective,pak2018advances}. Empirical evidence from \cite{kohler2020equivariant,dey2020group, birrell2022structure} shows that models designed to respect group symmetry can effectively learn a group-invariant distribution even with limited data. The key idea is to introduce structures into generative models, i.e., moving away from
\textit{generic} models to exploit essential structures in physicochemical models. This shift
is expected to create \textit{physics-informed} generative models that are more sample-efficient and reduce computational load and complexity. However, a clear theoretical understanding of these phenomena remains to be established.

In this paper, we provide statistical 
performance guarantees 
that explain why group-invariant generative models, in particular, group-invariant GANs \cite{dey2020group,birrell2022structure}, can effectively learn group-invariant distributions with significantly fewer training data. Specifically, let $\Sigma$ be a finite group, $\mu$ be a $\Sigma$-invariant target distribution supported on some compact domain $\CX\subset\R^d$, and $\rho$ be an easy-to-sample (noise) source distribution. Let $S^\Sigma[(g_{n,m}^*)_\sharp\rho]$ be the $\Sigma$-invariant generated distribution   learned based on $n$ i.i.d. training samples from the target $\mu$ and $m$ random draws from the noise source $\rho$; see Eq.~\eqref{eq:minimizer2} for the exact definition. We show that, if $m$ is sufficiently large and the network sizes are sufficiently large, then

\begin{equation}\label{eq:estimate_1}
\E\left[\mathcal{W}_1(S^\Sigma[(g_{n,m}^*)_\sharp\rho],\mu)\right] \, \text{is controlled by} \, \left(\frac{1}{\abs{\Sigma}n}\right)^{1/d},
\end{equation}
where $\mathcal{W}_1(P, Q)$ is the Wasserstein-1 distance between two distributions $P$ and $Q$; see \cref{eq:Wasserstein}. This reduction by a factor of $|\Sigma|$ in the number of training samples needed to learn a $\Sigma$-invariant target $\mu$  partially explains the  \textit{enhanced} data efficiency and generalization guarantees of GANs compared to existing generalization errors, such as those reported in \cite{ huang2022error}, by leveraging the group structure. This reduction in training sample complexity can be interpreted as follows: the performance of a group-invariant GAN utilizing $n$ i.i.d. training samples is equivalent to a vanilla GAN with $\abs{\Sigma}n$ i.i.d. training samples. This is crucial especially when the data available is scarce.
If we further assume that  $\mu$ is supported on a smooth $d^*$-dimensional submanifold of $\mathbb{R}^d$, then
\begin{equation}
\label{eq:estimate_2}
\E\left[\mathcal{W}_1(S^\Sigma[(g_{n,m}^*)_\sharp\rho],\mu)\right] \, \text{is controlled by} \, \left(\frac{1}{\abs{\Sigma}n}\right)^{1/d^*}.
\end{equation}

This estimation suggests that the improvement in the error bound does not suffer from the \textit{curse of dimensionality}, as it depends only on the \textit{intrinsic dimension} $d^*$ of the target's support, which could be much smaller than the ambient dimension $d$. More importantly, we demonstrate that group-invariant GANs function effectively as though the input training data is augmented by the group actions in an i.i.d. manner (in contrast, augmented data are conditionally independent but not i.i.d.) without an increase in the number of parameters and with a reduced discriminator approximation error lower bound compared to vanilla GANs, with fixed network architecture (see \Cref{prop:nonequivariance}). For a visual illustration of the significant difference between data augmentation and learning with group-invariant GANs, refer to the numerical experiments in \Cref{sec:numerical} (\cref{fig:student-t_2d}, \cref{fig:student-t_12d} and \cref{fig:w1distance}). We remark that we only present the dominating terms in \eqref{eq:estimate_1} and \eqref{eq:estimate_2} assuming $m$ is sufficiently large and the network sizes are sufficiently large, and their precise statement can be found in \cref{theorem:main} and \cref{theorem:lowdimensional}.

To the best of our knowledge, our work presents the first step towards theoretically understanding the impact of harnessing group symmetry within group-invariant generative models. While our primary focus lies on the performance guarantees for group-invariant GANs, we hope the analysis developed herein can be generalized in future works to study performance guarantees for other group-invariant generative models. See, for instance, \cite{chen2024equivariant} discussed at the end of \Cref{sec:relatedwork} and \Cref{remark:discriminator_lowerbound}.

This paper is organized as follows. In \Cref{sec:relatedwork}, we review some related work.  \Cref{sec:background} provides the background and goals of our study. Theoretical results when the target distribution lies in the Euclidean space or a low-dimensional submanifold are presented in \Cref{sec:euclidean} and \Cref{sec:manifold}, respectively. A numerical example is provided in \Cref{sec:numerical}. 
 We conclude our work and discuss future directions in \Cref{sec:conclusion}.  Some of the proofs are deferred to \Cref{proof:euclidean} and \Cref{proof:lowdimensional}.

\section{Related work}\label{sec:relatedwork}

A burgeoning body of recent research on group-equivariant neural networks \cite{cohen2016group,cohen2019general, weiler2019general} has demonstrated remarkable empirical success achieved through leveraging group symmetries in various supervised machine learning tasks. These accomplishments have, in turn, fostered the development of group-invariant/equivariant generative models, enabling data-efficient unsupervised learning of group-invariant distributions. Notable endeavors in this area include equivariant normalizing flows \cite{bilovs2021scalable,boyda2021sampling,garcia2021n,kohler2020equivariant}, group-equivariant GANs \cite{dey2020group}, structure-preserving GANs \cite{birrell2022structure} that rely on structure-customized divergences \cite{birrell2020f}, and discrete and continuous equivariant diffusion models \cite{hoogeboom2022equivariant, lu2024structure}. Despite the intuitive understanding, the extent to which and the underlying reasons why group symmetry can enhance the data efficiency of those group-invariant generative models remain largely unknown.

On the other hand, recent research \cite{chen2020distribution, huang2022error, liang2021well} has made significant progress in the sample-complexity analysis of \textit{vanilla} GANs (i.e., GANs without group-symmetric generators and discriminators) in distribution learning. These works quantify the disparity, often measured using probability divergences, between the distribution learned by GANs with finite training samples and the underlying data distribution. Compared to these works, our analysis follows a similar oracle inequality and bounds each term of the inequality separately. However, the novelty of our work lies in: 1) proposing a version of group-invariant generator and discriminator architectures; 2) treating each error term with incorporated group-invariant network architectures and utilizing the fact that the target distribution is group-invariant. While recent work \cite{chen2023sample,tahmasebi2023sample} has explored the enhanced sample complexity of divergence estimation under group symmetry, it remains unclear how these findings can be extended to the statistical estimation of group-invariant GANs aimed at learning group-symmetric probability distributions. Moreover, \cite{chen2023sample} in particular did not consider distributions with low-dimensional support, and their technical assumptions on the group actions are notably restrictive. In our work, we extend the analysis of GANs to include cases where distributions possess low-dimensional structure and we adopt a more flexible approach towards the assumptions on finite group actions compared to those in \cite{chen2023sample}.

While some other generative models such as diffusion models have gained traction recently, GANs as one-shot methods are still widely used in certain applications where computational efficiency and real-time generation are critical \cite{martinez2023ld,kang2023scaling,sauer2023stylegan}. Moreover, the mathematical formulation of GANs is essentially the minimization of variational divergences. As the objective of general generative modeling can be viewed as minimizing the discrepancy between the generated and the target distribution in terms of certain divergences, our work can shed light on the study of other generative models by considering the performance guarantees of invariant models evaluated by symmetry-adapted divergences. For example, in recent work \cite{chen2024equivariant}, we provide performance guarantees for score-based generative models with symmetries, inspired by this work.

\section{Background and problem setup}\label{sec:background}
\subsection{Integral Probability Metrics and GANs}
Let $\mathcal{X}$ be a measurable space, and $\CP(\CX)$ be the set of probability measures on $\CX$. Given $\Gamma\subset \mathcal{M}_b(\CX)$, where $\mathcal{M}_b(\CX)$ is the space of bounded measurable functions on $\CX$, the $\Gamma$-Integral Probability Metric ($\Gamma$-IPM) \cite{muller1997integral, sriperumbudur2012empirical} between $\nu\in\CP(\CX)$ and $\mu\in\CP(\CX)$ is defined as
\begin{equation}
    \label{eq:IPM}
    d_\Gamma(\nu, \mu) \coloneqq \sup_{\gamma\in\Gamma}\left\{\E_\nu[\gamma] - \E_\mu[\gamma]\right\}.
\end{equation}
For example, if $\Gamma = \text{Lip}_H(\CX)$, the space of $H$-Lipschitz functions
on $\CX$, then by the Kantorovich-Rubinstein duality, we have
    \begin{equation}
        \label{eq:Wasserstein}
        \mathcal{W}_1(\nu, \mu) =H^{-1}d_\Gamma(\nu, \mu).
    \end{equation}
    For simplicity, we omit the factor $H^{-1}$ in the following presentation.
    When $\Gamma$ is a class of neural networks, $d_\Gamma(\nu,\mu)$ defined by \eqref{eq:IPM} is called the “neural network distance” \cite{arora2017generalization}.
    
    GANs, originally proposed by Goodfellow et al. \cite{goodfellow2014generative}, learn to approximate a target data distribution $\mu$ by generating samples from a noise source distribution $\rho$ (typically chosen as uniform or Gaussian) through a two-player game. More specifically, GANs aim to solve the following minmax problem,
\begin{equation}
    \min_{g\in\G} \max_{f\in\D}\E_{x\sim\mu}[f(x)]-\E_{z\sim\rho}[f(g(z))],
\end{equation}
where $\G$ is a class of functions called \textit{generators} and $\D$ is a class of functions called \textit{discriminators}. This minmax problem can be equivalently formulated as
\begin{equation*}
    \min_{g\in\G} d_{\D}(\mu, g_\sharp\rho),
\end{equation*}
where $g_\sharp\rho \coloneqq \rho\circ g^{-1}$ is the push-forward measure of $\rho$ under $g$. For example, if $\D = \text{Lip}_1(\CX)$ is the class of 1-Lipschitz functions, this minmax problem is the formulation of Wasserstein GANs (W-GANs) \cite{arjovsky2017wasserstein}.

Suppose we have only finite samples $\{x_i\}_{i=1}^n$ from the target $\mu$ and $\{z_i\}_{i=1}^m$ from the source $\rho$, we define the optimal generator as the minimizer of the following optimization problem \cite{chen2020distribution,huang2022error},
\begin{equation}\label{eq:minimizer2vanilla}
g_{n,m}=\argmin_{g\in\G}d_{\D}(g_\sharp\widehat{\rho}_m,\widehat{\mu}_n),
\end{equation}
where $\widehat{\mu}_n = \frac{1}{n}\sum_{i=1}^n\delta_{x_{i}}$ is the empirical distribution of the target $\mu$ and $\widehat{\rho}_m = \frac{1}{m}\sum_{i=1}^m\delta_{z_{i}}$ is the empirical distribution of the source $\rho$.

\subsection{Group invariance and symmetrization operators}
A \textit{group} is a set $\Sigma$ equipped with a group product satisfying the properties of associativity, identity, and invertibility. Given a group $\Sigma$ and a set $\CX$, a map $\theta:\Sigma \times \CX\to\CX$ is called a \textit{group action on $\CX$} if $\theta_\sigma\coloneqq \theta(\sigma, \cdot): \CX\to\CX$ is a bijection on $\CX$ for any $\sigma\in\Sigma$, and $\theta_{\sigma_2}\circ \theta_{\sigma_1} = \theta_{\sigma_2\cdot \sigma_1}$, $\forall \sigma_1, \sigma_2\in\Sigma$. If the context is clear, we will abbreviate $\theta(\sigma, x)$ as $\sigma x$.

A function $\gamma:\CX\to \R$ is said to be \textit{$\Sigma$-invariant} if it is constant over group orbits of $\CX$, i.e., $\gamma\circ \theta_\sigma = \gamma$ for all $\sigma\in\Sigma$. Let $\Gamma
\subset \CM_b(\CX)$ be a set of bounded measurable functions $\gamma:\mathcal{X}\to\mathbb{R}$. We define its subset, $\Gamma_{\Sigma}\subset \Gamma$, of $\Sigma$-invariant functions as
\begin{equation}
\label{eq:invariant_function_space}
    \Gamma_{\Sigma} \coloneqq \{\gamma\in\Gamma:\gamma\circ \theta_\sigma=\gamma,\forall \sigma\in\Sigma\}.
\end{equation}
Likewise, a probability measure $\mu\in \CP(\CX)$ is said to be \textit{$\Sigma$-invariant} if $\mu = (\theta_\sigma)_\sharp \mu$ for all $\sigma\in \Sigma$. We define the set of all $\Sigma$-invariant distributions on $\CX$ as 
\begin{equation}
    \CP_\Sigma(\CX)\coloneqq \{\mu\in\CP(\CX): \mu ~\text{is}~ \Sigma\text{-invariant}\}.    
\end{equation}

Finally, following \cite{birrell2022structure, chen2023sample}, we define two \textit{symmetrization operators}, $S_\Sigma:\mathcal{M}_b(\CX)\to\mathcal{M}_b(\CX)$ and $S^\Sigma:\CP(\CX)\to\CP(\CX)$, on functions and probability measures, respectively, as
\begin{equation}
\label{eq:symmetrization_function}
    S_\Sigma[\gamma](x)  \coloneqq \int_\Sigma \gamma(\sigma x)\mu_\Sigma(d\sigma),~\forall \gamma\in\mathcal{M}_b(\CX),
\end{equation}
\begin{equation}
\label{eq:symmetrization_measure}
    \E_{S^\Sigma[\mu]} \gamma \coloneqq \E_\mu S_\Sigma[\gamma], ~\forall \mu\in\CP(\CX), \forall \gamma \in\mathcal{M}_b(\CX),
\end{equation}
where $\mu_\Sigma$ is the unique Haar probability measure on a compact Hausdorff topological group $\Sigma$ \cite{folland1999real}. One can easily verify that $S_\Sigma$ and $S^\Sigma$ are, respectively, \textit{projection operators} onto their corresponding invariant subsets $\Gamma_\Sigma\subset \Gamma$ and $\CP_\Sigma(\CX)\subset \CP(\CX)$ \cite{birrell2022structure}. One of the main results of \cite{birrell2022structure} that motivates the construction of group-invariant GANs is summarized in the following Lemma.
\begin{lemma}[paraphrased from \cite{birrell2022structure}]
\label{thm:sp-gan-main-result}
    If $S_\Sigma[\Gamma]\subset \Gamma$ and $\nu, \mu\in\CP(X)$, then
    \begin{equation*}
        d_\Gamma(S^\Sigma[\nu], S^\Sigma[\mu]) = d_{\Gamma_\Sigma}(\nu, \mu) = \sup_{\gamma\in \Gamma_{\Sigma}}\left\{\E_{\nu}[\gamma] - \E_\mu[\gamma]\right\}.
    \end{equation*}
    In particular, if $\nu, \mu\in\CP_\Sigma(\CX)$ are $\Sigma$-invariant, then $d_\Gamma(\nu, \mu) = d_{\Gamma_\Sigma}(\nu, \mu)$.
\end{lemma}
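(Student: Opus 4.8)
The plan is to reduce everything to the two structural facts already recorded in the setup: that $S_\Sigma$ is a projection of $\Gamma$ onto $\Gamma_\Sigma$ (here the hypothesis $S_\Sigma[\Gamma]\subset\Gamma$ is what makes this meaningful), and that $S_\Sigma$ and $S^\Sigma$ are adjoint to one another in the sense of Eq.~\eqref{eq:symmetrization_measure}. First I would unfold the left-hand side using the definition of the IPM in Eq.~\eqref{eq:IPM},
\[
d_\Gamma(S^\Sigma[\nu], S^\Sigma[\mu]) = \sup_{\gamma\in\Gamma}\left\{\E_{S^\Sigma[\nu]}[\gamma] - \E_{S^\Sigma[\mu]}[\gamma]\right\},
\]
and then apply the duality \eqref{eq:symmetrization_measure} to each expectation to rewrite the objective as $\E_\nu[S_\Sigma[\gamma]] - \E_\mu[S_\Sigma[\gamma]]$, so that
\[
d_\Gamma(S^\Sigma[\nu], S^\Sigma[\mu]) = \sup_{\gamma\in\Gamma}\left\{\E_\nu[S_\Sigma[\gamma]] - \E_\mu[S_\Sigma[\gamma]]\right\}.
\]

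Next I would establish the two inclusions that pin this supremum to the one over $\Gamma_\Sigma$. For ``$\le$'': for every $\gamma\in\Gamma$ the hypothesis $S_\Sigma[\Gamma]\subset\Gamma$ gives $S_\Sigma[\gamma]\in\Gamma$, and left-invariance of the Haar measure $\mu_\Sigma$ (a one-line change of variables $\sigma\mapsto\sigma\tau$ in \eqref{eq:symmetrization_function}) shows $S_\Sigma[\gamma]\circ\theta_\tau = S_\Sigma[\gamma]$ for all $\tau\in\Sigma$, hence $S_\Sigma[\gamma]\in\Gamma_\Sigma$; thus every admissible test function in the rewritten supremum is itself an element of $\Gamma_\Sigma$, so the value is at most $d_{\Gamma_\Sigma}(\nu,\mu)$. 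For ``$\ge$'': any $\gamma\in\Gamma_\Sigma\subset\Gamma$ is a fixed point of the projection $S_\Sigma$, i.e.\ $S_\Sigma[\gamma]=\gamma$, so choosing this particular $\gamma$ in the supremum over $\Gamma$ recovers $\E_\nu[\gamma] - \E_\mu[\gamma]$; taking the supremum over all such $\gamma$ yields $d_{\Gamma_\Sigma}(\nu,\mu)$. Combining the two inequalities gives the first asserted equality, and the second equality is simply the definition \eqref{eq:invariant_function_space} of $\Gamma_\Sigma$ together with that of $d_{\Gamma_\Sigma}$.

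Finally, for the ``in particular'' clause I would use that $S^\Sigma$ is the projection onto $\CP_\Sigma(\CX)$: if $\nu,\mu\in\CP_\Sigma(\CX)$, then $S^\Sigma[\nu]=\nu$ and $S^\Sigma[\mu]=\mu$, and substituting into the identity just proved gives $d_\Gamma(\nu,\mu)=d_{\Gamma_\Sigma}(\nu,\mu)$. I do not expect a serious obstacle; the only points needing care are (i) justifying the interchange of expectation and the Haar integral when invoking \eqref{eq:symmetrization_measure}, which is immediate since $\Gamma\subset\CM_b(\CX)$ consists of bounded measurable functions and $\mu_\Sigma$ is a probability measure (Fubini--Tonelli), and (ii) invoking the projection property of $S_\Sigma$ in exactly the directions used above — that it maps $\Gamma$ into $\Gamma_\Sigma$ and fixes $\Gamma_\Sigma$ pointwise — both of which follow from \eqref{eq:symmetrization_function} and the standing hypothesis $S_\Sigma[\Gamma]\subset\Gamma$.
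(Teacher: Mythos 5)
The paper does not prove this lemma itself; it is cited as a known result from the referenced work on structure-preserving GANs, so there is no internal proof to compare against. Your argument is correct and is essentially the canonical proof: rewrite the IPM via the adjoint identity $\E_{S^\Sigma[\nu]}[\gamma]=\E_\nu[S_\Sigma[\gamma]]$, then squeeze the supremum over $\Gamma$ against the supremum over $\Gamma_\Sigma$ using that $S_\Sigma$ maps $\Gamma$ into $\Gamma_\Sigma$ (range inclusion from $S_\Sigma[\Gamma]\subset\Gamma$ plus invariance of Haar measure) and fixes $\Gamma_\Sigma$ pointwise, and finally apply that $S^\Sigma$ fixes $\CP_\Sigma(\CX)$ for the ``in particular'' clause. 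One small terminological slip: the change of variables $\sigma\mapsto\sigma\tau$ you invoke to show $S_\Sigma[\gamma]\circ\theta_\tau=S_\Sigma[\gamma]$ is \emph{right}-invariance of $\mu_\Sigma$, not left-invariance as written; this is immaterial here because the Haar measure on a compact (in particular finite) group is bi-invariant, but it is worth stating correctly. The Fubini justification you flag is also fine since all test functions are bounded and $\mu_\Sigma$ is a probability measure.
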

In other words, to ``tell the difference'' between two $\Sigma$-invariant distributions $\nu, \mu\in \CP_{\Sigma}(\CX)$, one only needs to optimize over all $\Sigma$-invariant discriminators $\gamma\in \Gamma_\Sigma$. \textit{Group-invariant GANs} are thus a type of GANs where the discriminators are all $\Sigma$-invariant and the generators are parameterized in such a way that the generated distributions are always $\Sigma$-invariant. We will further elaborate on the exact construction of group-invariant GANs in Section \ref{sec:architecture}.

\subsection{Assumptions on the group actions}
We assume $\CX\subset \R^d$ is a bounded subset of $\mathbb{R}^d$ equipped with the Euclidean metric $\|\cdot\|_2$. In addition, we make the following assumptions on the group $\Sigma$ and its action on $\CX$.
\begin{assumption}
\label{assumption:1}
    The group $\Sigma$ and its action on $\CX\subset \R^d$ satisfy\\
    1. $\Sigma$ is finite, i.e., $\abs{\Sigma}<\infty$;\\
    2. $\Sigma$ acts linearly on $\CX$; that is, $\sigma(ax_1+bx_2) = a\sigma x_1 + b\sigma x_2,\,\forall x_1,x_2\in\CX, \sigma\in\Sigma, a,b\in\mathbb{R}$;\\
    3. the $\Sigma$-actions on $\CX$ are isometric; that is,
    \begin{equation*}
        \norm{\sigma x_1-\sigma x_2}_2 = \norm{x_1-x_2}_2,\,\forall x_1,x_2\in\CX, \sigma\in\Sigma.
    \end{equation*}
\end{assumption}
\begin{remark}
    In terms of the third condition, one can instead assume that the $\Sigma$-actions on $\CX$ are 1-Lipschitz: that is, $\norm{\sigma x_1-\sigma x_2}_2 \leq \norm{x_1-x_2}_2,\,\forall x_1,x_2\in\CX, \sigma\in\Sigma$. Weyl's unitary trick tells us that for every finite group action, there exists a basis in which the action is isometric, such that the third condition can always be satisfied with a simple change of basis.
\end{remark}
The $\Sigma$ action on $\CX$ induces a \textit{fundamental domain} $\CX_0\subset\CX$, a terminology which we adopt from \cite{chen2023sample}.
\begin{definition}[Fundamental domain]
    \label{def:fundamental_domain}
    A subset $\CX_0\subset \CX$ is called a \textit{fundamental domain} of $\CX$ under the $\Sigma$-action if for any $x\in\CX$, there exists a unique $x_0\in\CX_0$ such that $x = \sigma x_0$ for some $\sigma\in\Sigma$.
\end{definition}

We remark that the choice of the fundamental domain $\CX_0$ is generally not unique. Following \cite{chen2023sample}, we  abuse the notation $\CX = \Sigma\times \CX_0$ to denote $\CX_0$ is a fundamental domain of $\CX$ under the $\Sigma$-action. Given a specific choice  of the fundamental domain $\CX_0$, we define the projection $T_0:\mathcal{X}\to\mathcal{X}_0$ as
\begin{align}\label{def:quotientmap}
T_0(x) \coloneqq y\in\CX_0, ~\text{if}~y=\sigma x ~\text{for some}~\sigma\in\Sigma.
\end{align}
In other words, $T_0$ maps any $x\in\CX$ to its unique orbit representative in $\CX_0$. Given a $\Sigma$-invariant distribution $\mu\in\CP_\Sigma(\CX)$ on $\CX$, the map $T_0$ induces a distribution $\mu_{\mathcal{X}_0}\in \CP(\CX_0)$ on the fundamental domain $\CX_0$ defined by
\begin{equation}
\label{eq:fundamental_domain_measure}
\mu_{\CX_0} = (T_0)_\sharp\mu.
\end{equation}

We next introduce an important concept,  the \textit{covering number}, which appears in many proofs throughout our paper.

\begin{definition}[Covering number]
\label{def:covering_number}
Let $(\CX,\tau)$ be a metric space. A subset $S\subset \CX$ is called an $\epsilon$-cover of $\mathcal{X}$ if for any $x\in \mathcal{X}$ there is an $s\in S$ such that $\tau(s,x)\leq\epsilon$. Define the $\epsilon$-covering number of $\mathcal{X}$ as
\[
\mathcal{N}(\mathcal{X},\epsilon,\tau):=\min\left\{\abs{S}:S \text{ is an } \epsilon\text{-cover of } \mathcal{X}\right\}.
\]
When $\tau(x, y) = \|x-y\|_2$ is the Euclidean metric in $\R^d$, we abbreviate $\mathcal{N}(\mathcal{X},\epsilon,\tau)$ as $\mathcal{N}(\mathcal{X},\epsilon)$.
\end{definition}

With Definition~\ref{def:fundamental_domain} and Definition~\ref{def:covering_number}, we specify below our second technical assumption on the group action over $\CX$, which is motivated by \cite{sokolic2017generalization, chen2023sample}.
\begin{assumption}\label{assumption:new}
There exists some constant $\epsilon_{\Sigma}$ and a set-valued function $A_0(\epsilon)$, such that for any $\epsilon\in(0,\epsilon_{\Sigma})$, we have $A_0(\epsilon)\subset\CX_0$, and\\
1. $\norm{\sigma x-\sigma'x'}_2>2\epsilon$, $\forall x,x'\in\CX_0\backslash A_0(\epsilon), \sigma\neq\sigma'\in\Sigma$;\\
2. $\norm{\sigma x-\sigma x'}_2 = \norm{x-x'}_2$, $\forall x,x'\in\CX_0\backslash A_0(\epsilon), \sigma\in\Sigma$.\\
Moreover, for some $0\leq r\leq d$, we have
\begin{equation}
\limsup_{\epsilon\to0^+}\frac{\CN(A_0(\epsilon),\epsilon)}{\CN(\CX_0,\epsilon)\epsilon^r}<\infty.
\end{equation}
\end{assumption}

\begin{remark}
Assumption~\ref{assumption:new} is notably less restrictive compared to the sufficient condition of Theorem 3 in \cite{sokolic2017generalization}, which can be viewed as a special case of Assumption~\ref{assumption:new} when $A(\epsilon)=\emptyset$ and $r=0$. Such relaxation is crucial to accommodate symmetries such as mirror reflections and circular rotations in the following examples.
\end{remark}
\begin{example}[Mirror reflection in $\mathbb{R}^2$]
$\CX = [-1,1]\times[0,1]$. The group actions are generated by $\sigma(x,y) = (-x,y)$. In this case, we have $r=1$ in Assumption~\ref{assumption:new}. See the left subfigure in Figure~\ref{fig:assumption}.
\end{example}
\begin{example}[Rotations in $\mathbb{R}^2$]
$\CX = (\rho\cos\theta,\rho\sin\theta),~~\rho\in[0,1],~~\theta\in[0,2\pi)$ and $\CX_0 = (\rho\cos\theta,\rho\sin\theta),~~\rho\in[0,1],~~\theta\in[0,\pi/2)$. The group actions are generated by the $\pi/2$-rotation with respect to the origin. In this case, we also have $r=1$ in Assumption~\ref{assumption:new}. See the right subfigure in Figure~\ref{fig:assumption}.
\end{example}
\begin{remark}
The introduction of $A_0$ implies that group actions are \textit{effective}: if $x_0\in\mathcal{X}_0$, then there is an infinitesimal lower bound $\epsilon$ between $\sigma x_0$ and $\mathcal{X}_0$ for any $\sigma\neq id$. For example, in Example 1, if $x_0\in\mathcal{X}_0$ is arbitrarily close to the border of $\mathcal{X}_0$, i.e., $\{0\}\times[0,1]$, then $\sigma_1 x_0$ will be arbitrarily close to $\mathcal{X}_0$, where $\sigma_1$ is the reflection action; similarly for Example 2, when $x_0\in\mathcal{X}_0$ is close to $\theta=0$ and $\sigma_1$ is a counterclockwise rotation by $\pi/2$. Also note that the assumptions in \cite{sokolic2017generalization, chen2023sample} both fail to satisfy in the two cases in Figure~\ref{fig:assumption}. 
\end{remark}

\begin{figure}[t]
    \centering
\centerline{\includegraphics[width=.5\linewidth]{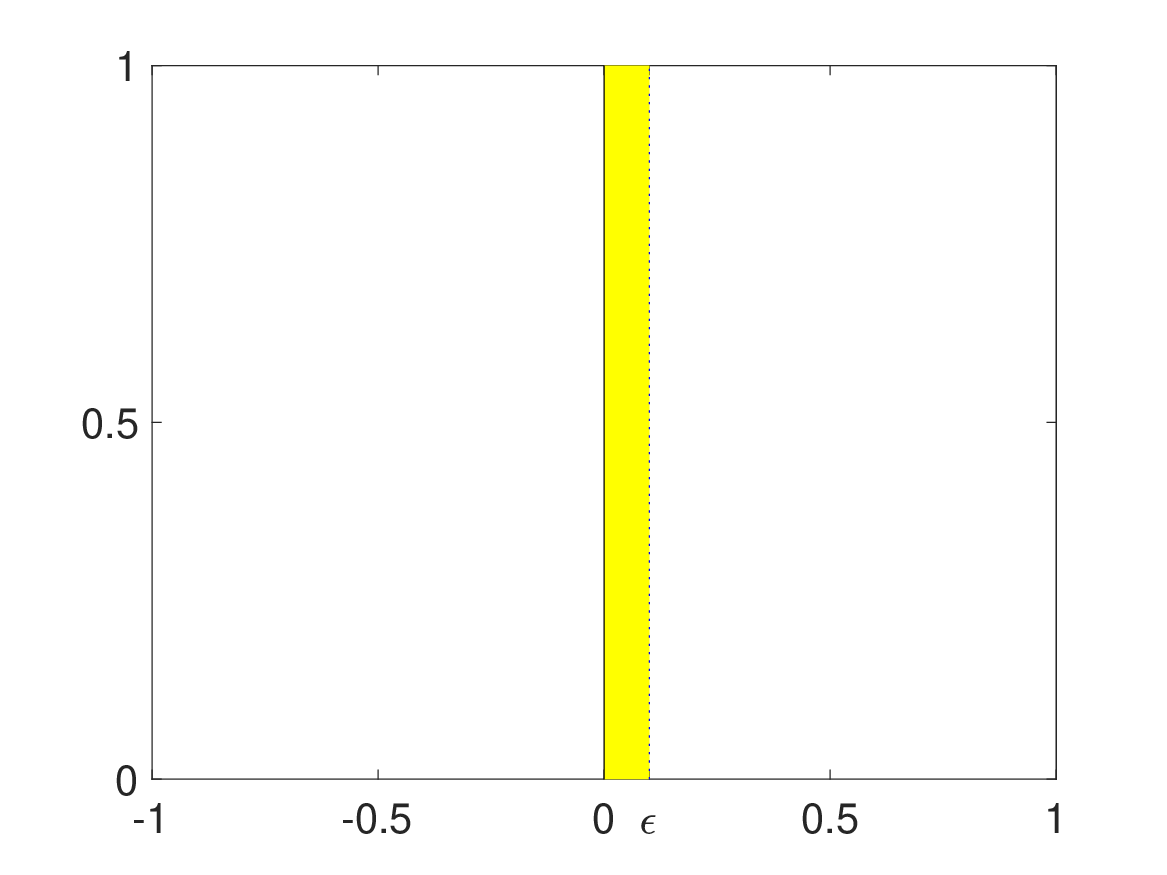}
\includegraphics[width=.5\linewidth]{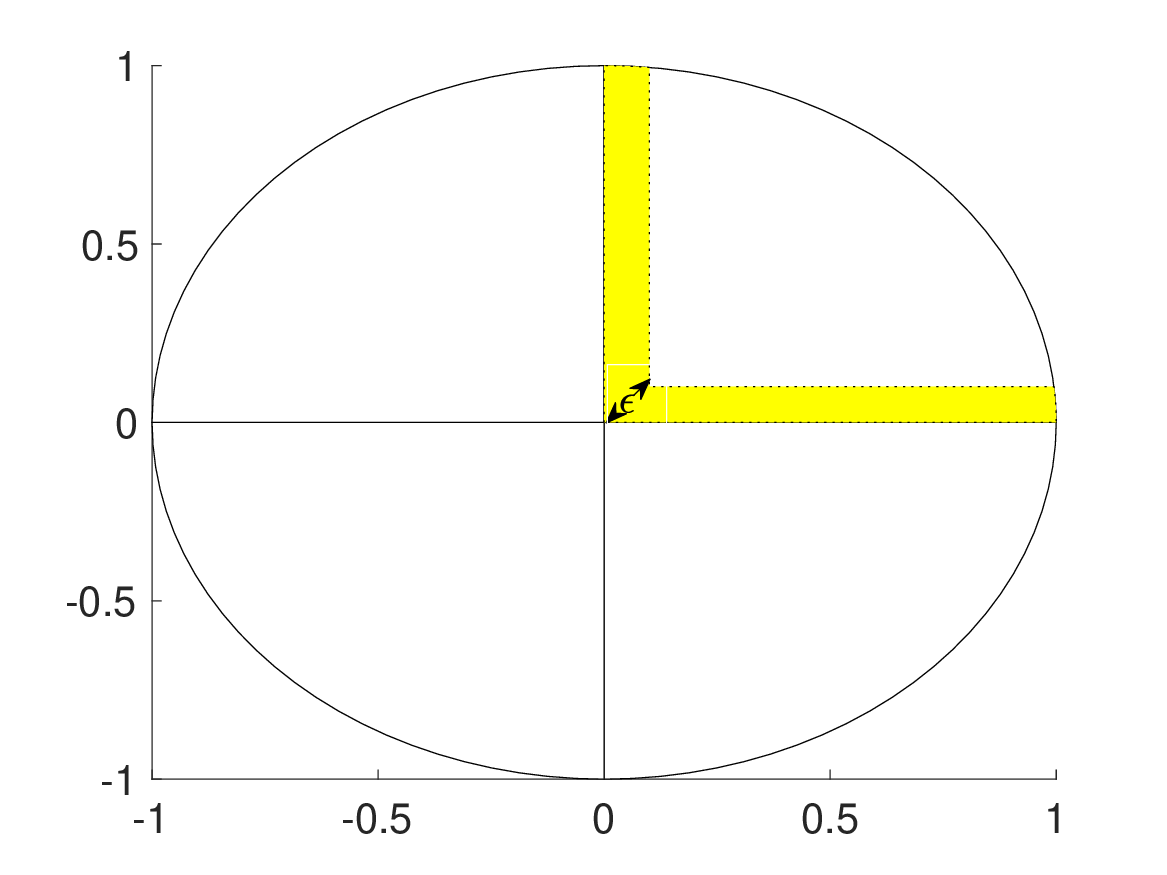}}
    \caption{Examples of Assumption~\ref{assumption:new}. Left: Mirror reflection in $\mathbb{R}^2$. $\CX_0 = [0,1]\times[0,1]$. The group actions are generated by $\sigma(x,y) = (-x,y)$. Right: Circular rotations within the unit disk in $\mathbb{R}^2$. The group actions are generated by the $\pi/2$-rotation with respect to the origin. $\CX_0 = (\rho\cos\theta,\rho\sin\theta),~~\rho\in[0,1],~~\theta\in[0,\pi/2)$. $A_0(\epsilon)$ are filled with yellow color in both subfigures.}
    \label{fig:assumption}
\end{figure}

\subsection{Network architecture for group-invariant GANs}
\label{sec:architecture}
In GANs, both the generator class and the discriminator class are parameterized by neural networks. In this paper, we consider fully-connected feed-forward ReLU networks as the architecture for both generators and discriminators.

A fully-connected feed-forward ReLU neural network, $\phi:\mathbb{R}^{d_0}\to\mathbb{R}^{d_{L+1}}$, can be formulated as 
\begin{equation}
    \phi(x) = W_L\cdot \text{ReLU}(W_{L-1}\cdots\text{ReLU}(W_0x+b_0)\cdots+b_{L-1})+b_L,
\end{equation}
where $W_i\in\mathbb{R}^{d_{i+1}\times d_i}$ and $b_i\in\mathbb{R}^{d_{i+1}}$ are network weights, $i=0,1,\dots,L$, and the activation function is given by $\text{ReLU}(x) := \max\{0,x\}$, $x\in\mathbb{R}$. We call the numbers $W=\max\{d_1,\dots,d_L\}$ and $L$ the width and the depth of the neural network $\phi$, respectively. When the input and output dimensions are clear from contexts, we denote by $\mathcal{NN}(W,L,N,K)$ a ReLU \textit{network architecture} that
consists of fully-connected feed-forward ReLU neural networks with width at most $W$,
depth at most $L$, the number of weights at most $N$ and the magnitude of entries of $W_i$ and $b_i$ no more than $K$. We also write $\mathcal{NN}(W,L)$ or $\mathcal{NN}(W,L,N)$ when either $N$ or $K$ is not specified.

Let $\DNN$ and $\GNN$, respectively,  denote a class of discriminators and a class of generators realized by fully-connected ReLU networks, which are generally not $\Sigma$-invariant. We explain next how to construct $\Sigma$-invariant discriminators and generators from given $\DNN$ and $\GNN$.

Since $\Sigma$ acts linearly on $\CX$ (Assumption~\ref{assumption:1}), there is a matrix representation $W_{\sigma}\in \R^{d\times d}$ for any $\sigma\in\Sigma$ such that $\sigma x = W_{\sigma} x, \forall x\in \R^d$. We can formulate $\Sigma$-invariant discriminators, denoted by $\DNNS$, using the function symmetrization operator $S_\Sigma$ introduced in \eqref{eq:symmetrization_function}.

\begin{definition}[$\Sigma$-invariant discriminators]\label{def:sigmarelu}
    For any class of discriminators with ReLU network architecture $\DNN = \mathcal{NN}(W,L,N,K)$, its corresponding class of $\Sigma$-invariant discriminators $\DNNS$ is defined as
    \begin{equation}
    \DNNS:=\left\{\frac{1}{\abs{\Sigma}}\sum_{i=1}^{\abs{\Sigma}}\phi(\sigma_i^{-1}x):\phi(x)\in\DNN\right\}.
    \end{equation}
    That is, $\DNNS$ consists of neural networks of the form
    \begin{align}\label{eq:invariant-discriminator}
   \phi(x) = \frac{1}{\abs{\Sigma}}\sum_{i=1}^{\abs{\Sigma} }W_L\cdot \text{ReLU}(W_{L-1}\cdots\text{ReLU}(W_0(W_{\sigma_i}x)+b_0)\cdots+b_{L-1})+b_L,
    \end{align}
where $\{W_i, b_i:i=0,\dots,L\}$ is the same set of weights as that of $\DNN$.
\end{definition}
By Definition \ref{def:sigmarelu}, it is straightforward to verify that $S_{\Sigma}\left[\DNNS\right] = \DNNS$.

\begin{remark}
    \cref{def:sigmarelu} requires $\abs{\Sigma}$ passes through the network; however, in numerous practical scenarios of generative modeling, particularly in medical imaging, the foremost limitation is typically the scarcity of sufficient data, rather than computational resources. Consequently, while \cref{def:sigmarelu} increases computational demands, it represents a necessary trade-off to effectively address this significant challenge in sample-constrained applications.
\end{remark}
Similarly, the $\Sigma$-invariant generators can be defined using the distribution symmetrization operator $S^\Sigma$ introduced in \eqref{eq:symmetrization_measure}.
\begin{definition}[$\Sigma$-invariant generators]\label{def:sigmagenerator}
     For any class of generators with ReLU network architecture $\GNN = \mathcal{NN}(W,L,N,K)$, its corresponding class of $\Sigma$-invariant generators $\GNNS$ is defined as 
    \begin{equation*}
    g_\Sigma\in\GNNS \iff \exists g(x)\in\GNN ~~s.t.~~ g_\Sigma(\rho) = S^\Sigma[g_\sharp\rho],~~\forall~ \text{noise source distribution}~\rho.
    \end{equation*}
\end{definition}
In other words, we add a $\Sigma$-symmetrization layer $S^\Sigma$ that draws a random $\sigma$ according to the Haar measure $\mu_\Sigma$ and transforms the output $g(x)$ of $\GNN$ to $\sigma g(x)$, as the output of $\GNNS$.

In this paper, we consider learning a $\Sigma$-invariant target data distribution $\mu\in\CP_\Sigma(\CX)$ using $\Sigma$-invariant GANs whose generators and discriminators are both $\Sigma$-invariant and the noise source probability measure $\rho$ is absolutely continuous on $\mathbb{R}$ (with respect to the Lebesgue measure). More specifically, in contrast to \eqref{eq:minimizer2vanilla}, we denote
\begin{equation}\label{eq:minimizer2}
g_{n,m}^*=\argmin_{g\in\GNN}d_{\DNNS}(S^\Sigma[g_\sharp\widehat{\rho}_m],\widehat{\mu}_n).
\end{equation}
Note that $\{S^\Sigma[g_\sharp\rho]:g\in\GNN\}$ is a set of $\Sigma$-invariant distributions in $\CX$ for any easy-to-sample source distribution $\rho$ on $\mathbb{R}$. 
\begin{remark}
    As mentioned in Remark 7 of \cite{huang2022error}, our results in the following sections also hold if $\rho$ is an absolutely continuous probability measure on $\mathbb{R}^k$, since any such $\rho$ can be projected to an absolutely continuous probability measure on $\mathbb{R}$ by a linear mapping.
\end{remark}
\begin{remark}
In \cite{chen2020distribution, huang2022error}, the generator $g_n^*=\argmin_{g\in\GNN}d_{\DNNS}(S^\Sigma[g_\sharp\rho],\widehat{\mu}_n)$ is also considered. We will not provide the detail for $g_n^*$, since in practice GANs are trained with finite samples from $\rho$. Moreover, the bound in our main result Theorem~\ref{theorem:main} directly applies to the case for $g_n^*$.
\end{remark}

\subsection{Notation}
We write $A\lesssim B$ if $A\leq CB$ for some constant $C>0$, and if both $A\lesssim B$ and $B\gtrsim A$ hold, then we write $A\simeq B$. We also write $A\lesssim_s B$ if the factor $C$ depends on some quantity $s$. Given a subset $\CX\subset \R^d$ of $\R^d$, its \textit{diameter} is denoted as $\text{diam}(\CX) \coloneqq \sup_{x, y\in \CX}\|x-y\|_2$. For two functions $f(n)$ and $g(n)$, we write $f(n) = o\left(g(n)\right)$ when $\lim_{n\to\infty} f(n)/g(n)= 0$.

\section{Learning distributions in Euclidean spaces}\label{sec:euclidean}
In this section, we quantify the error between the generated distribution $S^\Sigma[(g_{n,m}^*)_\sharp\rho]$ and the target data distribution $\mu\in\CP_\Sigma(\CX)$ in terms of the Wasserstein-1 metric; that is, $\Gamma=\text{Lip}_H(\CX)$ and we estimate $d_{\Gamma}(S^\Sigma[(g_{n,m}^*)_\sharp\rho],\mu)$. All of the proofs can be found in the appendix. Since the value of the $\Gamma$-IPM does not change if we replace $\Gamma$ by $\Gamma+c$ for any $c\in\mathbb{R}$, by Lemma 2 in \cite{chen2023sample}, it is equivalent to replace $\Gamma$ by $\Gamma=\{f: f\in\text{Lip}_H(\CX), \norm{f}_\infty\leq M\}$ if $\CX$ is bounded, where $M=H\cdot\text{diam}(\CX_0)$.

The proof of Theorem~\ref{theorem:main} hinges upon the following error decomposition based on the proof of Lemma 9 in \cite{huang2022error} and Lemma~\ref{thm:sp-gan-main-result}. 
Compared to \cite{huang2022error}, our analysis makes use of the group-invariant architectures of the generator and discriminator as well as the fact that the target distribution has group-invariant structure so that each error term will reveal the gain or no-extra-cost of group invariance.
\begin{lemma}[Error decomposition]\label{lemma:errordecompsition}
    Suppose $g_{n,m}^*$ is the optimal GAN generator from \eqref{eq:minimizer2}, then for any function class $\Gamma$ defined on $\CX\subset\mathbb{R}^d$, we have
    \begin{align*}
        &d_{\Gamma}(S^\Sigma[(g_{n,m}^*)_\sharp\rho],\mu)
        \leq \underbrace{2\sup_{f\in\Gamma_\Sigma}\inf_{f_\omega\in\DNNS}\norm{f-f_\omega}_\infty}_{\substack{\text{Invariant discriminator approximation error: $\Delta_1$}}}\\
        &\quad+ \underbrace{\inf_{g\in\GNN}d_{\DNNS}(S^\Sigma[g_\sharp\rho],\widehat{\mu}_n)}_{\substack{\text{Invariant generator approximation error: $\Delta_2$}}} + \underbrace{d_{\Gamma_\Sigma}(\widehat{\mu}_n,\mu)}_\text{Statistical error from target: $\Delta_3$}\\
        &\quad+ \underbrace{2d_{\DNNS\circ \GNN}(\rho,\widehat{\rho}_m)}_\text{Statistical error from source: $\Delta_4$}.
    \end{align*}
\end{lemma}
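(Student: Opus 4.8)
The plan is to follow the standard ``approximation error $+$ generalization error'' decomposition for GANs (as in \cite{chen2020distribution,huang2022error}), carried out entirely inside the invariant framework so that every intermediate IPM is taken over $\Sigma$-invariant discriminators. Three structural facts do all the work. First, both $\mu$ and the learned distribution $\widetilde\mu:=S^\Sigma[(g_{n,m}^*)_\sharp\rho]$ are $\Sigma$-invariant (the former by hypothesis, the latter because $S^\Sigma$ is a projection onto $\CP_\Sigma(\CX)$), so Lemma~\ref{thm:sp-gan-main-result} applies: it lets us replace $d_\Gamma(\widetilde\mu,\mu)$ by $d_{\Gamma_\Sigma}(\widetilde\mu,\mu)$, and, crucially, it lets us replace $d_{\DNNS}(S^\Sigma[\nu],S^\Sigma[\lambda])$ by $d_{\DNNS}(\nu,\lambda)$ using the identity $S_\Sigma[\DNNS]=\DNNS$ from Definition~\ref{def:sigmarelu}. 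Second, the triangle inequality holds for every $\Gamma$-IPM, and $\DNNS$ (hence $\DNNS\circ\GNN$) is closed under negation, since flipping the sign of the last-layer weights of a ReLU network keeps it in the same architecture $\mathcal{NN}(W,L,N,K)$. Third, $g_{n,m}^*$ is by \eqref{eq:minimizer2} a minimizer of $g\mapsto d_{\DNNS}(S^\Sigma[g_\sharp\widehat\rho_m],\widehat\mu_n)$ over $g\in\GNN$.

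\textbf{Step 1 (insert a network discriminator).} By Lemma~\ref{thm:sp-gan-main-result}, $d_\Gamma(\widetilde\mu,\mu)=d_{\Gamma_\Sigma}(\widetilde\mu,\mu)=\sup_{f\in\Gamma_\Sigma}\{\E_{\widetilde\mu}f-\E_\mu f\}$. Fix $f\in\Gamma_\Sigma$ and, given $\epsilon>0$, choose $f_\omega\in\DNNS$ with $\norm{f-f_\omega}_\infty\le\inf_{f'\in\DNNS}\norm{f-f'}_\infty+\epsilon$. Telescoping,
\[
\E_{\widetilde\mu}f-\E_\mu f=(\E_{\widetilde\mu}f-\E_{\widetilde\mu}f_\omega)+(\E_{\widetilde\mu}f_\omega-\E_{\widehat\mu_n}f_\omega)+(\E_{\widehat\mu_n}f_\omega-\E_{\widehat\mu_n}f)+(\E_{\widehat\mu_n}f-\E_\mu f).
\]
The first and third terms are each at most $\norm{f-f_\omega}_\infty$, so after taking the supremum over $f\in\Gamma_\Sigma$ and sending $\epsilon\to0$ their sum contributes $\Delta_1$. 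The last term is at most $\sup_{f\in\Gamma_\Sigma}(\E_{\widehat\mu_n}f-\E_\mu f)=d_{\Gamma_\Sigma}(\widehat\mu_n,\mu)=\Delta_3$. The middle term is at most $d_{\DNNS}(\widetilde\mu,\widehat\mu_n)$ because $f_\omega\in\DNNS$. Hence it only remains to show $d_{\DNNS}(\widetilde\mu,\widehat\mu_n)\le\Delta_2+\Delta_4$.

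\textbf{Step 2 (use optimality of $g_{n,m}^*$).} Write $g^*=g_{n,m}^*$. Insert the intermediate measure $S^\Sigma[g^*_\sharp\widehat\rho_m]$ and apply the triangle inequality:
\[
d_{\DNNS}(\widetilde\mu,\widehat\mu_n)\le d_{\DNNS}\!\big(S^\Sigma[g^*_\sharp\rho],S^\Sigma[g^*_\sharp\widehat\rho_m]\big)+d_{\DNNS}\!\big(S^\Sigma[g^*_\sharp\widehat\rho_m],\widehat\mu_n\big).
\]
For the first summand, $S_\Sigma[\DNNS]=\DNNS$ together with Lemma~\ref{thm:sp-gan-main-result} (applied to the invariant class $\DNNS$) gives $d_{\DNNS}(S^\Sigma[g^*_\sharp\rho],S^\Sigma[g^*_\sharp\widehat\rho_m])=\sup_{f\in\DNNS}\{\E_\rho[f\circ g^*]-\E_{\widehat\rho_m}[f\circ g^*]\}\le d_{\DNNS\circ\GNN}(\rho,\widehat\rho_m)$. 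For the second summand, the defining optimality of $g^*$ gives, for every $g\in\GNN$, $d_{\DNNS}(S^\Sigma[g^*_\sharp\widehat\rho_m],\widehat\mu_n)\le d_{\DNNS}(S^\Sigma[g_\sharp\widehat\rho_m],\widehat\mu_n)$; one more triangle inequality with intermediate measure $S^\Sigma[g_\sharp\rho]$, plus the same identity (now using the negation-closure of $\DNNS$ to handle the reversed direction), bounds this by $d_{\DNNS\circ\GNN}(\rho,\widehat\rho_m)+d_{\DNNS}(S^\Sigma[g_\sharp\rho],\widehat\mu_n)$. Taking the infimum over $g\in\GNN$ turns the last term into $\Delta_2$ and collects the two transport terms into $\Delta_4=2d_{\DNNS\circ\GNN}(\rho,\widehat\rho_m)$. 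Thus $d_{\DNNS}(\widetilde\mu,\widehat\mu_n)\le\Delta_2+\Delta_4$; combining with Step 1 and letting $\epsilon\to0$ proves the lemma.

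\textbf{Expected main obstacle.} There is no genuine analytic difficulty — this is a bookkeeping lemma — so the only thing to be careful about is the repeated use of Lemma~\ref{thm:sp-gan-main-result} in its ``pushforward'' form and verifying the closure hypotheses it requires: $S_\Sigma[\DNNS]=\DNNS$ (Definition~\ref{def:sigmarelu}) for the intermediate steps, and $S_\Sigma[\Gamma]\subseteq\Gamma$ for the initial reduction of the left-hand side, which holds for $\Gamma=\mathrm{Lip}_H(\CX)$ precisely because the $\Sigma$-action on $\CX$ is $1$-Lipschitz (Assumption~\ref{assumption:1}); the reversed-direction estimate in Step 2 additionally uses that $\DNNS$ is symmetric. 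Keeping the direction of each one-sided IPM consistent with the directed definitions of $\Delta_3$ and $\Delta_4$ is the one place where a hasty argument could go wrong.
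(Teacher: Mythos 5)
Your proof is correct and takes essentially the same route as the paper's: the paper cites Lemma 24 of \cite{huang2022error} to pass from $d_{\Gamma_\Sigma}(\widetilde\mu,\widehat\mu_n)$ to $d_{\DNNS}(\widetilde\mu,\widehat\mu_n)+\Delta_1$, which is exactly the telescoping you wrote out explicitly in Step~1, and your Step~2 matches the paper's verbatim. Your extra care about the closure hypotheses ($S_\Sigma[\Gamma]\subset\Gamma$ for the initial reduction, $S_\Sigma[\DNNS]=\DNNS$ and negation-closure of $\DNNS\circ\GNN$ for the reversed IPM direction) is a genuine and useful clarification that the paper only uses implicitly.
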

We provide lemmas that bound each of $\Delta_i$'s.

\begin{lemma}[$\Sigma$-invariant discriminator approximation error]\label{lemma:discriminator}
Let 
\[\Gamma=\{f: f\in\text{Lip}_H(\CX), \norm{f}_\infty\leq H\cdot\text{diam}(\CX_0)\}.\] 
For any $\epsilon\in(0,1)$, there exists a class of $\Sigma$-invariant discriminators \[\DNNS = S_\Sigma\left[\mathcal{NN}(W_1,L_1,N_1,K_1)\right]\]
with $L_1\lesssim \log(1/\epsilon)$, $N_1\lesssim \epsilon^{-d}\log(1/\epsilon)$ and $K_1$ depends on $H, \sup_{x\in\CX}\norm{x}_\infty$ and $\text{diam}(\CX_0)$, such that
\begin{equation*}
    \sup_{f\in\Gamma_\Sigma}\inf_{f_\omega\in\DNNS}\norm{f-f_\omega}_\infty\leq\epsilon.
\end{equation*}
\end{lemma}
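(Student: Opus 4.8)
The plan is to reduce the approximation of a $\Sigma$-invariant Lipschitz function by $\Sigma$-invariant networks to the approximation of a (non-invariant) Lipschitz function by ordinary ReLU networks, and then to symmetrize. First I would fix $f\in\Gamma_\Sigma$, so $f$ is $H$-Lipschitz on $\CX$, bounded by $M=H\cdot\diam(\CX_0)$, and satisfies $f\circ\theta_\sigma=f$ for all $\sigma\in\Sigma$. By the universal approximation result of Yarotsky \cite{yarotsky2017error} (in the quantitative form used for Lipschitz targets), there is an ordinary ReLU network $\phi\in\mathcal{NN}(W_1,L_1,N_1,K_1)$ with $L_1\lesssim\log(1/\epsilon)$, $N_1\lesssim\epsilon^{-d}\log(1/\epsilon)$, and $K_1$ controlled by $H$, $\sup_{x\in\CX}\norm{x}_\infty$ and $\diam(\CX_0)$, such that $\norm{f-\phi}_{\infty,\CX}\le\epsilon$. (One may need to first extend $f$ to a Lipschitz function on a box $[-R,R]^d\supset\CX$ with $R=\sup_{x\in\CX}\norm{x}_\infty$ via the McShane--Whitney extension, preserving the Lipschitz constant and a comparable sup-norm bound; the constant $K_1$ then inherits its dependence on $R$ and $H$ from Yarotsky's construction.)

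Next I would symmetrize $\phi$: set $\phi_\Sigma:=S_\Sigma[\phi]=\frac{1}{\abs{\Sigma}}\sum_{i=1}^{\abs{\Sigma}}\phi\circ\theta_{\sigma_i^{-1}}$, which by Definition~\ref{def:sigmarelu} lies in $\DNNS=S_\Sigma[\DNN]$ (each summand $\phi\circ\theta_{\sigma_i^{-1}}$ is realized by folding the linear map $W_{\sigma_i^{-1}}$ into the first layer, and Definition~\ref{def:sigmarelu} packages the average into a single network of the form \eqref{eq:invariant-discriminator}; the width, depth, and weight count change only by $\Sigma$-dependent constants, and since $\abs{\Sigma}<\infty$ this is absorbed into the constants). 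The key estimate is then the nonexpansiveness of $S_\Sigma$ in sup-norm: since $f=S_\Sigma[f]$ (because $f$ is already $\Sigma$-invariant and $S_\Sigma$ is a projection onto $\Gamma_\Sigma$), we get
\begin{align*}
\norm{f-\phi_\Sigma}_\infty
=\norm{S_\Sigma[f]-S_\Sigma[\phi]}_\infty
=\norm{S_\Sigma[f-\phi]}_\infty
\le\sup_{x\in\CX}\frac{1}{\abs{\Sigma}}\sum_{i=1}^{\abs{\Sigma}}\abs{(f-\phi)(\sigma_i^{-1}x)}
\le\norm{f-\phi}_{\infty,\CX}\le\epsilon,
\end{align*}
using that $\theta_{\sigma_i^{-1}}$ maps $\CX$ into $\CX$. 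Since $f\in\Gamma_\Sigma$ was arbitrary and the architecture $\DNNS$ was chosen independently of $f$ (only $\epsilon$ and $\CX$ enter the scalings), taking the supremum over $f$ and the infimum over $f_\omega\in\DNNS$ yields the claimed bound.

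The main obstacle is bookkeeping rather than conceptual: one must verify that wrapping Yarotsky's network with the averaged pre-composition $W_\Sigma$-type layer of Definition~\ref{def:sigmarelu} genuinely produces a member of a class of the stated size $\mathcal{NN}(W_1,L_1,N_1,K_1)$ with the \emph{same} asymptotic scalings $L_1\lesssim\log(1/\epsilon)$, $N_1\lesssim\epsilon^{-d}\log(1/\epsilon)$, and that the extra weights introduced by the symmetrization (the matrices $W_{\sigma_i}$, of bounded operator norm by Assumption~\ref{assumption:1}(3), and the $\abs{\Sigma}$-fold width expansion to run the parallel copies) only inflate constants depending on $\abs{\Sigma}$ and $d$, not the rates; the final bound is stated with $\lesssim$ hiding exactly these. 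A secondary point to check carefully is the control of $K_1$: Yarotsky's weights are bounded in terms of the target's Lipschitz constant and the domain size, and pre-composition with $W_{\sigma_i^{-1}}$ (entries bounded since the action is linear and $1$-Lipschitz) together with the $1/\abs{\Sigma}$ averaging does not increase the weight magnitudes beyond a constant factor, so $K_1$ retains its dependence on $H$, $\sup_{x\in\CX}\norm{x}_\infty$, and $\diam(\CX_0)$.
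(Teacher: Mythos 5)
Your proposal is correct and follows essentially the same route as the paper: invoke Yarotsky's universal approximation to get an ordinary ReLU network $\phi$ approximating $f$ in sup-norm, then symmetrize via $S_\Sigma$ and use that $f = S_\Sigma[f]$ together with the nonexpansiveness of $S_\Sigma$ on $L^\infty(\CX)$ (which the paper spells out coordinate-by-coordinate through the triangle inequality and the identity $\|(f-\phi)\circ\theta_{\sigma^{-1}}\|_\infty = \|f-\phi\|_\infty$). The only cosmetic difference is that you phrase the key estimate as an operator contraction bound rather than writing out the averaged sum explicitly, and the paper mentions the Kirszbraun extension where you mention McShane--Whitney; both are standard and interchangeable here.
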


In addition to \Cref{lemma:discriminator}, we take a closer look at each individual approximation to illustrate the importance of using $\Sigma$-invariant discriminators in the next proposition, whose proof can be found in \Cref{proof:Delta1}.
\begin{proposition}\label{prop:nonequivariance}
    Suppose $f$ is a $\Sigma$-invariant function and $f_\omega$ is not necessarily $\Sigma$-invariant. Assuming $\Sigma$ has unitary matrix representations, i.e., the Jacobian of each $\theta_\sigma$ is a unitary matrix, then we have
    \begin{align*}
        \norm{f-f_\omega}_\infty &\geq \sqrt{\frac{1}{\text{vol}(\CX)}\left(\norm{f-S_\Sigma[f_\omega]}_{L^2(\CX)}^2 + \norm{f_\omega-S_\Sigma[f_\omega]}_{L^2(\CX)}^2\right)}\\
        &\geq \sqrt{\frac{1}{\text{vol}(\CX)}} \norm{f_\omega-S_\Sigma[f_\omega]}_{L^2(\CX)}.
    \end{align*}
\end{proposition}
\begin{remark}[Deviation from invariance]\label{remark:discriminator_lowerbound}
    The right-hand side of \Cref{prop:nonequivariance} can be viewed as a measurement of the non-invariance of $f_\omega$ and can be large if $f_\omega$ is ``far from'' being $\Sigma$-invariant. This lower bound vanishes whenever $f_\omega$ is $\Sigma$-invariant, which suggests that we use $\Sigma$-invariant discriminators $\DNNS$ instead of $\DNN$ to achieve a smaller discriminator approximation error. See also \Cref{fig:w1distance} for numerical illustrations. This measurement for the deviation from invariance seems general in generative models with symmetries. For example,  \cite{chen2024equivariant} uses a similar notion to measure the non-equivariance of a vector field in the context of score-based generative models for score-matching approximations. 
\end{remark}

\begin{lemma}[$\Sigma$-invariant generator approximation error]\label{lemma:generator}
Suppose $W_2\geq7d+1, L_2\geq 2$. Let $\rho$ be an absolutely
continuous probability measure on $\mathbb{R}$. If $n\leq\frac{W_2-d-1}{2}\floor{\frac{W_2-d-1}{6d}}\floor{\frac{L_2}{2}}+2$, we have
$\inf_{g\in\GNN}d_{\DNNS}(S^\Sigma[g_\sharp\rho],\widehat{\mu}_n)=0$, where $\GNN=\mathcal{NN}(W_2,L_2)$.
\end{lemma}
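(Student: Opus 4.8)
\textit{Proof proposal.}
The plan is to show the infimum is $0$ by exhibiting, for every small $\delta>0$, a plain ReLU generator $g_\delta\in\GNN=\mathcal{NN}(W_2,L_2)$ whose push‑forward $(g_\delta)_\sharp\rho$ is within $\mathcal{W}_1$‑distance $o_\delta(1)$ of $\widehat{\mu}_n$, and then letting $\delta\to 0$. Two quick reductions make this enough. First, every discriminator $f\in\DNNS$ is $\Sigma$‑invariant, so $\E_{\widehat{\mu}_n}[f]=\E_{\widehat{\mu}_n}[S_\Sigma[f]]=\E_{S^\Sigma[\widehat{\mu}_n]}[f]$, whence $d_{\DNNS}(S^\Sigma[g_\sharp\rho],\widehat{\mu}_n)=d_{\DNNS}(S^\Sigma[g_\sharp\rho],S^\Sigma[\widehat{\mu}_n])$ for every $g$. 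Second, $S^\Sigma[\nu]=\int_\Sigma(\theta_\sigma)_\sharp\nu\,\mu_\Sigma(d\sigma)$ is a $\mathcal{W}_1$‑contraction (the $\theta_\sigma$ are $1$‑Lipschitz by Assumption~\ref{assumption:1}, and $\mathcal{W}_1$ is convex), and the discriminators in $\DNNS=S_\Sigma[\mathcal{NN}(W_1,L_1,N_1,K_1)]$ are uniformly Lipschitz (bounded depth, width and weights, and the extra $W_\Sigma$‑layer is $1$‑Lipschitz), so $d_{\DNNS}(S^\Sigma[g_\sharp\rho],S^\Sigma[\widehat{\mu}_n])\lesssim\mathcal{W}_1(S^\Sigma[g_\sharp\rho],S^\Sigma[\widehat{\mu}_n])\lesssim\mathcal{W}_1(g_\sharp\rho,\widehat{\mu}_n)$. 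Thus $\inf_{g\in\GNN}d_{\DNNS}(S^\Sigma[g_\sharp\rho],\widehat{\mu}_n)\lesssim\inf_{g\in\GNN}\mathcal{W}_1(g_\sharp\rho,\widehat{\mu}_n)$, and it suffices to drive the latter to $0$. (Equivalently one may place the plateaus of $g_\delta$ below at the orbit representatives $y_i=T_0(x_i)\in\CX_0$ instead of the $x_i$, using $S^\Sigma[\widehat{\mu}_n]=S^\Sigma[\tfrac1n\sum_i\delta_{y_i}]$; this is the ``incorporation of symmetrization'' mentioned in the sketch, but is not needed for the bound.)

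For the construction itself I would pick quantiles $-\infty=t_0<t_1<\cdots<t_{n-1}<t_n=+\infty$ with $\rho((t_{i-1},t_i])=1/n$ for each $i$, which exist because $\rho$ is absolutely continuous on $\mathbb{R}$. For $\delta>0$ small enough that $t_{i-1}+\delta<t_i-\delta$ on every interior interval, let $g_\delta:\mathbb{R}\to\R^d$ be the continuous piecewise‑linear map equal to the constant $x_i$ on $[t_{i-1}+\delta,\,t_i-\delta]$ (with the one‑sided modification on the two infinite tails, where it is constantly $x_1$, resp.\ $x_n$) and affine on the transition intervals $[t_i-\delta,\,t_i+\delta]$. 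Then $(g_\delta)_\sharp\rho$ places mass $1/n-o_\delta(1)$ at each $x_i$, and since the total transition mass is $\le\sum_{i=1}^{n-1}\rho([t_i-\delta,t_i+\delta])\to 0$ as $\delta\to0$ (no atoms, $\rho$ being absolutely continuous) while each transported point moves by at most $\diam(\{x_1,\dots,x_n\})$, we get $\mathcal{W}_1((g_\delta)_\sharp\rho,\widehat{\mu}_n)\to 0$.

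The remaining point — and the one I expect to be the main obstacle — is to realize $g_\delta$ by a network in $\mathcal{NN}(W_2,L_2)$ \emph{uniformly in $\delta$}, since this is where the precise arithmetic $n\le\frac{W_2-d-1}{2}\floor{\frac{W_2-d-1}{6d}}\floor{\frac{L_2}{2}}+2$ has to be matched exactly. The map $g_\delta$ is a continuous $\R^d$‑valued piecewise‑linear function of the scalar $x$ with $O(n)$ breakpoints, so I would build it by the standard ``accumulate plateaus'' scheme: carry $x$ in one channel, accumulate the $\R^d$ output in $d$ channels, and spend the remaining $W_2-d-1$ channels on computation; across $\floor{L_2/2}$ two‑layer blocks these channels implement $\floor{(W_2-d-1)/(6d)}$ elementary $d$‑dimensional step/bump gadgets (six ReLU units per coordinate, following the piecewise‑linear network representations of \cite{daubechies2022nonlinear} together with the transport‑map construction of \cite{yang2022capacity}), each contributing $\tfrac12(W_2-d-1)$ new breakpoints, for a total plateau capacity $\frac{W_2-d-1}{2}\floor{\frac{W_2-d-1}{6d}}\floor{\frac{L_2}{2}}+2$; the hypotheses $W_2\ge 7d+1$ and $L_2\ge 2$ guarantee that both floors are at least $1$ and leave the scheme its needed slack. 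Since only the breakpoint \emph{locations} depend on $\delta$, the same architecture serves every small $\delta$, so $g_\delta\in\GNN$ for all of them.

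Combining the three steps, $\inf_{g\in\GNN}d_{\DNNS}(S^\Sigma[g_\sharp\rho],\widehat{\mu}_n)\le\lim_{\delta\to0}\big(C\cdot\mathcal{W}_1((g_\delta)_\sharp\rho,\widehat{\mu}_n)\big)=0$, which is the claim. (When the support of $\rho$ is disconnected one can instead route all transitions through the density gaps and obtain $0$ exactly; in general the infimum is attained only in the limit, which is all that is asserted. The $d=1$ statement and any boundary cases are handled by the same construction.)
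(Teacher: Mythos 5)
Your proposal is correct and follows essentially the same route as the paper: quantiles of the absolutely continuous $\rho$, a continuous piecewise-linear transport map with $2(n-1)$ breakpoints placing mass near each $x_i$, realization of this map in $\mathcal{NN}(W_2,L_2)$ via Lemma~3.1 of \cite{yang2022capacity} (whose arithmetic gives exactly $n\le\frac{W_2-d-1}{2}\floor{\frac{W_2-d-1}{6d}}\floor{\frac{L_2}{2}}+2$), and a $\delta\to 0$ limit after bounding the discriminator IPM by $\tilde H\cdot\mathcal{W}_1$ using the uniform Lipschitz bound on $\DNNS$. The one place where you genuinely diverge is how the symmetrization $S^\Sigma$ is discharged: the paper constructs an explicit coupling $\gamma$ between $S^\Sigma[\phi_\sharp\rho]$ and $S^\Sigma[\widehat\mu_n]$ supported on the group-translated segments $\sigma_j\mathcal{L}_i$ (using that a linear, $1$-Lipschitz $\sigma$ maps the segment $[x_{i-1},x_i]$ to $[\sigma x_{i-1},\sigma x_i]$), and bounds the transport cost directly to $\epsilon$. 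You instead observe that $S^\Sigma$ is a $\mathcal{W}_1$-contraction (the $\theta_\sigma$ are $1$-Lipschitz and $\mathcal{W}_1$ is jointly convex), hence $\mathcal{W}_1(S^\Sigma[g_\sharp\rho],S^\Sigma[\widehat\mu_n])\le\mathcal{W}_1(g_\sharp\rho,\widehat\mu_n)$, reducing the whole symmetrized problem to the plain one. This is a cleaner high-level reduction and shows the ``add $S^\Sigma$'' step costs nothing; what the paper's explicit orbit coupling buys is a self-contained computation that makes the role of the group structure (linearity and $1$-Lipschitzness of the action) visible in the argument, rather than packaging it into an abstract contraction property. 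Both are valid; the conclusion and the network size requirements are identical.
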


\begin{remark}
    We also refer to Theorem 4.6 in \cite{birrell2022structure} for the necessity of using $\Sigma$-invariant generators: suppose the discriminator architecture is $\Sigma$-invariant. Since $d_{\DNNS}(g_\sharp\rho,\widehat{\mu}_n) = d_{\DNN}(S^\Sigma[g_\sharp\rho],S^\Sigma[\widehat{\mu}_n])$, the generator may only learn to generate distributions whose symmetrization is the target distribution, thus causing ``mode collapse''. That is, $d_{\DNNS}$ is a probability metric on $\CP_\Sigma(\CX)$ rather than $\CP(\CX)$. We also refer to qualitative numerical examples of this phenomenon in \cite{birrell2022structure}.
\end{remark}

\begin{lemma}[Statistical error from the target]\label{lemma:statistical-target}
    Let $\mathcal{X} = \Sigma\times\mathcal{X}_0$ be a subset of $\mathbb{R}^d$ satisfying the conditions in Assumptions~\ref{assumption:1} and~\ref{assumption:new}. Let $\Gamma=\{f: f\in\text{Lip}_H(\CX), \norm{f}_\infty\leq M\}$. Suppose $\mu$ is a $\Sigma$-invariant probability measure on $\mathcal{X}$. Then we have
    
\noindent(1) If $d\geq 2$, then for any $s>0$ and $n$ sufficiently large, we have
\begin{equation*}
\E[d_{\Gamma_\Sigma}(\widehat{\mu}_n,\mu)]\leq\quad C_{\CX,H,d,s}\left(\frac{1}{\abs{\Sigma}n}\right)^{\frac{1}{d+s}}+ o\left(\left(\frac{1}{n}\right)^{\frac{1}{d+s}}\right),
\end{equation*}
where the coefficient $C_{\CX,H,d,s}$ of the dominating term does not depend on $\Sigma$ or $n$;\\
(2) If $d=1$ and $\CX_0$ is an interval, then for $n$ sufficiently large, we have
\begin{equation*}
\E[d_{\Gamma_\Sigma}(\widehat{\mu}_n,\mu)]\leq  \frac{cH\cdot\text{\normalfont diam}(\mathcal{X}_0)}{\sqrt{n}} + o\left(\frac{1}{n}\right),
\end{equation*}
where $c>0$ is an absolute constant independent of $\mathcal{X}$ and $\mathcal{X}_0$.
\end{lemma}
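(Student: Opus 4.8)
The plan is to reduce the group-invariant statistical error $\E[d_{\Gamma_\Sigma}(\widehat\mu_n,\mu)]$ to a covering-number estimate for the class $\Gamma_\Sigma$ on the fundamental domain $\CX_0$, and then exploit the fact that $\Sigma$-invariance effectively shrinks the ambient space by a factor $|\Sigma|$. Concretely, by Lemma~\ref{thm:sp-gan-main-result} and the definitions of $T_0$ and $\mu_{\CX_0}$, evaluating a $\Sigma$-invariant $f$ against $\widehat\mu_n-\mu$ is the same as evaluating $f|_{\CX_0}$ against $\widehat{(\mu_{\CX_0})}_n-\mu_{\CX_0}$ (the empirical measure of $n$ i.i.d.\ points pushed through $T_0$). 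So $d_{\Gamma_\Sigma}(\widehat\mu_n,\mu)$ is an empirical process indexed by the restriction class $\{f|_{\CX_0}: f\in\Gamma_\Sigma\}$. The standard route (following \cite{chen2023sample} and the Dudley-type chaining bound used for Wasserstein convergence) gives
\begin{align*}
\E[d_{\Gamma_\Sigma}(\widehat\mu_n,\mu)] \lesssim \inf_{0<\delta<\diam(\CX_0)}\left( \delta + \frac{1}{\sqrt{n}}\int_{\delta}^{\diam(\CX_0)} \sqrt{\log \CN(\Gamma_\Sigma|_{\CX_0},\epsilon,\|\cdot\|_\infty)}\, d\epsilon \right),
\end{align*}
or one can go even more directly through the classical bound $\E[\mathcal W_1(\widehat\nu_n,\nu)]\lesssim \int_0^{\infty}\big(\CN(\mathrm{supp}\,\nu,\epsilon)/n\big)^{1/2}\wedge 1\, d\epsilon$-type estimate applied to $\nu=\mu_{\CX_0}$ on $\CX_0$, after noting $\Gamma_\Sigma|_{\CX_0}\subset \mathrm{Lip}_H(\CX_0)$ with the same sup-norm bound.

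The key quantitative input is a bound on the metric entropy of $\mathrm{Lip}_H(\CX_0)$, which by the duality between covering numbers of Lipschitz balls and covering numbers of the underlying space reduces to controlling $\CN(\CX_0,\epsilon)$. Here is where Assumption~\ref{assumption:new} does the work: since $\CX=\Sigma\times\CX_0$ with the pieces essentially $\epsilon$-separated away from the small ``bad'' set $A_0(\epsilon)$ (part~1) and non-contracting on the good part (part~2), we get $\CN(\CX,\epsilon) \gtrsim |\Sigma|\big(\CN(\CX_0,\epsilon) - \CN(A_0(\epsilon),\epsilon)\big)$, and combined with the growth condition $\CN(A_0(\epsilon),\epsilon)\lesssim \CN(\CX_0,\epsilon)\epsilon^r$ this yields $\CN(\CX_0,\epsilon)\lesssim |\Sigma|^{-1}\CN(\CX,\epsilon)(1+o(1)) \lesssim |\Sigma|^{-1}\epsilon^{-d}$ (up to constants depending on $\diam(\CX)$), for $\epsilon$ small. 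Feeding $\log\CN(\mathrm{Lip}_H(\CX_0),\epsilon,\|\cdot\|_\infty)\asymp \CN(\CX_0,\epsilon/H)\lesssim |\Sigma|^{-1}(H/\epsilon)^{d}$ into the chaining integral and optimizing over $\delta$ produces the claimed $\big(\tfrac{1}{|\Sigma|n}\big)^{1/(d+s)}$ rate for $d\ge 2$: the extra $s$ in the exponent and the explicit constant $8+24/(\tfrac{d+s}{2}-1)$ come from truncating/tweaking the chaining integral so it converges (a $\CN(\CX_0,\epsilon)\lesssim \epsilon^{-d}$ bound makes the Dudley integral borderline divergent at the small end in dimension $d\ge2$, hence one pays an $\epsilon^{-(d+s)}$-type slack). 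For $d=1$ with $\CX_0$ an interval, $\CN(\CX_0,\epsilon)\asymp 1/\epsilon$, $\Gamma_\Sigma|_{\CX_0}$ is a bounded Lipschitz class on an interval, and the classical one-dimensional Wasserstein (or DKW/empirical-CDF) estimate gives the parametric $\E\,\mathcal W_1 \lesssim H\diam(\CX_0)/\sqrt n$ directly, with an absolute constant.

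The main obstacle I anticipate is handling the ``bad set'' $A_0(\epsilon)$ carefully enough that it contributes only a lower-order term uniformly as $\epsilon\to 0$, rather than corrupting the leading constant: one must cover $\CX_0$ as the near-disjoint union (good part) $\cup$ (a cover of $A_0(\epsilon)$), track that only the good part enjoys the $|\Sigma|$-fold multiplicity inside $\CX$, and verify via the $\limsup$ condition that the $A_0(\epsilon)$ contribution is absorbed into the $o\big(n^{-1/(d+s)}\big)$ remainder after the chaining integral is evaluated. A secondary technical point is making the reduction to $\CX_0$ fully rigorous — that $d_{\Gamma_\Sigma}(\widehat\mu_n,\mu)$ equals the IPM of $(T_0)_\sharp\widehat\mu_n - \mu_{\CX_0}$ over $\mathrm{Lip}_H(\CX_0)\cap\{\|\cdot\|_\infty\le M\}$, using that every $H$-Lipschitz function on $\CX_0$ extends to a $\Sigma$-invariant $H$-Lipschitz function on $\CX$ on the good part (this is essentially where part~2 of Assumption~\ref{assumption:new}, non-contraction, is needed so the invariant extension does not blow up the Lipschitz constant), together with Lemma~2 of \cite{chen2023sample} to control the sup-norm. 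Once those two points are settled, the rest is the standard Dudley chaining computation plus the optimization over the truncation level $\delta$.
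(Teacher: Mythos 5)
Your proposal follows essentially the same route as the paper's proof: reduce via $T_0$ to an empirical process on $\CX_0$ indexed by $\text{Lip}_H(\CX_0)$, control it by a Dudley-type entropy integral, bound the metric entropy by $\CN(\CX_0,\cdot)$, and use Assumption~\ref{assumption:new} to trade $\CN(\CX_0,\cdot)$ for $\CN(\CX,\cdot)/|\Sigma|$ up to the low-order $A_0(\epsilon)$ contribution, then optimize the truncation level to obtain the $(|\Sigma|n)^{-1/(d+s)}$ rate (with the $s$ coming from absorbing the logarithmic factor in the entropy bound). Your bookkeeping of the $A_0(\epsilon)$ term (rearranging $\CN(\CX,\epsilon)\gtrsim|\Sigma|(\CN(\CX_0,\epsilon)-\CN(A_0(\epsilon),\epsilon))$ before the integral rather than carrying it as a separate $o(\cdot)$ integral, as the paper does) and your CDF/DKW shortcut for $d=1$ (the paper instead plugs a sharper $e^{cH\,\diam(\CX_0)/\epsilon}$ entropy bound into Dudley) are cosmetic variants that land in the same place.

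One point you flag as a ``secondary technical obstacle'' is actually a non-issue, and your diagnosis of it misattributes part~2 of Assumption~\ref{assumption:new}. You do not need (and should not try to prove) that every $H$-Lipschitz function on $\CX_0$ extends to a $\Sigma$-invariant $H$-Lipschitz function on $\CX$ — that extension is generally false and would only be required for a matching lower bound or an exact equality of the two IPMs. For the upper bound you only need the restriction direction: if $\gamma\in\Gamma_\Sigma$ then $\E_\mu[\gamma]=\E_{\mu_{\CX_0}}[\gamma|_{\CX_0}]$ and $\gamma|_{\CX_0}\in\text{Lip}_H(\CX_0)$ with the same sup-norm bound, so $\sup_{\gamma\in\Gamma_\Sigma}|\cdots|\leq\sup_{\gamma\in\text{Lip}_H(\CX_0),\|\gamma\|_\infty\le M}|\cdots|$, and that is enough. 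Part~2 (non-contraction off $A_0(\epsilon)$) is not used to control Lipschitz constants of extensions; it enters, together with part~1 (the $2\epsilon$-separation), in the covering-number ratio bound $\CN(\CX_0\setminus A_0(\epsilon),\epsilon)/\CN(\CX,\epsilon)\le 1/|\Sigma|$ (Lemma~\ref{lemma:ratioofcovering}), which is exactly the step producing the $|\Sigma|^{-1}$ gain. With that correction your sketch matches the paper's argument.
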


\begin{lemma}[Statistical error from the source]\label{lemma:statistical-source}
    Suppose $L_1\lesssim\log n$, $N_1\lesssim n\log n$, $W_2^2L_2\lesssim n$, then we have $d_{\DNNS\circ \GNN}(\rho,\widehat{\rho}_m)\lesssim \sqrt{\frac{n^2\log^2 n\log m}{m}}$. In particular, if $m\gtrsim n^{2+2/d}\log^3n$, we have $\E[d_{\DNNS\circ \GNN}(\rho,\widehat{\rho}_m)]\lesssim n^{-1/d}$.
\end{lemma}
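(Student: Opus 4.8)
\textit{Proof strategy.} Write $\CH:=\DNNS\circ\GNN=\{f\circ g:f\in\DNNS,\ g\in\GNN\}$, a class of real-valued functions on $\R$, each factoring as $\R\xrightarrow{g}\R^d\xrightarrow{f}\R$, so that $d_{\DNNS\circ\GNN}(\rho,\widehat\rho_m)=\sup_{h\in\CH}\{\E_\rho[h]-\E_{\widehat\rho_m}[h]\}$. The plan follows the standard generalization-bound route used in \cite{huang2022error}: (i) pass from this uniform deviation to the expected Rademacher complexity $\mathcal{R}_m(\CH)$ by symmetrization, $\E[d_{\DNNS\circ\GNN}(\rho,\widehat\rho_m)]\le 2\,\E[\mathcal{R}_m(\CH)]$; (ii) bound $\mathcal{R}_m(\CH)$ via Dudley's entropy integral in terms of the $L^1(\widehat\rho_m)$-covering numbers of $\CH$; and (iii) control those covering numbers through the pseudo-dimension of $\CH$ using the piecewise-linear-network bounds of \cite{bartlett2019nearly}.

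The key structural point is that $\CH$ is itself a class of fully-connected ReLU networks of moderate size. By \eqref{eq:invariant-discriminator}, a member of $\DNNS$ is an ordinary ReLU network of the \emph{same} architecture as $\DNN=\mathcal{NN}(W_1,L_1,N_1,K_1)$ with the \emph{fixed} matrix $W_\Sigma$ absorbed into its first layer, so $S_\Sigma$ inflates neither the number of weights nor the depth; and since every $f\in\DNNS$ is $\Sigma$-invariant we have $f\circ g=f\circ(\theta_\sigma\circ g)$, so the symmetrization layer of $\GNNS$ (Definition~\ref{def:sigmagenerator}) is immaterial for $\CH$. Hence $\CH\subset\mathcal{NN}\big(\max\{W_1,W_2\},\,L_1+L_2,\,N_1+W_2^2L_2+O(1)\big)$: depth $\lesssim L_1+L_2\lesssim\log n$ and number of weights $\lesssim N_1+W_2^2L_2\lesssim n\log n$, using $N_1\lesssim n\log n$ and $W_2^2L_2\lesssim n$. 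I would also record a uniform bound $B$ on $\sup_{h\in\CH}\|h\|_\infty$ over the support of $\rho$ and of $g_\sharp\rho$, which is finite and at most polynomial in $n$ given the weight bound $K_1$, the widths and depths, and $\diam(\CX)$; for unbounded $\rho$ a preliminary truncation argument reduces to this case.

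With these ingredients, \cite{bartlett2019nearly} gives $\mathrm{Pdim}(\CH)\lesssim (N_1+W_2^2L_2)(L_1+L_2)\log(N_1+W_2^2L_2)$, with the symmetrization operators leaving this unchanged; the standard pseudo-dimension-to-covering-number estimate then yields $\log\mathcal{N}(\CH,\epsilon,L^1(\widehat\rho_m))\lesssim\mathrm{Pdim}(\CH)\log(Bm/\epsilon)$. Feeding this into Dudley's integral and optimizing the truncation level ($\asymp 1/m$) gives $\mathcal{R}_m(\CH)\lesssim B\sqrt{\mathrm{Pdim}(\CH)\,\log m/m}$; substituting the parameter scalings above together with the bound on $B$ produces $d_{\DNNS\circ\GNN}(\rho,\widehat\rho_m)\lesssim\sqrt{n^2\log^2 n\,\log m/m}$, in expectation (and, by McDiarmid, with high probability). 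The ``in particular'' clause is then immediate: when $m\gtrsim n^{2+2/d}\log^3 n$ we have $\log m\lesssim\log n$, hence $\frac{n^2\log^2 n\,\log m}{m}\lesssim\frac{n^2\log^3 n}{n^{2+2/d}\log^3 n}=n^{-2/d}$, so $\E[d_{\DNNS\circ\GNN}(\rho,\widehat\rho_m)]\lesssim n^{-1/d}$.

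I expect the main obstacle to lie in steps (ii)--(iii): tracking how the uniform range $B$ of the composed $\Sigma$-invariant networks (which may grow polynomially in $n$ and depends on the tails of $\rho$) combines with $\sqrt{\mathrm{Pdim}(\CH)}$ to give \emph{exactly} the factor $n^2\log^2 n$ inside the square root rather than a larger power of $n$ or of $\log n$, and in handling a possibly unbounded source $\rho$ cleanly. A secondary subtlety is verifying rigorously that neither the discriminator symmetrization $S_\Sigma$ nor the generator symmetrization $S^\Sigma$ inflates the pseudo-dimension of $\CH$ --- this is precisely what makes the group symmetry ``free'' for this error term, unlike for $\Delta_3$ where it produces the decisive $|\Sigma|$ gain.
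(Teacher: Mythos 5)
Your overall strategy coincides with the paper's: you pass from $d_{\DNNS\circ\GNN}(\rho,\widehat\rho_m)$ to a pseudo-dimension bound on the composed ReLU class, invoke the VC/pseudo-dimension estimates of \cite{bartlett2019nearly}, and absorb the fixed matrix $W_\Sigma$ into the first layer so that $S_\Sigma$ leaves $\mathrm{Pdim}$ unchanged --- all of which is exactly what the paper does, with the Rademacher/Dudley chain packaged as a single citation to Corollary~35 of \cite{huang2022error}. Two points, however, are worth flagging.

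First, your parameter accounting contains an error that is inconsistent with the bound you state. You write ``depth $\lesssim L_1+L_2\lesssim\log n$,'' but the hypothesis only constrains $L_1\lesssim\log n$ and $W_2^2L_2\lesssim n$, so $L_2$ can be as large as $\Theta(n)$ (e.g. constant width); the correct generic bound is $L_1+L_2\lesssim n$. If one took your $\log n$ bound at face value, $\mathrm{Pdim}(\CH)$ would come out $\lesssim n\log^3 n$ and the deviation bound would be $\sqrt{n\log^3 n\,\log m/m}$, not the stated $\sqrt{n^2\log^2 n\,\log m/m}$. The paper uses $L_1+L_2\lesssim n$, which yields $\mathrm{Pdim}(\CH)\lesssim n^2\log^2 n$ and hence the advertised rate; so the final formula you wrote is right, but it does not follow from the depth bound you claimed.

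Second, the obstacle you correctly identify --- that the uniform range $B$ of the composed networks might grow with $n$ and ruin the rate --- is handled in the paper by a simple device you did not mention: append a clipping layer to the output of $\DNNS$ that forces the range into $[-2M,2M]$. This preserves the discriminator approximation guarantee of Lemma~\ref{lemma:discriminator} (as long as $\epsilon\le M/2$) and makes $B$ an absolute constant, so no truncation argument on $\rho$ or tracking of $K_1$-dependent Lipschitz blowup is needed. Without some such normalization your sketch does not actually close, so this is a genuine missing idea rather than a cosmetic difference. As a minor side remark, your observation that $f\circ g=f\circ(\theta_\sigma\circ g)$ renders the generator symmetrization layer irrelevant is true but moot here: the error decomposition in Lemma~\ref{lemma:errordecompsition} already produces the term $d_{\DNNS\circ\GNN}(\rho,\widehat\rho_m)$ with the unsymmetrized class $\GNN$, so no reduction from $\GNNS$ to $\GNN$ is required.
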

With those lemmas, we now state and prove the main theorem.
\begin{theorem}[Main Theorem]\label{theorem:main}
    Let $\CX=\Sigma\times\CX_0$ be a compact subset of $\mathbb{R}^d (d\geq 2)$ satisfying Assumptions~\ref{assumption:1} and~\ref{assumption:new}, and let $\Gamma=\text{Lip}_H(\CX)$. Suppose the target distribution $\mu$ is $\Sigma$-invariant on $\CX$ and the noise source distribution $\rho$ is absolutely continuous on $\mathbb{R}$. Then there exists $\Sigma$-invariant discriminator architecture $\DNNS=S_\Sigma[\DNN]$, where $\DNN=\mathcal{NN}(W_1,L_1,N_1)$ as defined in \eqref{eq:invariant-discriminator} with $N_1\lesssim n\log n$ and $L_1\lesssim \log n$, and $\Sigma$-invariant generator architecture $\GNNS$, where $\GNN = \mathcal{NN}(W_2,L_2)$, with $W_2^2L_2\lesssim n$, such that if $m\gtrsim n^{2+2/d}\log^3n$, we have
    \begin{equation}\label{eq:thm1formula}
        \E\left[d_{\Gamma}(S^\Sigma[(g_{n,m}^*)_\sharp\rho],\mu)\right] \leq  C_{\CX,H,d,s}\left(\frac{1}{\abs{\Sigma}n}\right)^{\frac{1}{d+s}}+ o\left(\left(\frac{1}{n}\right)^{\frac{1}{d+s}}\right),
    \end{equation}
    for any $s>0$, where the coefficient $C_{\CX,H,d,s}$ of the dominating term does not depend on $\Sigma$ or $n$.
    See \eqref{eq:fullconstant} in the Appendix for details.
\end{theorem}

\begin{proof}
    We can choose $\epsilon\simeq n^{-1/d}$ in Lemma~\ref{lemma:discriminator} with $L_1\lesssim \log n^{1/d}\simeq\log n$ and $N_1\lesssim n\log n$, so that $\sup_{f\in\Gamma_\Sigma}\inf_{f_\omega\in\DNNS}\norm{f-f_\omega}_\infty\lesssim n^{-1/d}$. On the other hand, we can make $\inf_{g\in\GNN}d_{\DNNS}(S^\Sigma[g_\sharp\rho],\widehat{\mu}_n)=0$ with $W_2^2L_2\lesssim n$ by Lemma~\ref{lemma:generator}. Finally, $\E[d_{\Gamma_\Sigma}(\widehat{\mu}_n,\mu)]\lesssim_{s}(\abs{\Sigma}n)^{-1/(d+s)}$ and $\E[d_{\DNNS\circ \GNN}(\rho,\widehat{\rho}_m)]\lesssim n^{-1/d}$ by Lemma~\ref{lemma:statistical-target} and Lemma~\ref{lemma:statistical-source} respectively.
\end{proof}

\begin{remark}[Data efficiency]\label{rmk:vanilla}
    This improved training sample complexity can be interpreted as follows: For vanilla GANs without built-in group symmetry, the group size is $1$, representing the baseline sample complexity. However, for a group-invariant GAN with nontrivial group $\abs{\Sigma}>1$,  its performance when utilizing $n$ i.i.d. training samples is equivalent to a vanilla GAN with $\abs{\Sigma}n$ i.i.d. training samples. 
\end{remark}
\begin{remark}[Optimality of the rate]\label{rmk:optimalrate}
    The rate obtained in Theorem~\ref{theorem:main} is less sharp than $\lesssim n^{-1/d}\vee n^{-1/2}\log n$ provided in \cite{huang2022error} since in \cite{huang2022error} it is assumed that $\CX=\CX_0=[0,1]^d$ but here we do not assume the fundamental domain is connected. If we assume that $\CX_0$ is connected, then the rate can be improved to $\lesssim (\abs{\Sigma}n)^{-1/d}$ for $d\geq3$ and $\lesssim_s (\abs{\Sigma}n)^{-1/2}\log n$ for $d=2$ since \cref{lemma:coverbycover} in the proof can be replaced by a sharper covering number bound in \cite{kolmogorov1959e}.
\end{remark}

\section{Target distributions with low-dimensional structure}\label{sec:manifold}
While Theorem~\ref{theorem:main} explains the improved data efficiency of group-invariant GANs, the error bound increases as the ambient dimension $d$ increases. However, empirical evidence suggests that numerous probability distributions found in nature are concentrated on submanifolds of low \textit{intrinsic dimensions}. In light of this, we present an enhanced analysis of sample complexity for group-invariant GANs, specifically when learning distributions of low intrinsic dimension under group symmetry. Suppose the target distribution $\mu$ is supported on some compact $d^*$-dimensional smooth submanifold of $\mathbb{R}^d$. 
We first prove the following bound for the covering number of a compact smooth submanifold of $\mathbb{R}^d$, which reflects the intrinsic dimension of the submanifold.
\begin{lemma}\label{lemma:manifoldcovering}
    Let $\CM$ be a compact $d^*$-dimensional smooth submanifold of $\mathbb{R}^d$, then for small $\epsilon>0$, we have 
    \[
    \CN(\CM,\epsilon)\leq C_{\CM}(\frac{1}{\epsilon})^{d^*},
    \]
    where the metric in the covering number is the Euclidean metric on $\mathbb{R}^d$ and the constant $C_\CM$ depends on $\CM$.
\end{lemma}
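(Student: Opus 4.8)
The plan is to reduce the statement to the elementary fact that a bounded subset of $\R^{d^*}$ is covered by $O(\epsilon^{-d^*})$ Euclidean $\epsilon$-balls, and to transport such covers onto $\CM$ through a finite smooth atlas. First I would use compactness of $\CM$: around every $p\in\CM$ there is a smooth parametrization, and after shrinking its domain I may assume it has the form $\psi_p\colon B_p\to\CM$ with $B_p\subset\R^{d^*}$ an open Euclidean ball whose closure $\overline{B_p}$ is contained in an open set on which $\psi_p$ extends as a smooth ($\R^d$-valued) map. Extracting a finite subcover of $\{\psi_p(B_p)\}_{p\in\CM}$ yields parametrizations $\psi_1,\dots,\psi_N$ with $\CM=\bigcup_{i=1}^N\psi_i(B_i)$.

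Next I would exploit smoothness to get uniform Lipschitz control. Each $\psi_i$ is $C^1$ on the compact convex set $\overline{B_i}$, hence Lipschitz there with respect to the Euclidean metrics on $\R^{d^*}$ and $\R^d$ by the mean value inequality; denote its Lipschitz constant by $L_i$ and put $L=\max_i L_i$, $D=\max_i\diam(B_i)$, both finite because the atlas is finite. For small $\epsilon>0$, the ball $B_i$ (of diameter at most $D$) admits an $(\epsilon/L)$-cover $S_i\subset B_i$ with $\abs{S_i}\le\left(C_{d^*}\,LD/\epsilon\right)^{d^*}$ for a purely dimensional constant $C_{d^*}$, by the standard volume/packing bound in $\R^{d^*}$. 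Since $\psi_i$ is $L$-Lipschitz, $\psi_i(S_i)$ is an $\epsilon$-cover of $\psi_i(B_i)$ in the ambient metric: given $q=\psi_i(x)$ with $x\in B_i$, choose $s\in S_i$ with $\norm{x-s}_2\le\epsilon/L$, so that $\norm{q-\psi_i(s)}_2\le L\norm{x-s}_2\le\epsilon$.

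Finally I would set $S=\bigcup_{i=1}^N\psi_i(S_i)$. Because the sets $\psi_i(B_i)$ cover $\CM$, $S$ is an $\epsilon$-cover of $\CM$ and $\abs{S}\le\sum_{i=1}^N\abs{S_i}\le N\left(C_{d^*}LD\right)^{d^*}\epsilon^{-d^*}=:C_{\CM}\,\epsilon^{-d^*}$, where $C_{\CM}$ depends only on the fixed atlas, hence only on $\CM$ (and on $d^*$). This is valid for all $\epsilon$ below some threshold $\epsilon_0$ (small enough that the counting bound for the $(\epsilon/L)$-cover of $B_i$ is in force, e.g. $\epsilon\le LD$), which is exactly the ``small $\epsilon$'' regime in the statement.

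I do not expect a genuine obstacle here; the only point requiring care is arranging the atlas so that the chart domains are convex (balls) with closures inside the region of smoothness, which is what converts a pointwise derivative bound into honest Lipschitz estimates with constants that are \emph{uniform} over the finitely many charts — and this uniformity is precisely what compactness of $\CM$ buys us.
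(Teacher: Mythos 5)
Your argument is correct and is essentially the same as the paper's: both fix a finite atlas by compactness, bound the Lipschitz constants of the chart parametrizations (from bounded Jacobians/derivatives on compact convex chart domains), transport Euclidean $(\epsilon/L)$-covers of the chart domains in $\R^{d^*}$ onto $\CM$, and sum over the finitely many charts. The paper phrases the charts as coordinate maps $\varphi_x$ with $\|D\varphi_x^{-1}\|$ bounded rather than your parametrizations $\psi_i$, but this is only a cosmetic difference.
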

\begin{proof}
    For any point $x\in\CM$, there exists a chart $(U_x,\varphi_x)$, where $U_x\subset\mathbb{R}^d$ is an open set containing $x$ and $\varphi_x:U_x\cap\CM\to \varphi_x(U_x\cap\CM)\subset\mathbb{R}^{d^*}$ is a diffeomorphism such that $\varphi_x(U_x\cap\CM)$ is an open set of $\mathbb{R}^{d^*}$. Without loss of generality, we can assume $\varphi_x(x)=0\in\mathbb{R}^{d^*}$. In addition, we can assume that $\varphi_x(U_x\cap\CM) = B_{d^*}(0,R_x)$ for some $R_x>0$, and $\norm{D\varphi_x^{-1}}_2\leq c_x$ on $B_{d^*}(0,R_x)$ for some constant $c_x$, where $D\varphi_x^{-1}$ is the Jacobian matrix of $\varphi_x^{-1}$. (This is possible since $\varphi_x(U_x)$ is open, we can pick some $r>0$ such that $B_{d^*}(0,r)\subset \varphi_x(U_x)$, so that $\overline{B_{d^*}(0,r/2)}$ is compact, and $\norm{D\varphi_x^{-1}}_2$ is bounded on $\overline{B_{d^*}(0,r/2)}$. Then we can set $U_x\cap\CM = \varphi_x^{-1}(B_{d^*}(0,r/4))$.) 
    
    Let $\epsilon_x\leq\frac{R_x}{3}$, then any open ball in $\mathbb{R}^{d^*}$ that intersects $B_{d^*}(0,\epsilon_x)$ must lie within $B_{d^*}(0,R_x)$. For any small $\epsilon\in(0,1)$, we know that $B_{d^*}(0,\epsilon_x)$ can be covered by $\frac{(\epsilon_x+\epsilon/2)^{d^*}}{(\epsilon/2)^{d^*}}$ balls with radius $\epsilon$ in $\mathbb{R}^{d^*}$ (c.f. Proposition 4.2.12 in \cite{vershynin2018high}). Moreover, since $\norm{D\varphi_x^{-1}}_2\leq c_x$, $\varphi_x^{-1}$ maps each of these $\epsilon$-balls to some open subset of $U_x$ whose diameter is no more than $c_x\epsilon$ and hence contained in some open ball of radius $2c_x\epsilon$ in $\mathbb{R}^d$. Therefore, $\varphi_x^{-1}(B_{d^*}(0,\epsilon_x))$ can be covered by $\frac{(\epsilon_x+\epsilon/2)^{d^*}}{(\epsilon/2)^{d^*}}$ balls with radius $2c_x\epsilon$ in $\mathbb{R}^{d}$. By a rescaling, $\varphi_x^{-1}(B_{d^*}(0,\epsilon_x))$ can be covered by $\frac{(\epsilon_x+\epsilon/4c_x)^{d^*}}{(\epsilon/4c_x)^{d^*}}$ balls with radius $\epsilon$ in $\mathbb{R}^{d}$. Also note that $\cup_{x\in\CM}[\varphi_x^{-1}(B_{d^*}(0,\epsilon_x))]$ forms an open cover of $\CM$, so there exist finitely many $x_1,\dots,x_n$ such that $\CM = \cup_{i=1}^n[\varphi_{x_i}^{-1}(B_{d^*}(0,\epsilon_{x_i}))]$, since $\CM$ is compact. Thus $\CM$ can be covered by $\sum_{i=1}^n \frac{(\epsilon_{x_i}+\epsilon/4c_{x_i})^{d^*}}{(\epsilon/4c_{x_i})^{d^*}}= \sum_{i=1}^n \left(\frac{4c_{x_i}\epsilon_{x_i}}{\epsilon}+1\right)^{d^*}\leq \sum_{i=1}^n \left(\frac{4c_{x_i}\epsilon_{x_i}+1}{\epsilon}\right)^{d^*}\leq\frac{C_\CM}{\epsilon^{d^*}}$ balls with radius $\epsilon$ in $\mathbb{R}^{d}$, where $C_{\CM} = n\cdot\max_{i}\{(4c_{x_i}\epsilon_{x_i}+1)^{d^*}\}$. That is, $\CN(\CM,\epsilon)\leq C_{\CM}(\frac{1}{\epsilon})^{d^*}$.
\end{proof}

\begin{theorem}\label{theorem:lowdimensional}
    Let $\CX=\Sigma\times\CX_0\subset\mathbb{R}^d$ and $\CX_0$ be a compact $d^*$-dimensional smooth submanifold of $\mathbb{R}^d (d^*\geq 2)$ satisfying Assumption~\ref{assumption:new} with some $0\leq r\leq d^*$ and $\Gamma=\text{Lip}_H(\CX)$. Suppose the target distribution $\mu$ is $\Sigma$-invariant on $\CX$ and the noise source distribution $\rho$ is absolutely continuous on $\mathbb{R}$. Then there exists $\Sigma$-invariant discriminator architecture $\DNNS=S_\Sigma[\DNN]$, where $\DNN=\mathcal{NN}(W_1,L_1,N_1)$ as defined in \eqref{eq:invariant-discriminator} with $N_1\lesssim n\log n$ and $L_1\lesssim \log n$, and $\Sigma$-invariant generator architecture $\GNNS$, where $\GNN = \mathcal{NN}(W_2,L_2)$, with $W_2^2L_2\lesssim n$, such that if $m\gtrsim n^{2+2/d^*}\log^3n$, we have
    \begin{equation*}
        \E\left[d_{\Gamma}(S^\Sigma[(g_{n,m}^*)_\sharp\rho],\mu)\right] \leq C_{\CX,H,d^*,s}\left(\frac{1}{\abs{\Sigma}n}\right)^{\frac{1}{d^*+s}}+ o\left(\left(\frac{1}{n}\right)^{\frac{1}{d^*+s}}\right),
    \end{equation*}
    for any $s>0$, where the coefficient $C_{\CX,H,d^*,s}$ of the dominating term does not depend on $\Sigma$ or $n$.
\end{theorem}

\begin{proof}
    Following the notation in the proof of Lemma~\ref{lemma:statistical-target}, we have by Lemma~\ref{lemma:coverbycover}, \[\log\mathcal{N}(\mathcal{F}_0,\epsilon,\,\norm{\cdot}_{\infty})\leq\mathcal{N}(\mathcal{X}_0,\frac{c_2\epsilon}{H})\log(\frac{c_1M}{\epsilon}),\]
    where $\mathcal{F}_0 = \{\gamma\in\text{Lip}_H(\mathcal{X}_0):\norm{\gamma}_\infty\leq M\}$. By Assumption~\ref{assumption:new} and Lemma~\ref{lemma:manifoldcovering}, we have
    \begin{align*}
    \mathcal{N}(\mathcal{X}_0,\frac{c_2\epsilon}{H})\log(\frac{c_1M}{\epsilon})
    &\leq \CN(\CX_0\backslash A_0(\frac{c_2\epsilon}{H}),\frac{c_2\epsilon}{H})\log(\frac{c_1M}{\epsilon}) + \CN(A_0(\frac{c_2\epsilon}{H}),\frac{c_2\epsilon}{H})\log(\frac{c_1M}{\epsilon})\\
    &\leq \frac{\mathcal{N}(\mathcal{X},\frac{c_2\epsilon}{H})}{\abs{\Sigma}}\log(\frac{c_1M}{\epsilon}) + \bar{c}\epsilon^r\CN(\CX,\frac{c_2\epsilon}{H})\log(\frac{c_1M}{\epsilon})\\
    &\leq \frac{C_{\CX}H^{d^*}}{\abs{\Sigma}c_2^{d^*}\epsilon^{d^*}}\log(\frac{c_1M}{\epsilon}) + \frac{\bar{c}C_{\CX}H^{d^*}}{c_2^{d^*}\epsilon^{d^*-r}}\log(\frac{c_1M}{\epsilon})
\end{align*}
when $\frac{c_2\epsilon^*}{H}\leq\epsilon_\Sigma$. Therefore we have the statistical error from the target $\Delta_3$ as
\[
\E[d_{\Gamma_\Sigma}(\widehat{\mu}_n,\mu)]\leq\quad C_{\CX,H,d^*,s}\left(\frac{1}{\abs{\Sigma}n}\right)^{\frac{1}{d^*+s}}+ o\left(\left(\frac{1}{n}\right)^{\frac{1}{d^*+s}}\right).
\]
The rest of the proof then follows that of Theorem~\ref{theorem:main}.
\end{proof}

\begin{remark}
    The statement of the theorem for 1-dimensional submanifolds can be found in the appendix.
\end{remark}

\begin{remark}
    The condition in \cref{theorem:lowdimensional} can be relaxed to be $C^1$-submanifolds, as the proof of \cref{lemma:manifoldcovering} only requires $C^1$ condition.
\end{remark}
\begin{remark}
    If the target distribution does not lie on a manifold, such as on fractals, we can introduce the Minkowski dimension, whose definition itself involves the covering number, so that we can derive the bound similarly as in Theorem~\ref{theorem:lowdimensional}.
\end{remark}

\section{Numerical experiments}\label{sec:numerical}
We provide a synthetic example to illustrate efficient learning by incorporating group symmetry into the generator and discriminator networks. Specifically, the target is a mixture of four 2-dimensional Gaussian distributions centered at $[\pm10,\pm10]$ with standard deviations of 5, resulting in a target distribution with $C_4$ symmetry.

We train three different GANs based on the vanilla GAN that has 3 hidden fully-connected layers with 64, 32, and 16 nodes in sequence with ReLU activations for both the generator and the discriminator. The discriminators and generators of $C_2$ and $C_4$ GANs are defined as in \Cref{def:sigmarelu} and \Cref{def:sigmagenerator} with $C_2$ and $C_4$ groups, respectively. We applied a gradient penalty as described in \cite{gulrajani2017improved} with a (soft) Lipschitz constant of 1 and penalty of 10 (see Theorem 31 in \cite{birrell2020f} for a theoretical justification). The noise source is selected to be a 10-dimensional Gaussian.

\begin{figure}[t!]
  \centering
  \includegraphics[width=.9\textwidth]{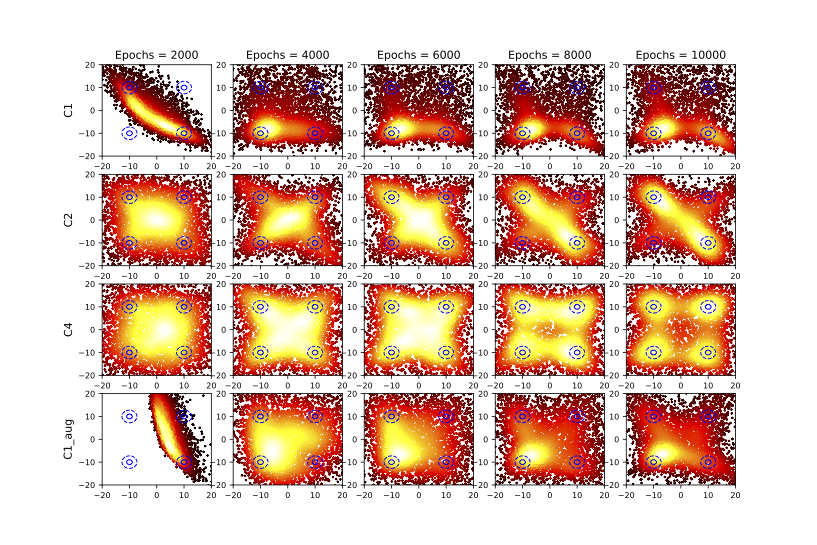}
  \caption{Heat map of 5000 GAN generated samples learned from a 2D Gaussian mixture. Top row: GAN with no symmetry; second row: GAN with $C_2$ 
 (partial) symmetry; third row: GAN with $C_4$ (full) symmetry; bottom row: GAN with no symmetry but with $C_4$ augmentation on the training data. Brighter parts refer to higher density.}
  \label{fig:student-t_2d}
\end{figure}

\cref{fig:student-t_2d} displays 5000 samples generated by the trained generator of each GAN, where we use 100 training samples, incorporating $C_1$ (no invariance, or vanilla), $C_2$, and $C_4$ symmetries. The GAN with $C_4$ symmetry (third row) achieves the best result, and the one with no invariance (top row) performs the worst. Additionally, the vanilla GAN without symmetry, even with full data augmentation, i.e., augmenting all the training data using group actions in $C_4$, thus we have 400 non-i.i.d. training samples (bottom row), performed only slightly better than the GAN without symmetry. This highlights the stark contrast between structured distribution learning with group-invariant GANs and using data augmentation.

Moreover, \cref{fig:student-t_12d} presents similar results with the same (hyper)parameters, except that the 2D Gaussian mixture is linearly embedded into a higher-dimensional ambient space $\mathbb{R}^{12}$. 

In \Cref{fig:w1distance}, we quantitatively evaluate the performance of methods by calculating the Wasserstein-1 distance between 10000 samples drawn from the generated distributions by the trained GANs for 10000 epochs and the target distribution of Gaussian mixture, respectively. We apply the linear program in \cite{flamary2021pot,flamary2024pot} for the calculation of the Wasserstein-1 distance, and we set the maximal number of iterations to $10^6$, and no cases reaching this limit were reported. For each training data size, we perform 20 independent runs for each method. We observe that the $C_4$ GAN consistently achieves the best performance in both examples. This is because the $C_4$ GAN not only has a smaller statistical error $\Delta_3$, but also provides a smaller invariant discriminator approximation error $\Delta_1$, which becomes the dominant error in \Cref{lemma:errordecompsition} when the number of training samples is large. (See \Cref{prop:nonequivariance} and \Cref{remark:discriminator_lowerbound} for the reduced $\Delta_1$ error by the invariant architecture.) We keep the same network architecture and hyperparameters in our numerical experiments to ensure consistency of performance across different methods.

\begin{figure}[t!]
  \centering
  \includegraphics[width=.9\textwidth]{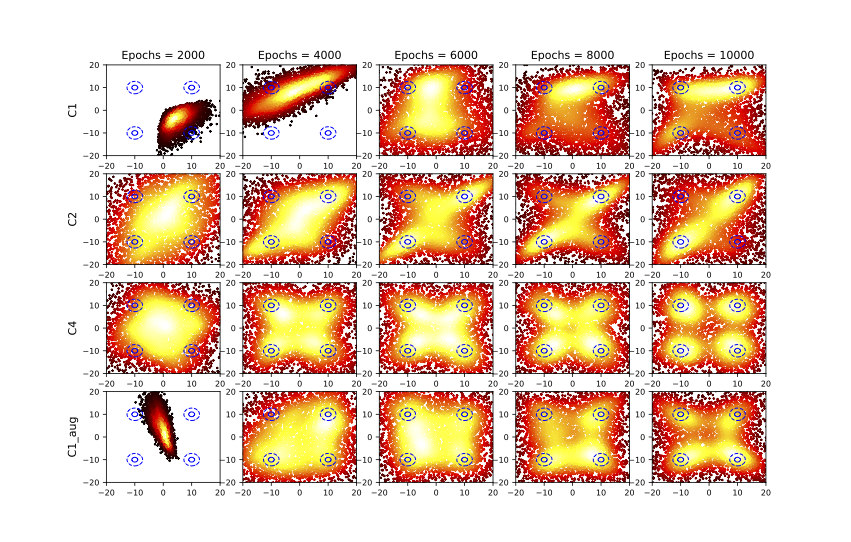}
  \caption{Heat map of 5000 GAN generated samples learned from a 2D Gaussian mixtur embedded in a higher-dimensional ambient space $\R^{12}$. The figure shows the 2D projection of the generated samples onto the intrinsic support plane ($d^*=2$) of the distribution. Top row: GAN with no symmetry; second row: GAN with $C_2$ 
 (partial) symmetry; third row: GAN with $C_4$ (full) symmetry; bottom row: GAN with no symmetry but with $C_4$ augmentation on the training data. Compared to \Cref{fig:student-t_2d}, this result qualitatively suggests that the convergence depends only on the intrinsic dimension, as discussed in \cref{theorem:lowdimensional}.}
  \label{fig:student-t_12d}
\end{figure}

\begin{figure}[h!]
  \centering
  
  \includegraphics[width=.48\textwidth]{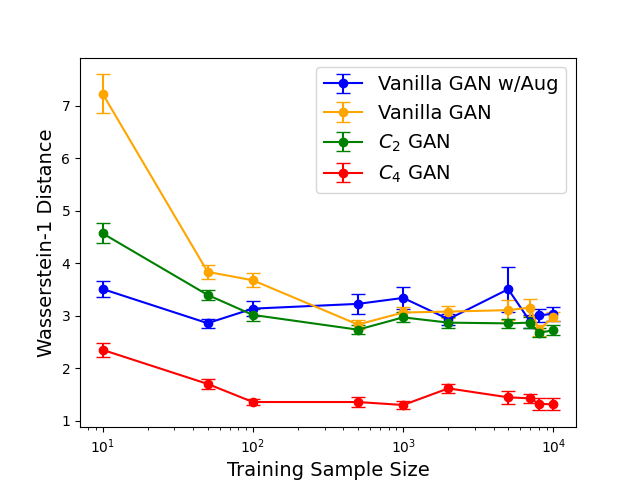}
    \includegraphics[width=.48\textwidth]{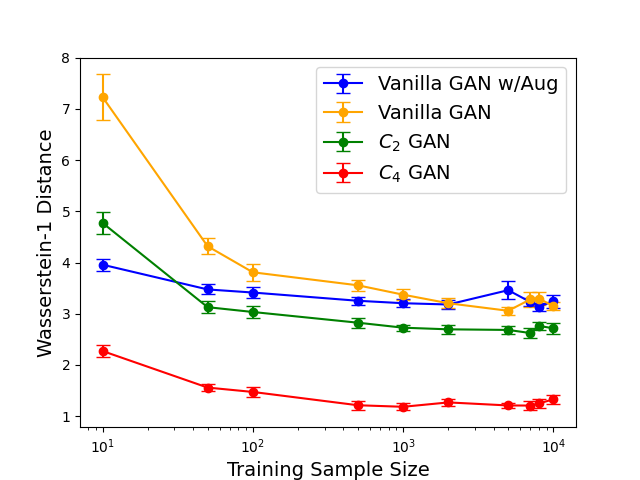}
  \caption{Wasserstein-1 distance between 10000 samples drawn from the generated and the target distribution with different GAN implementations, over 20 runs. Left: 2D example. Right: 12D example. $C_4$ GAN achieves the best performance in both cases.}  \label{fig:w1distance}
\end{figure}

\section{Conclusion and future work}\label{sec:conclusion}
In this work, we quantitatively established the improved generalization guarantees of group-invariant GANs. Our results indicate that when learning group-invariant target distributions, potentially supported on a manifold with small intrinsic dimension, the required number of samples for group-invariant GANs is reduced by a factor of the group size. In essence, group-invariant GANs function effectively as though the input data is augmented in the statistical sense, without an increase in the number of parameters, compared to vanilla GANs. This becomes crucial when we face data scarcity. On the other hand, however, the overall improved bound cannot simply be derived from data augmentation since invariant GANs have a reduced discriminator approximation error, and our numerical results further support this distinction. Our findings present several potential avenues for further exploration. Firstly, it is essential to explore how the dimension of the support of the noise source $\rho$ influences the performance of GANs. A recent experimental study \cite{zhu2024noise} suggests that the dimension of the source should not be significantly smaller than the intrinsic dimension of the target distribution's support. 
Secondly, exploring the case of unbounded support for the target distribution, particularly when heavy-tailed distributions are involved, holds promise for further investigation. To our knowledge, the theoretical study of how the dimension of the support and the distribution characteristics of the noise source $\rho$ influence the performance of GANs is still open.
Thirdly, it will also be worthwhile to study the statistical estimation and expressive power of group-invariant GANs constructed from equivariant convolutional neural networks (CNNs), based on the universality result of CNNs (cf. \cite{zhou2020universality}). Lastly, our work assumes the group is finite, which is crucial for defining our group-invariant generators and discriminators. Extending the analysis to compact Lie groups should also be explored. While the statistical error from the target $\Delta_3$ has been studied in \cite{chen2023sample,tahmasebi2023sample} for continuous groups, it remains an open problem for a novel design of invariant discriminator architecture and its universality for infinite groups.

\section{Proofs of results related to \cref{theorem:main}}\label{proof:euclidean}

\subsection{Proof of \cref{lemma:errordecompsition}}
\begin{proof}
Based on the definition of $d_{\Gamma}$ in Eq.~\eqref{eq:IPM}, we have
\begin{align*}
    d_{\Gamma}(S^\Sigma[(g_{n,m}^*)_\sharp\rho],\mu) &= d_{\Gamma_\Sigma}(S^\Sigma[(g_{n,m}^*)_\sharp\rho],\mu)\\
    & \leq d_{\Gamma_\Sigma}(S^\Sigma[(g_{n,m}^*)_\sharp\rho],\widehat{\mu}_n) + d_{\Gamma_\Sigma}(\widehat{\mu}_n,\mu)\\
    &\leq d_{\DNNS}(S^\Sigma[(g_{n,m}^*)_\sharp\rho],\widehat{\mu}_n) + 2\sup_{f\in\Gamma_\Sigma}\inf_{f_\omega\in\DNNS}\norm{f-f_\omega}_\infty + d_{\Gamma_\Sigma}(\widehat{\mu}_n,\mu)\\
    &\leq d_{\DNNS}(S^\Sigma[(g_{n,m}^*)_\sharp\rho],S^\Sigma[(g_{n,m}^*)_\sharp\widehat{\rho}_m]) + d_{\DNNS}(S^\Sigma[(g_{n,m}^*)_\sharp\widehat{\rho}_m],\widehat{\mu}_n)\\
    & \quad +2\sup_{f\in\Gamma_\Sigma}\inf_{f_\omega\in\DNNS}\norm{f-f_\omega}_\infty + d_{\Gamma_\Sigma}(\widehat{\mu}_n,\mu),
\end{align*}
where the first equality is due to Lemma~\ref{thm:sp-gan-main-result}, the first inequality is given by the triangle inequality, the second inequality is due to Lemma~24 in \cite{huang2022error}, and the last inequality is again by the triangle inequality. 

Therefore we have two different terms to bound: $d_{\DNNS}(S^\Sigma[(g_{n,m}^*)_\sharp\rho],S^\Sigma[(g_{n,m}^*)_\sharp\widehat{\rho}_m])$ and $d_{\DNNS}(S^\Sigma[(g_{n,m}^*)_\sharp\widehat{\rho}_m],\widehat{\mu}_n)$. For the first term, we have
\[
d_{\DNNS}(S^\Sigma[(g_{n,m}^*)_\sharp\rho],S^\Sigma[(g_{n,m}^*)_\sharp\widehat{\rho}_m]) = d_{\DNNS}((g_{n,m}^*)_\sharp\rho,(g_{n,m}^*)_\sharp\widehat{\rho}_m),
\]
due to Lemma~\ref{thm:sp-gan-main-result} and the fact that $S_\Sigma[\DNNS]=\DNNS$. Hence we have
\begin{align*}
    d_{\DNNS}((g_{n,m}^*)_\sharp\rho,(g_{n,m}^*)_\sharp\widehat{\rho}_m)
    &= \sup_{h\in \DNNS}\left\{ E_{(g_{n,m}^*)_\sharp\rho}[h] - E_{(g_{n,m}^*)_\sharp\widehat{\rho}_m}[h]\right\}\\
    & = \sup_{h\in \DNNS}\left\{ E_{\rho}[h\circ g_{n,m}^*] - E_{\widehat{\rho}_m}[h\circ g_{n,m}^*]\right\}\\
    & \leq d_{\DNNS\circ \GNN}(\rho,\widehat{\rho}_m)
\end{align*}

For the second term, by the definition in \eqref{eq:minimizer2}, we have for any $g\in\GNN$,
\begin{align*}
d_{\DNNS}(S^\Sigma[(g_{n,m}^*)_\sharp\widehat{\rho}_m],\widehat{\mu}_n) 
&\leq d_{\DNNS}(S^\Sigma[g_\sharp\widehat{\rho}_m],\widehat{\mu}_n)\\
&\leq d_{\DNNS}(S^\Sigma[g_\sharp\widehat{\rho}_m],S^\Sigma[g_\sharp\rho]) + d_{\DNNS}(S^\Sigma[g_\sharp\rho],\widehat{\mu}_n)\\
&\leq d_{\DNNS\circ \GNN}(\rho,\widehat{\rho}_m) + d_{\DNNS}(S^\Sigma[g_\sharp\rho],\widehat{\mu}_n).
\end{align*}
Taking the infimum over $g\in\GNN$, we have 
\[
d_{\DNNS}(S^\Sigma[(g_{n,m}^*)_\sharp\widehat{\rho}_m],\widehat{\mu}_n) \leq d_{\DNNS\circ \GNN}(\rho,\widehat{\rho}_m) + \inf_{g\in\GNN}d_{\DNNS}(S^\Sigma[g_\sharp\rho],\widehat{\mu}_n).
\]
Therefore,
\begin{align*}
    d_{\Gamma}(S^\Sigma[(g_{n,m}^*)_\sharp\rho],\mu)\leq & \inf_{g\in\GNN}d_{\DNNS}(S^\Sigma[g_\sharp\rho],\widehat{\mu}_n) + 2\sup_{f\in\Gamma_\Sigma}\inf_{f_\omega\in\DNNS}\norm{f-f_\omega}_\infty\\
    & + d_{\Gamma_\Sigma}(\widehat{\mu}_n,\mu) + 2d_{\DNNS\circ \GNN}(\rho,\widehat{\rho}_m).
\end{align*}
This completes the proof of \cref{lemma:errordecompsition}.
\end{proof}

\subsection{Bound for $\Delta_1$}\label{proof:Delta1}
We first cite the following network universal approximation result from \cite{yarotsky2017error}.
\begin{lemma}[Universal Approximation of Lipschitz functions, Theorem 1 in \cite{yarotsky2017error}]\label{thm:universal}
Let $\CX$ be a compact domain in $\mathbb{R}^d$ and $\sup_{x\in\CX}\norm{x}_\infty\leq F$. Given any $\epsilon\in(0,1)$, there exists a ReLU network architecture $\mathcal{NN}(W,L,N)$ such that, for any $f\in\text{Lip}_H(\CX)$ and $\norm{f}_\infty\leq M$, with appropriately chosen network weights, the network provides a function $\hat{f}$ such that $\norm{f-\hat{f}}_\infty\leq\epsilon$. Such a network has no more than $c_1(\log\frac{1}{\epsilon}+1)$ layers, and at most $c_2\epsilon^{-d}(\log\frac{1}{\epsilon}+1)$ neurons and weight parameters, where the constants $c_1$ and $c_2$ depend on $d,H,F$ and $M$.
\end{lemma}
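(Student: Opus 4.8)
\textit{Proof proposal (reconstruction of Yarotsky's construction).} This is the classical ReLU approximation theorem for Lipschitz functions, and I would prove it along the lines of the original argument. First I would reduce to the unit cube: since $\CX\subset[-F,F]^d$, extend $f$ to the whole cube by the McShane formula $\tilde f(x)=\inf_{y\in\CX}\bigl(f(y)+H\norm{x-y}_2\bigr)$, which preserves the Lipschitz constant $H$ and the bound $\norm{\tilde f}_\infty\leq M$, and then rescale affinely so that the domain becomes $[0,1]^d$ (the Lipschitz constant and the sup-norm change only by factors depending on $F$, which get absorbed into $c_1,c_2$). Next I would fix an integer resolution $N\asymp H/\epsilon$ and, on the grid $\{\nu/N:\nu\in\{0,\dots,N\}^d\}$, use the continuous piecewise-multilinear interpolant built from the one-dimensional tent function $\psi(t)=\mathrm{ReLU}(t+1)-2\,\mathrm{ReLU}(t)+\mathrm{ReLU}(t-1)$: with the standard normalization the products $\psi_\nu(x)=\prod_{k=1}^d\psi(Nx_k-\nu_k)$ form a partition of unity, and $\bar f\coloneqq\sum_\nu f(\nu/N)\,\psi_\nu$ satisfies $\norm{f-\bar f}_\infty\lesssim H/N\lesssim\epsilon$ by Lipschitz continuity and the locality of the $\psi_\nu$.

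The one nonlinear ingredient that is not itself a shallow ReLU network is the $d$-fold product inside $\psi_\nu$. Here I would invoke Yarotsky's sawtooth construction: iterating the tent map $L$ times yields a ReLU network of depth and size $O(L)$ that approximates $t\mapsto t^2$ on $[0,1]$ with error $2^{-2L}$, and the polarization identity $ab=\tfrac14\bigl((a+b)^2-(a-b)^2\bigr)$ then gives an approximate-multiplication gadget of depth and size $O(\log(1/\delta))$ with error $\delta$ on $[0,1]^2$. Composing $d-1$ such gadgets in a balanced binary tree computes $\prod_{k=1}^d a_k$ for $a_k\in[0,1]$ to accuracy $O(\delta)$ (the $d$-dependence of the hidden constant is harmless since $d$ is fixed), in depth $O(\log(1/\delta))$ and with $O(\log(1/\delta))$ weights. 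Taking $\delta\asymp\epsilon/M$ controls the error from replacing each exact $\psi_\nu$ by its network surrogate $\widetilde\psi_\nu$: at any point only $O(1)$ of the grid cells overlap, so the total perturbation of $\sum_\nu f(\nu/N)\widetilde\psi_\nu$ is $\lesssim M\delta\lesssim\epsilon$.

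Finally I would assemble the full network $\hat f(x)=\sum_\nu f(\nu/N)\,\widetilde\psi_\nu(x)$: an initial affine layer produces the pre-activations $Nx_k-\nu_k$, a ReLU block realizes the tents $\psi(\cdot)$, the multiplication trees produce the $\widetilde\psi_\nu$, and a final linear layer forms the weighted sum with coefficients $f(\nu/N)$, which are bounded by $M$. The triangle inequality $\norm{f-\hat f}_\infty\leq\norm{f-\bar f}_\infty+\norm{\bar f-\hat f}_\infty\lesssim\epsilon$ yields the claim. The depth is dominated by the multiplication trees, i.e. $O(\log(1/\epsilon)+1)$ layers; the number of grid points is $(N+1)^d\asymp(H/\epsilon)^d$, each contributing $O(\log(1/\epsilon)+1)$ neurons and weights, so the total is $O(\epsilon^{-d}(\log(1/\epsilon)+1))$, with all constants depending only on $d,H,F,M$. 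The main obstacle, and the only genuinely nontrivial step, is the logarithmic-depth approximate multiplication together with the bookkeeping that keeps the three error contributions — the interpolation error $\sim H/N$, the per-cell multiplication error $\sim\delta$, and the amplification of the latter by the output weights of size $\leq M$ — simultaneously below $\epsilon$ under the claimed depth, neuron, and weight budgets; the remaining steps are routine.
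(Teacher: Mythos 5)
The paper does not actually prove this lemma: it cites it directly from Yarotsky (Theorem 1 of \cite{yarotsky2017error}) and adds only a one-sentence remark that the cube-domain result extends to a general compact $\CX$ by Lipschitz extension (Kirszbraun). Your proposal is a full, and essentially faithful, reconstruction of Yarotsky's original argument — McShane/Kirszbraun extension to a cube, piecewise-multilinear interpolation via a partition of unity built from tent functions, the sawtooth-based $\log(1/\delta)$-depth approximate squaring gadget and polarization identity for approximate products, balanced-tree composition to form the $d$-fold products, and the parameter counting giving depth $O(\log(1/\epsilon))$ and size $O(\epsilon^{-d}\log(1/\epsilon))$. This is consistent with the result the paper invokes, and the only substantive divergence is that you unfold the proof the paper merely cites; there is no conflict in approach.

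One minor inaccuracy worth fixing: the McShane extension $\tilde f(x)=\inf_{y\in\CX}\bigl(f(y)+H\norm{x-y}_2\bigr)$ preserves the Lipschitz constant but does not by itself guarantee $\norm{\tilde f}_\infty\leq M$ (it can drift above or below $\pm M$ away from $\CX$). You should compose with the 1-Lipschitz truncation $t\mapsto\min(\max(t,-M),M)$, which recovers both $\norm{\tilde f}_\infty\leq M$ and the $H$-Lipschitz bound. With that correction your reconstruction is sound and matches the statement as used in the paper.
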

\begin{remark}
    Though the original Theorem 1 in \cite{yang2022capacity} assumes $\CX$ is a cube in $\mathbb{R}^d$, it also holds when $\CX$ is some compact domain, since we can extend any $f\in\text{Lip}_H(\CX)$ to some cube contains $\CX$ while preserving the Lipschitz constant, by the Kirszbraun theorem \cite{kirszbraun1934zusammenziehende}.
\end{remark}

\begin{proof}[Proof of \cref{lemma:discriminator}]
By Lemma \ref{thm:universal}, for any $\epsilon\in(0,1)$, there exists a ReLU network architecture $\mathcal{NN}(W_1,L_1,N_1,K_1)$ such that
\begin{equation*}
    \sup_{f\in\Gamma_\Sigma}\inf_{\phi\in\mathcal{NN}(W_1,L_1,N_1,K_1)}\norm{f(x)-\phi(x)}_\infty\leq\epsilon,
\end{equation*}
where $W_1,L_1,N_1$ are given by \cref{thm:universal} and $K_1$ depends on $H$, $\sup_{x\in\CX}\norm{x}_\infty$ and $M$. 
Hence for any $f\in\Gamma_\Sigma$ and $\delta>0$, there exists $\phi\in\mathcal{NN}(W_1,L_1,N_1,K_1)$ such that
\begin{align*}
    \norm{f(x)-\phi(\sigma^{-1}x)}_\infty
    &=\norm{f(\sigma^{-1}x)-\phi(\sigma^{-1}x)}_\infty\\
    &=\norm{f(x)-\phi(x)}_\infty\\
    &\leq\epsilon+\delta,
\end{align*}
for any $\sigma\in\Sigma$. Therefore, 
\begin{align*}
    \norm{f(x)-\frac{1}{\abs{\Sigma}}\sum_{i=1}^{\abs{\Sigma}}\phi(\sigma_i^{-1}x)}_\infty 
    &= \norm{\frac{1}{\abs{\Sigma}}\sum_{i=1}^{\abs{\Sigma}}f(x)-\frac{1}{\abs{\Sigma}}\sum_{i=1}^{\abs{\Sigma}}\phi(\sigma_i^{-1}x)}_\infty\\
    &\leq \frac{1}{\abs{\Sigma}}\sum_{i=1}^{\abs{\Sigma}}\norm{f(x)-\phi(\sigma_i^{-1}x)}_\infty\\
    &\leq\epsilon+\delta.
\end{align*}
Note that $\frac{1}{\abs{\Sigma}}\sum_{i=1}^{\abs{\Sigma}}\phi(\sigma_i^{-1}x)\in\DNNS$ and $\delta$ can be arbitrarily small, hence we have
\begin{align*}
    \inf_{f_\omega\in\DNNS}\norm{f(x)-f_\omega(x)}_\infty \leq \epsilon.
\end{align*}
Since $f$ can be any function in $\Gamma_\Sigma$, we have
\begin{equation*}
    \sup_{f\in\Gamma_\Sigma}\inf_{f_\omega\in\DNNS}\norm{f-f_\omega}_\infty\leq\epsilon.
\end{equation*}
This completes the proof of \ref{lemma:discriminator}.
\end{proof}
Since the weight of the $\Sigma$-symmetrization layer $W_\Sigma$ is determined by $\Sigma$ and the ReLU function is 1-Lipschitz, combined with the uniform bound for the weights by $K_1$, the Lipschitz constant of each layer has a uniform bound. Therefore, we have $\sup_{f_\omega\in\DNNS}\norm{f}_{\text{Lip}}\leq \tilde{H}$ for some $\tilde{H}>0$ that depends on $\Sigma,W_1,L_1$ and $K_1$, where $\norm{f}_{\text{Lip}}$ stands for the Lipschitz constant of the function $f$. This observation is useful to prove the bound for $\Delta_2$ in the following subsection.

\begin{proof}[Proof of \Cref{prop:nonequivariance}]
    We have 
    \begin{align*}
        \int_\CX \abs{f-f_\omega}^2\diff{x} \leq \text{vol}(\CX)\norm{f-f_\omega}_\infty^2.
    \end{align*}
    For the left-hand side, we claim that
    \begin{align*}
        \int_\CX \abs{f-f_\omega}^2\diff{x} = \int_\CX \abs{f-S_\Sigma[f_\omega]}^2\diff{x} + \int_\CX \abs{f_\omega-S_\Sigma[f_\omega]}^2\diff{x}.
    \end{align*}
    To prove the above equality, it is sufficient to show that
    \begin{equation}
        \int_\CX ff_\omega\diff{x} = \int_\CX \left(S_\Sigma[f_\omega]f-(S_\Sigma[f_\omega])^2+S_\Sigma[f_\omega]f_\omega\right)\diff{x}
    \end{equation}
    We show $\int(S_\Sigma[f_\omega])^2\diff{x} = \int  S_\Sigma[f_\omega]f_\omega\diff{x}$ and $\int ff_\omega\diff{x} = \int  S_\Sigma[f_\omega]f\diff{x}$. First, we have
    \begin{align*}
        \int_\CX(S_\Sigma[f_\omega])^2\diff{x} 
        &= \int_\CX \frac{1}{\abs{\Sigma}}\frac{1}{\abs{\Sigma}}\sum_{i=1}^{\abs{\Sigma}}\sum_{j=1}^{\abs{\Sigma}}f_\omega(\sigma_i^{-1}x)f_\omega(\sigma_j^{-1}x)\diff{x}\\
        &= \frac{1}{\abs{\Sigma}}\sum_{i=1}^{\abs{\Sigma}} \int_\CX\frac{1}{\abs{\Sigma}}\sum_{j=1}^{\abs{\Sigma}}f_\omega(x)f_\omega(\sigma_j^{-1}\sigma_i x)\diff{x}\\
        &= \frac{1}{\abs{\Sigma}}\sum_{i=1}^{\abs{\Sigma}} \int_\CX\frac{1}{\abs{\Sigma}}\sum_{k=1}^{\abs{\Sigma}}f_\omega(x)f_\omega(\sigma_k^{-1} x)\diff{x}\\
        &= \frac{1}{\abs{\Sigma}}\sum_{i=1}^{\abs{\Sigma}} \int_\CX f_\omega(x)S_\Sigma[f_\omega(x)]\diff{x}\\
        &= \int_\CX S_\Sigma[f_\omega(x)]f_\omega(x)\diff{x}.
    \end{align*}
    On the other hand, we have
    \begin{align*}
        \int_\CX S_\Sigma[f_\omega(x)]f \diff{x}
        &= \frac{1}{\abs{\Sigma}}\sum_{i=1}^{\abs{\Sigma}} \int_\CX f_\omega(\sigma_i^{-1}x)f(x)\diff{x}\\
        &= \frac{1}{\abs{\Sigma}}\sum_{i=1}^{\abs{\Sigma}} \int_\CX f_\omega(x)f(\sigma_i x)\diff{x}\\
        &= \frac{1}{\abs{\Sigma}}\sum_{i=1}^{\abs{\Sigma}} \int_\CX f_\omega(x)f(x)\diff{x}\\
        &= \int_\CX f_\omega(x)f(x)\diff{x},
    \end{align*}
    where the second to the last equality is because $f$ is $\Sigma$-invariant.
\end{proof}

\subsection{Bound for $\Delta_2$}\label{proof:Delta2}
Before we bound the generator approximation error, we cite a few useful results. We denote by $\mathcal{S}^d(z_0,\dots,z_{N+1})$
the set of all continuous piece-wise linear functions $f:\mathbb{R}\to\mathbb{R}^d$ which have breakpoints at $z_0<z_1<\cdots<z_{N+1}$ and are constant on $(-\infty,z_0)$ and $(z_{N+1},\infty)$. 
The following lemma from \cite{yang2022capacity} extends the one-dimensional approximation result in \cite{daubechies2022nonlinear} to higher dimensions.

\begin{lemma}[Lemma 3.1 in \cite{yang2022capacity}]\label{lemma:piecelinear}
    Suppose $W\geq7d+1, L\geq 2$ and $n\leq (W-d-1)\floor{\frac{W-d-1}{6d}}\floor{\frac{L}{2}}$. Then for any $z_0<z_1<\cdots<z_{n+1}$, we have $\mathcal{S}^d(z_0,\dots,z_{n+1})\subset\mathcal{NN}(W,L)$.
\end{lemma}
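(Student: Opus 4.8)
The plan is to reproduce the construction behind Lemma~3.1 of \cite{yang2022capacity}, which itself lifts the scalar piecewise-linear synthesis of \cite{daubechies2022nonlinear} to $\R^d$-valued targets. Fix $f\in\mathcal{S}^d(z_0,\dots,z_{n+1})$. Being continuous, piecewise linear with kinks only among $z_0<\dots<z_{n+1}$ and constant on both infinite rays, $f$ is determined by its $n+2$ vector values $f(z_j)\in\R^d$, and a single hidden layer of width $n+2$ already realizes it (one neuron $\mathrm{ReLU}(x-z_j)$ per breakpoint, read out by a $d\times(n+2)$ matrix). The whole content of the lemma is therefore a depth-for-width trade-off: a naive ``stack the ReLU atoms layer by layer'' argument already gives $\mathcal{S}^d\subset\mathcal{NN}(W,L)$ for $n$ linear in $W$ (each layer appends $\sim W$ new ReLU atoms to a running sum while $d+1$ neurons carry $x$ and the sum forward), and one must extract an extra factor $\sim W$, turning the budget into the quadratic $(W-d-1)\lfloor(W-d-1)/(6d)\rfloor\lfloor L/2\rfloor$. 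A preliminary remark that produces the exact constants: all $d$ output coordinates of any $f\in\mathcal{S}^d$ share the same breakpoint set $\{z_j\}$, so the ``locate $x$ / form a local coordinate'' part of the network is built once and only the final affine read-out into $\R^d$ is repeated; this is why $W$ is shaved by $d+1$ and the scalar ``$6$'' of \cite{daubechies2022nonlinear} becomes ``$6d$''.

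\textit{Step 1 (telescoping along depth).} Partition the $n$ interior breakpoints $z_1,\dots,z_n$ into $K:=\lfloor L/2\rfloor$ consecutive groups $G_1,\dots,G_K$, each of size at most $q:=(W-d-1)\lfloor(W-d-1)/(6d)\rfloor$; this is possible exactly because $n\le qK$. Build the network as $K$ consecutive blocks of depth $2$ (total depth $2K\le L$; pad with identity blocks, each costing $d+1$ neurons, if $2K<L$). Block $t$ receives the pair $(x,\,s_{t-1})$, carries $x$ and the running value $s_{t-1}\in\R^d$ forward unchanged through its two layers using $d+1$ neurons, and spends the remaining $W-d-1$ neurons computing a correction $h_t(x)$ and updating to $s_t=s_{t-1}+h_t(x)$, where $h_t$ is chosen so that after block $t$ the running function agrees with $f$ on all of $(-\infty,\max G_t]$; concretely $h_t$ is continuous piecewise linear with its new kinks exactly at $G_t$, matched to the correct values and outgoing slope, while $f_0:=s_0$ is the trivial trapezoidal skeleton with kinks only at $z_0,z_{n+1}$ matching the tail values of $f$. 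Then $s_K=f$ exactly.

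\textit{Step 2 (quadratic count inside a depth-$2$ block).} It remains to realize, with two layers of width $W-d-1$ taking the scalar $x$ as input, an arbitrary continuous piecewise-linear $\R^d$-valued $h_t$ carrying at most $q$ kinks. Split $G_t$ further into $p:=\lfloor(W-d-1)/(6d)\rfloor$ consecutive sub-groups, each of size at most $W-d-1$. The first layer forms, in parallel over the $p$ sub-groups, the affine changes of variable mapping the $p$ sub-intervals onto one common interval, together with clipping outside it (each sub-group needs only $O(d)$ neurons for this, the source of the ``$6$'' and of the factor $d$), and also passes $x$ forward; the second layer lays down the at most $W-d-1$ shared breakpoints on the collapsed coordinate once, then reads them out with sub-group-dependent $\R^d$ coefficients and sums, assembling $h_t$. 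Continuity across the $\sim 2p$ sub-group interfaces and the agreed boundary values are precisely what make this gluing consistent, so the representation is exact.

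\textbf{Main obstacle.} The crux is Step 2: squeezing a quadratic-in-width number of kinks out of just two layers while keeping the output \emph{equal} (not merely close) to the prescribed $\R^d$-valued piecewise-linear function. This requires pinning the neuron budget down to the exact figure $6d$ per sub-group on top of the $d+1$ carried coordinates, checking that the change-of-variables collapse followed by a single shared breakpoint bank reproduces all $p$ sub-groups simultaneously with no cross-talk, and verifying continuity at every sub-group and block interface as well as on the constant tails of the final function. One also needs the arithmetic that $W\ge 7d+1$ and $L\ge 2$ force $p\ge1$ and $K\ge1$, so the partitions are non-vacuous, and that the identity padding does not alter the computed function. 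All of this is exact bookkeeping of weights and biases that I would carry out following, or simply cite from, \cite{yang2022capacity,daubechies2022nonlinear}.
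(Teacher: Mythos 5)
This is a result the paper imports verbatim from \cite{yang2022capacity} (Lemma 3.1 there) and never proves; there is no in-paper argument to compare against, so you are proposing a reconstruction of an external proof. Your high-level outline captures the correct structure: telescope $f$ across $\lfloor L/2\rfloor$ depth-two blocks that each add a correction $h_t$, carry $x$ and a running $\R^d$-valued sum forward, and squeeze a width-squared number of breakpoints out of each block. Two clarifications you should make explicit before calling this a proof: carrying $x$ and the $d$ running outputs through ReLU layers with only $d+1$ neurons is only possible after clipping $x$ to $[z_0,z_{n+1}]$ (legitimate, since $f$ is constant on the rays) and shifting each coordinate of the running sum into $[0,\infty)$ by a bound depending on $f$; as written you have not noted this, and without it the per-coordinate identity-through-ReLU fails.

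The real gap is Step~2. You assert that $p=\lfloor (W-d-1)/(6d)\rfloor$ sub-intervals can be affinely collapsed onto a common interval in layer one, and that layer two can ``lay down the at most $W-d-1$ shared breakpoints once, then read them out with sub-group-dependent $\R^d$ coefficients and sum.'' But when $x$ lies in sub-interval $j$, every other collapsed coordinate $\psi_{j'}(x)$, $j'\ne j$, is saturated at an endpoint, and the shared breakpoint atoms $\mathrm{ReLU}(\psi_{j'}(x)-b_\ell)$ are then nonzero constants; the final linear read-out with sub-group-dependent coefficients therefore mixes contributions from inactive sub-groups into the output, and the resulting function need not match $h_t$ unless the saturated contributions are carefully arranged to cancel or be absorbed into the affine part. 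This is precisely the delicate ``bit-extraction''--type bookkeeping in \cite{yang2022capacity}: a single shared breakpoint bank cannot naively serve $p$ sub-intervals, and getting the exact constant $6d$ hinges on resolving this cross-talk. You acknowledge this is ``exact bookkeeping to be carried out or cited,'' which is honest, but as written the proposal does not yet constitute a proof; for the purposes of this paper, citing Yang's Lemma 3.1, as the authors do, is the correct move.
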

We next provide the proof of the generator approximation error bound (\cref{lemma:generator}, restated below). The construction in the first step of the proof follows that of Lemma 3.2 in \cite{yang2022capacity}, and we add the group symmetrization in the second step of the proof.
\begin{proof}[Proof of \cref{lemma:generator}]
    First, by Lemma \ref{thm:sp-gan-main-result}, we have \[d_{\DNNS}(S^\Sigma[g_\sharp\rho],\widehat{\mu}_n) = d_{\DNNS}(S^\Sigma[g_\sharp\rho],S^{\Sigma}[\widehat{\mu}_n])\] due to $S_{\Sigma}[\DNNS] = \DNNS$.

    \textbf{Step 1:} Without loss of generality, we assume that $m:= n-1\geq 1$ and $\widehat{\mu}_n = \frac{1}{n}\sum_{i=0}^m \delta_{x_i}$. Let $\epsilon$ be sufficiently small such that $0<\epsilon<\frac{m}{n}\norm{x_i-x_{i-1}}_2$ for all $i=1,\dots,m$. By the absolute continuity of $\rho$, we can choose $2m$ points on $\mathbb{R}$: $z_{1/2}<z_1<z_{3/2}<\cdots<z_{m-1/2}<z_m$, such that
    \begin{align*}
    &\rho((-\infty,z_{1/2})) = \frac{1}{n}, \\
    &\rho((z_{i-1/2},z_{i})) = \frac{\epsilon}{m\norm{x_i-x_{i-1}}_2}, &&1\le i\le m,\\
    &\rho((z_i,z_{i+1/2})) = \frac{1}{n} -\frac{\epsilon}{m\norm{x_i-x_{i-1}}_2}, &&1\le i\le m-1,\\
    &\rho((z_m,\infty)) = \frac{1}{n} -\frac{\epsilon}{m\norm{x_m-x_{m-1}}_2}.
    \end{align*}
    We define the continuous piece-wise linear function $\phi :\mathbb{R} \to \mathbb{R}^d$ by
    \[
    \phi(z) := 
    \begin{cases}
    x_0  &z \in (-\infty, z_{1/2}), \\
    \frac{z_i-z}{z_i-z_{i-1/2}}x_{i-1} + \frac{z-z_{i-1/2}}{z_i-z_{i-1/2}}x_{i} &z\in [z_{i-1/2},z_i), \\
    x_i  & z\in [z_{i},z_{i+1/2}), \\
    x_m  &z\in [z_m,\infty).
    \end{cases}
    \]
    Since $\phi \in \mathcal{S}^d(z_{1/2},\dots, z_m)$ has $2m= 2n-2\leq (W_2-d-1) \lfloor \frac{W_2-d-1}{6d} \rfloor \lfloor \frac{L_2}{2} \rfloor+2$ breakpoints, by Lemma \ref{lemma:piecelinear}, $\phi \in \mathcal{NN}(W_2,L_2)$. We denote the line segment joining $x_{i-1}$ and $x_i$ by $\mathcal{L}_i :=\{(1-t)x_{i-1} + tx_i\in \mathbb{R}^d: 0<t\leq 1\}$.

    \textbf{Step 2:} Note that $\sigma\mathcal{L}_i$ is the line segment joining $\sigma x_{i-1}$ and $\sigma x_i$ for any $\sigma\in\Sigma$ since the group action is linear. Then $S^\Sigma[\phi_\sharp\rho]$ is supported on $\bigcup_{j=1}^{\abs{\Sigma}}\left(\cup_{i=1}^m \sigma_j\mathcal{L}_i \cup \{\sigma_j x_0\}\right)$ and $S^\Sigma[\phi_\sharp\rho](\{\sigma_j x_0\}) = \frac{1}{n\abs{\Sigma}}$, $S^\Sigma[\phi_\sharp\rho](\{\sigma_j x_i\}) =\frac{1}{n\abs{\Sigma}}-\frac{\epsilon}{m\abs{\Sigma}\norm{x_i-x_{i-1}}_2}$, $S^\Sigma[\phi_\sharp\rho](\sigma_j\mathcal{L}_i) =\frac{1}{n\abs{\Sigma}}$ for $i=1,\dots,m$ and $j=1,\dots,\abs{\Sigma}$. We define the sum of product measures
    \[
    \gamma = \sum_{j=1}^{\abs{\Sigma}}\ S^\Sigma[\phi_\sharp\rho]|_{\{\sigma_j x_0\}} \times \delta_{\sigma_j x_0} + \sum_{j=1}^{\abs{\Sigma}}\sum_{i=1}^{m}   S^\Sigma[\phi_\sharp\rho]|_{\sigma_j \mathcal{L}_i}\times \delta_{\sigma_j x_i}.
    \]
    It is easy to verify $\gamma$ is a coupling of $S^\Sigma[\phi_\sharp\rho]$ and $S^\Sigma[\widehat{\mu}_n]$. Thus we have
    \begin{align*}
    \mathcal{W}_1(S^\Sigma[g_\sharp\rho],S^{\Sigma}[\widehat{\mu}_n]) 
        &\leq \int_{\mathbb{R}^d\times\mathbb{R}^d} \norm{x-y}_2 \diff\gamma(x,y)\\
        & =\sum_{j=1}^{\abs{\Sigma}}\sum_{i=1}^m\int_{\sigma_j\mathcal{L}_i\backslash\{\sigma_j x_i\}}\norm{\sigma_j x_i-y}_2\diff S^\Sigma[\phi_\sharp\rho](y)\\
        &\leq \sum_{j=1}^{\abs{\Sigma}}\sum_{i=1}^m\norm{\sigma_j x_i-\sigma_j x_{i-1}}_2 S^\Sigma[\phi_\sharp\rho](\sigma_j\mathcal{L}_i\backslash\{\sigma_j x_i\})\\
        &\leq \sum_{j=1}^{\abs{\Sigma}}\sum_{i=1}^m\norm{x_i-x_{i-1}}_2 \frac{\epsilon}{m\abs{\Sigma}\norm{x_i-x_{i-1}}_2}\\
        &=\epsilon,
    \end{align*}
    where the last inequality is due to $\Sigma$-actions being 1-Lipschitz. Since functions in $\DNNS$ are $\tilde{H}$-Lipschitz, we have
    \[
    d_{\DNNS}(S^\Sigma[g_\sharp\rho],S^{\Sigma}[\widehat{\mu}_n]) \leq \tilde{H}\cdot\mathcal{W}_1(S^\Sigma[g_\sharp\rho],S^{\Sigma}[\widehat{\mu}_n]) \leq\tilde{H}\epsilon.
    \]
    Since $\epsilon$ can be arbitrarily small, we have $d_{\DNNS}(S^\Sigma[g_\sharp\rho],S^{\Sigma}[\widehat{\mu}_n])=0$.
\end{proof}

\subsection{Bound for $\Delta_3$}\label{proof:Delta3}
We first cite some useful lemmas before establishing the bound for $\Delta_3$. First, we need the following bound of the Rademacher complexity by the Dudley's entropy integral.
\begin{lemma}\label{lemma:Dudley}
    Suppose $\mathcal{F}$ is a family of functions mapping $\mathcal{X}$ to $[-M,M]$ for some $M>0$. Also assume that $0\in\mathcal{F}$ and $\mathcal{F} = -\mathcal{F}$. Let $\xi=\{\xi_1,\dots,\xi_n\}$ be a set of independent random variables that take values on $\{-1,1\}$ with equal probabilities, $i = 1,\dots,n$. $x_1,x_2,\dots,x_n\in\mathcal{X}$. Then we have 
    \begin{equation*}
    \E_{\xi}\sup_{f\in\mathcal{F}}\abs{\frac{1}{n}\sum_{i=1}^n\xi_if(x_i)}\leq\inf_{\alpha>0} 4\alpha+\frac{12}{\sqrt{n}}\int_{\alpha}^{M}\sqrt{\log\mathcal{N(\mathcal{F},\epsilon,\norm{\cdot}_{\infty})}}\diff{\epsilon},
    \end{equation*}
    therefore, if $\mu$ is a distribution on $\CX$ and $\widehat{\mu}_n$ is its empirical distribution, we have
    \begin{equation*}
        \E[d_\mathcal{F}(\widehat{\mu}_n,\mu)]\leq\inf_{\alpha>0} 8\alpha+\frac{24}{\sqrt{n}}\int_{\alpha}^{M}\sqrt{\log\mathcal{N(\mathcal{F},\epsilon,\norm{\cdot}_{\infty})}}\diff{\epsilon}.
    \end{equation*}
\end{lemma}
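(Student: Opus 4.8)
The plan is to recognize \cref{lemma:Dudley} as the classical Dudley entropy-integral bound on the (empirical) Rademacher complexity of $\mathcal{F}$, reproduce the standard chaining proof for the first inequality, and deduce the second by a symmetrization argument. The two workhorses are: (i) Massart's finite-class maximal inequality, i.e.\ for any finite $V\subset\R^n$ with $\max_{v\in V}\norm{v}_2\le r$ one has $\E_\xi\sup_{v\in V}\abs{\tfrac1n\sum_{i=1}^n\xi_iv_i}\le\tfrac{r}{n}\sqrt{2\log(2\abs V)}$, which follows because $\tfrac1n\sum_i\xi_iv_i$ is $\tfrac{\norm v_2}{n}$-sub-Gaussian together with the maximal inequality for finitely many sub-Gaussians (the $2\abs V$ in place of $\abs V$ handles the absolute value); and (ii) a dyadic multiscale (chaining) decomposition of $\mathcal F$. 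The hypothesis $0\in\mathcal F$ with $\norm f_\infty\le M$ gives $\mathcal N(\mathcal F,M,\norm\cdot_\infty)=1$, which anchors the coarsest scale of the chaining, and $\mathcal F=-\mathcal F$ will be used to pass freely between suprema with and without absolute values, in particular in the symmetrization step.

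\textbf{Chaining (first inequality).} Fix arbitrary $x_1,\dots,x_n\in\mathcal X$. Put $\epsilon_j:=2^{-j}M$ for $j\ge0$ and let $\mathcal F_j\subset\mathcal F$ be a minimal $\epsilon_j$-net of $\mathcal F$ in $\norm\cdot_\infty$, noting $\mathcal F_0=\{0\}$ may be taken. For $\alpha\in(0,M)$ let $J=J(\alpha)$ be the largest $j$ with $\epsilon_j\ge\alpha$, so $\epsilon_J<2\alpha$. For $f\in\mathcal F$ pick $\pi_j(f)\in\mathcal F_j$ with $\norm{f-\pi_j(f)}_\infty\le\epsilon_j$ and telescope $f=\pi_0(f)+\sum_{j=1}^J\bigl(\pi_j(f)-\pi_{j-1}(f)\bigr)+\bigl(f-\pi_J(f)\bigr)$. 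Since $\pi_0(f)=0$, applying $\tfrac1n\sum_i\xi_i(\,\cdot\,)(x_i)$, passing to $\sup_{f\in\mathcal F}$ of the absolute value, and using the triangle inequality: the residual contributes at most $\norm{f-\pi_J(f)}_\infty\le\epsilon_J<2\alpha$ deterministically, while the $j$-th increment ranges over a set of at most $\abs{\mathcal F_j}\,\abs{\mathcal F_{j-1}}\le\mathcal N(\mathcal F,\epsilon_j,\norm\cdot_\infty)^2$ functions, each pointwise bounded by $\epsilon_j+\epsilon_{j-1}=3\epsilon_j$, so the corresponding vector in $\R^n$ has $\ell_2$-norm at most $3\sqrt n\,\epsilon_j$, and Massart's lemma bounds its contribution by $\tfrac{3\epsilon_j}{\sqrt n}\sqrt{2\log\!\big(2\,\mathcal N(\mathcal F,\epsilon_j,\norm\cdot_\infty)^2\big)}$. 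Summing over $j=1,\dots,J$, using $\epsilon_j=2(\epsilon_j-\epsilon_{j+1})$ and the monotonicity of $\epsilon\mapsto\log\mathcal N(\mathcal F,\epsilon,\norm\cdot_\infty)$ to dominate the dyadic sum by $\int_\alpha^M\sqrt{\log\mathcal N(\mathcal F,\epsilon,\norm\cdot_\infty)}\,\diff\epsilon$, and accounting for the numerical constants, yields $\E_\xi\sup_{f\in\mathcal F}\abs{\tfrac1n\sum_i\xi_if(x_i)}\le 4\alpha+\tfrac{12}{\sqrt n}\int_\alpha^M\sqrt{\log\mathcal N(\mathcal F,\epsilon,\norm\cdot_\infty)}\,\diff\epsilon$. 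As this holds for every $x_1,\dots,x_n$ and every $\alpha\in(0,M)$ (the case $\alpha\ge M$ being trivial since the left side is $\le M$), taking the infimum over $\alpha>0$ gives the first claim.

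\textbf{Symmetrization (second inequality) and the main obstacle.} With $x_1,\dots,x_n$ i.i.d.\ from $\mu$ and an independent ghost sample $x_1',\dots,x_n'$, the standard symmetrization inequality gives $\E[d_{\mathcal F}(\widehat\mu_n,\mu)]=\E\sup_{f\in\mathcal F}\bigl(\tfrac1n\sum_i f(x_i)-\E_\mu f\bigr)\le\E\sup_{f\in\mathcal F}\tfrac1n\sum_i\bigl(f(x_i)-f(x_i')\bigr)=\E\,\E_\xi\sup_{f\in\mathcal F}\tfrac1n\sum_i\xi_i\bigl(f(x_i)-f(x_i')\bigr)\le2\,\E\,\E_\xi\sup_{f\in\mathcal F}\abs{\tfrac1n\sum_i\xi_if(x_i)}$, where the middle equality uses the symmetry of $f(x_i)-f(x_i')$ and $\mathcal F=-\mathcal F$ lets us insert the absolute value. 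Plugging in the first bound (valid for every realization of $(x_i)$) inside the expectation doubles both constants, so $\E[d_{\mathcal F}(\widehat\mu_n,\mu)]\le 8\alpha+\tfrac{24}{\sqrt n}\int_\alpha^M\sqrt{\log\mathcal N(\mathcal F,\epsilon,\norm\cdot_\infty)}\,\diff\epsilon$ for all $\alpha>0$, and the infimum over $\alpha$ finishes the proof. The only non-mechanical part is the chaining bookkeeping needed to land on the exact constants $4$ and $12$: carefully tracking the factor $2$ from $\epsilon_j=2(\epsilon_j-\epsilon_{j+1})$ in the sum-to-integral comparison, absorbing the $\log 2$ inside $\sqrt{2\log(2N^2)}$ (which needs a separate crude treatment of scales where $\mathcal N(\mathcal F,\epsilon_j,\norm\cdot_\infty)=1$, there bounding the single fixed increment directly via Cauchy--Schwarz), and handling the residual/boundary term whose scale $\epsilon_J$ may be as large as $2\alpha$ rather than $\alpha$. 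Everything else is routine.
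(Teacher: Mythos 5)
The paper itself does not supply a proof of this lemma; it simply cites the standard dyadic chaining argument (Lemma A.5 of Bartlett--Foster--Telgarsky and Lemma 3 of Chen et al.). Your plan --- Massart's finite maximal inequality, dyadic chaining anchored at $\mathcal{F}_0=\{0\}$, followed by symmetrization --- is exactly that argument, so you are on the same route as the paper.

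There is, however, one place where your constant bookkeeping as written cannot produce the stated $12$, and it is worth naming precisely. You apply Massart in the form $\E_\xi\sup_{v\in V}\bigl|\tfrac1n\sum_i\xi_i v_i\bigr|\le\tfrac{r}{n}\sqrt{2\log(2|V|)}$, so the level-$j$ increment is bounded by $\tfrac{3\epsilon_j}{\sqrt n}\sqrt{2\log(2N_j^2)}$ with $N_j:=\mathcal N(\mathcal F,\epsilon_j,\norm\cdot_\infty)$. Even after assuming $N_j\ge2$ and absorbing the $\log 2$, you only get $\sqrt{2\log(2N_j^2)}\le\sqrt{6\log N_j}$, which after the $\epsilon_j=2(\epsilon_j-\epsilon_{j+1})$ comparison yields a chaining constant of $6\sqrt6\approx14.7$; optimizing the geometric ratio in place of $2$ cannot push this below roughly $14.3$. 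The stated $12$ requires that the $\log 2$ never enter. The fix is to use the hypothesis $\mathcal F=-\mathcal F$ \emph{before} chaining, not only in the symmetrization step: since $\mathcal F=-\mathcal F$, $\sup_{f\in\mathcal F}\bigl|\tfrac1n\sum_i\xi_if(x_i)\bigr|=\sup_{f\in\mathcal F}\tfrac1n\sum_i\xi_if(x_i)$, and chaining the one-sided supremum lets you invoke Massart in the form $\E_\xi\sup_{v\in V}\tfrac1n\sum_i\xi_i v_i\le\tfrac{r}{n}\sqrt{2\log|V|}$, giving an increment bound $\tfrac{6\epsilon_j}{\sqrt n}\sqrt{\log N_j}$ and hence $\tfrac{12}{\sqrt n}\int\sqrt{\log\mathcal N}$ exactly. (The increment set $\{\pi_j(f)-\pi_{j-1}(f)\}$ need not be symmetric, but that is irrelevant once there is no absolute value to handle.) Your remaining flagged issues --- the residual scale being between $\alpha$ and $2\alpha$, the lower limit of the integral landing at $\epsilon_{J+1}\in[\alpha/2,\alpha)$, and the scales where $N_j=1$ --- are indeed routine: the first two are absorbed by reparametrizing $\alpha$ (the bound is claimed as an infimum over all $\alpha>0$), and the third vanishes once the $\log 2$ is gone.
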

The proof of \cref{lemma:Dudley} is standard using the dyadic path., e.g. see the proof of Lemma A.5. in \cite{bartlett2017spectrally}, which is modified in Lemma 3 in \cite{chen2023sample}.

The following lemma is a direct consequence of Lemma 6 in \cite{gottlieb2017efficient}.
\begin{lemma}\label{lemma:coverbycover}
    Let $\mathcal{F}$ be the family of $H$-Lipschitz functions mapping the metric space $(\mathcal{X},\norm{\cdot}_2)$ to $[-M,M]$ for some $M>0$. Then we have 
    \begin{equation*}
        \mathcal{N}(\mathcal{F},\epsilon,\,\norm{\cdot}_{\infty})\leq(\frac{c_1M}{\epsilon})^{\mathcal{N}(\mathcal{X},\frac{c_2\epsilon}{H})},
    \end{equation*}
    where $c_1\geq1$ and $c_2\leq1$ are some absolute constants independent of $\mathcal{X}$, $M$, and $\epsilon$.
\end{lemma}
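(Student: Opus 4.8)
\textit{Proof proposal.} The plan is the standard ``net-and-quantize'' argument that underlies Lemma~6 of \cite{gottlieb2017efficient}; I reproduce it so as to pin down the absolute constants $c_1,c_2$. Fix $\epsilon>0$ (the case $\epsilon\geq M$ being trivial, since then the zero function alone forms an $\epsilon$-cover of $\mathcal{F}$, so assume $\epsilon\leq M$). Put $\delta\coloneqq \epsilon/(4H)$ and $\eta\coloneqq \epsilon/2$. Let $\{x_1,\dots,x_N\}\subset\CX$ be a minimal $\delta$-cover of $(\CX,\norm{\cdot}_2)$, so that $N=\CN(\CX,\epsilon/(4H))$, and partition the codomain $[-M,M]$ into at most $\lceil 4M/\epsilon\rceil$ consecutive intervals, each of length at most $\eta$. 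To every $f\in\mathcal{F}$ assign the \emph{quantized profile} $q(f)\in\{1,\dots,\lceil 4M/\epsilon\rceil\}^N$ whose $i$-th entry names the interval containing $f(x_i)$. This partitions $\mathcal{F}$ into at most $\lceil 4M/\epsilon\rceil^{\,N}$ nonempty cells; from each nonempty cell I select one member, obtaining a finite set $S\subset\mathcal{F}$ with $\abs{S}\leq \lceil 4M/\epsilon\rceil^{\,N}$.

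Next I would verify that $S$ is an $\epsilon$-cover of $\mathcal{F}$ in $\norm{\cdot}_\infty$. If $f,g\in\mathcal{F}$ share a cell, then $\abs{f(x_i)-g(x_i)}\leq\eta$ for all $i$. For arbitrary $x\in\CX$, choosing $x_i$ with $\norm{x-x_i}_2\leq\delta$ and using that $f,g$ are $H$-Lipschitz gives
\begin{equation*}
\abs{f(x)-g(x)}\leq \abs{f(x)-f(x_i)}+\abs{f(x_i)-g(x_i)}+\abs{g(x_i)-g(x)}\leq H\delta+\eta+H\delta=\tfrac{\epsilon}{4}+\tfrac{\epsilon}{2}+\tfrac{\epsilon}{4}=\epsilon,
\end{equation*}
so $\norm{f-g}_\infty\leq\epsilon$; in particular every $f\in\mathcal{F}$ lies within $\epsilon$ of the representative of its own cell. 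Hence $\CN(\mathcal{F},\epsilon,\norm{\cdot}_\infty)\leq \lceil 4M/\epsilon\rceil^{\,N}$. Since $\epsilon\leq M$ implies $\lceil 4M/\epsilon\rceil\leq 5M/\epsilon$, and $N=\CN(\CX,\epsilon/(4H))$, the claimed inequality follows with $c_1=5$ and $c_2=1/4$.

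There is no genuine obstacle here; the only points needing a little care are bookkeeping. First, the paper's convention (\cref{def:covering_number}) that a cover must be a \emph{subset} of the set being covered is respected because I pick genuine $H$-Lipschitz, $M$-bounded representatives from the nonempty cells rather than using the abstract quantized profiles themselves. Second, one should restrict attention to the relevant range $\epsilon\lesssim M$ (equivalently $c_1M/\epsilon\geq1$), since for $\epsilon$ much larger than $M$ the right-hand side can fall below $1$; but this is exactly the range in which the lemma is applied, as the Dudley integral in \cref{lemma:Dudley} runs over $\epsilon\in(\alpha,M]$. Alternatively, one may simply invoke Lemma~6 of \cite{gottlieb2017efficient} verbatim and absorb its absolute constants into $c_1,c_2$.
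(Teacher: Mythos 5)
Your proposal is correct and follows essentially the same route as the paper, which simply invokes Lemma~6 of Gottlieb et al.\ \cite{gottlieb2017efficient}: your net-and-quantize argument (a $\frac{\epsilon}{4H}$-net of $\CX$, an $\frac{\epsilon}{2}$-quantization of $[-M,M]$, representatives chosen inside $\mathcal{F}$, and the triangle inequality) is exactly the standard proof behind that citation, and it correctly yields admissible constants $c_1=5$, $c_2=1/4$. Your remark about the regime $\epsilon\lesssim M$ is also apt, since the bound is only meaningful (and only used, via the Dudley integral over $(\alpha,M]$) in that range.
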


\begin{lemma}[Theorem 3 in \cite{sokolic2017generalization}]\label{lemma:ratioofcovering}
Assume that $\mathcal{X}=\Sigma\times\mathcal{X}_0$. If for some $\epsilon>0$ we have\\
1) $\norm{\sigma(x)-\sigma'(x')}_2>2\epsilon$, $\forall x,x'\in\mathcal{X}_0, \sigma\neq\sigma'\in\Sigma$; and\\
2) $\norm{\sigma(x)-\sigma(x')}_2\geq\norm{x-x'}_2$, $\forall x,x'\in\mathcal{X}_0, \sigma\in\Sigma$,\\
then we have
\[
\frac{\mathcal{N}(\mathcal{X}_0,\epsilon)}{\mathcal{N}(\mathcal{X},\epsilon)}\leq\frac{1}{\abs{\Sigma}}.
\]
\end{lemma}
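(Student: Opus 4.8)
The plan is to prove the equivalent inequality $\mathcal{N}(\CX,\epsilon)\geq\abs{\Sigma}\,\mathcal{N}(\CX_0,\epsilon)$ by splitting an arbitrary $\epsilon$-cover of $\CX$ into $\abs{\Sigma}$ disjoint pieces, one per group element, and showing that each piece is itself at least as large as a minimal $\epsilon$-cover of $\CX_0$. The starting point is the decomposition $\CX=\bigcup_{\sigma\in\Sigma}\sigma\CX_0$, which is just the fundamental-domain property of \cref{def:fundamental_domain} (each $\sigma$ acts as a self-bijection of $\CX$, so $\sigma\CX_0\subset\CX$); moreover the pieces $\sigma\CX_0$ are pairwise disjoint, because a common point $\sigma x=\sigma'x'$ with $\sigma\neq\sigma'$ and $x,x'\in\CX_0$ would force $\norm{\sigma x-\sigma'x'}_2=0$, contradicting condition~1), which requires $\norm{\sigma x-\sigma'x'}_2>2\epsilon$.

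Next I would fix a minimal $\epsilon$-cover $S\subset\CX$ and set $S_\sigma:=S\cap\sigma\CX_0$ for each $\sigma\in\Sigma$; by the previous paragraph $\{S_\sigma\}_{\sigma\in\Sigma}$ is a partition of $S$. The crux is to check that $S_\sigma$ already $\epsilon$-covers $\sigma\CX_0$ using only points lying \emph{inside} $\sigma\CX_0$. Indeed, if $s\in S$ satisfies $\norm{s-y}_2\leq\epsilon$ for some $y=\sigma x\in\sigma\CX_0$, then $s$ cannot belong to any other piece $\sigma'\CX_0$: writing $s=\sigma'x'$ would give $\norm{\sigma'x'-\sigma x}_2\leq\epsilon$, again contradicting the $2\epsilon$-separation in condition~1); hence $s\in\sigma\CX_0$, i.e.\ $s\in S_\sigma$. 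Therefore each $S_\sigma$ is an $\epsilon$-cover of $\sigma\CX_0$ in the sense of \cref{def:covering_number}, so $\abs{S_\sigma}\geq\mathcal{N}(\sigma\CX_0,\epsilon)$.

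Finally I would use condition~2) to compare $\sigma\CX_0$ with $\CX_0$. Since $\sigma:\CX_0\to\sigma\CX_0$ is a bijection with $\norm{\sigma x-\sigma x'}_2\geq\norm{x-x'}_2$, its inverse $\sigma^{-1}:\sigma\CX_0\to\CX_0$ is $1$-Lipschitz, and so it maps any $\epsilon$-cover of $\sigma\CX_0$ to an $\epsilon$-cover of $\CX_0$ of no larger cardinality; hence $\mathcal{N}(\sigma\CX_0,\epsilon)\geq\mathcal{N}(\CX_0,\epsilon)$. Summing over $\sigma\in\Sigma$ then gives
\[
\mathcal{N}(\CX,\epsilon)=\abs{S}=\sum_{\sigma\in\Sigma}\abs{S_\sigma}\geq\sum_{\sigma\in\Sigma}\mathcal{N}(\sigma\CX_0,\epsilon)\geq\abs{\Sigma}\,\mathcal{N}(\CX_0,\epsilon),
\]
which rearranges to the desired bound. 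The argument is essentially elementary; the only step needing genuine care is the bookkeeping in the second paragraph — verifying that covering points of one translate necessarily lie in that translate — and this is precisely what the $2\epsilon$-separation of condition~1) together with the triangle inequality provide. (All covering numbers here are finite because $\CX$ is bounded, so the cardinality arithmetic is legitimate.)
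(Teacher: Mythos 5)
The paper does not supply its own proof of this lemma: it is cited as Theorem~3 of \cite{sokolic2017generalization} and used as a black box, so there is no internal argument to compare your attempt against. That said, your proof is correct, self-contained, and the natural one. You partition a minimal $\epsilon$-cover $S$ of $\CX$ over the translates $\sigma\CX_0$, which are pairwise disjoint by condition~1); you then observe, again from the $2\epsilon$-separation in condition~1), that each $S_\sigma := S\cap\sigma\CX_0$ must by itself $\epsilon$-cover $\sigma\CX_0$, since any cover point within $\epsilon$ of a point of $\sigma\CX_0$ is forced to lie in $\sigma\CX_0$; and finally, condition~2) makes $\sigma^{-1}\vert_{\sigma\CX_0}$ a $1$-Lipschitz bijection onto $\CX_0$, so that $\abs{S_\sigma}\geq\CN(\sigma\CX_0,\epsilon)\geq\CN(\CX_0,\epsilon)$. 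Summing over $\sigma$ yields $\CN(\CX,\epsilon)\geq\abs{\Sigma}\,\CN(\CX_0,\epsilon)$, which rearranges to the claim. One small imprecision in your closing remark: the step you flag as needing the triangle inequality does not actually invoke it --- the contradiction with condition~1) follows directly from $\norm{s-y}_2\leq\epsilon<2\epsilon$ --- but this has no bearing on the validity of the argument.
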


\begin{lemma}[Lemma 6 in \cite{chen2023sample}]\label{lemma:scalingofcoveringnumber}
    Let $\mathcal{X}$ be a subset of $\mathbb{R}^d$ and $\bar{\epsilon}>0$. Then there exists a constant $C_{d,\bar{\epsilon}}$ that depends on $d$ and $\bar{\epsilon}$ such that for $\epsilon\in(0,1)$ we have
    \begin{align*}
        \mathcal{N}(\mathcal{X},\epsilon)\leq C_{d,\bar{\epsilon}}\cdot\frac{\mathcal{N}(\mathcal{X},\bar{\epsilon})}{\epsilon^d}.
    \end{align*}
\end{lemma}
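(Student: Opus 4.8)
\textit{Proof proposal.} The plan is a standard two-scale volumetric argument: cover $\CX$ coarsely at scale $\bar\epsilon$, refine each piece down to scale $\epsilon$, and control the refinement cost by a packing bound that is uniform in $\epsilon$ thanks to the restriction $\epsilon\in(0,1)$.

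First I would fix an optimal $\bar\epsilon$-cover $\{s_1,\dots,s_N\}\subseteq\CX$ with $N=\CN(\CX,\bar\epsilon)$, and set $\CX_i := \CX\cap \overline{B}(s_i,\bar\epsilon)$, so that $\CX=\bigcup_{i=1}^N\CX_i$ and each $\CX_i$ is contained in a Euclidean ball of radius $\bar\epsilon$. Covering numbers are subadditive over this finite union: the union of admissible $\epsilon$-covers of the $\CX_i$ is an admissible $\epsilon$-cover of $\CX$, since each chosen center lies in some $\CX_i\subseteq\CX$ as required by \cref{def:covering_number}. Hence it suffices to show $\CN(\CX_i,\epsilon)\le C_{d,\bar\epsilon}\,\epsilon^{-d}$ for each $i$.

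For the per-piece bound I would use a maximal packing. Let $\{y_1,\dots,y_M\}\subseteq\CX_i$ be a maximal $\epsilon$-separated subset of $\CX_i$. Maximality forces it to be an $\epsilon$-cover of $\CX_i$ (otherwise an uncovered point could be appended), so $\CN(\CX_i,\epsilon)\le M$, and this cover is admissible because its centers lie in $\CX_i$. The balls $B(y_j,\epsilon/2)$ are pairwise disjoint and, since $\|y_j-s_i\|_2\le\bar\epsilon$ and $\epsilon<1$, all lie inside $B(s_i,\bar\epsilon+\tfrac12)$. Comparing Lebesgue volumes gives $M\,\omega_d(\epsilon/2)^d\le\omega_d(\bar\epsilon+\tfrac12)^d$, i.e. $M\le(2\bar\epsilon+1)^d\,\epsilon^{-d}$, where $\omega_d$ is the volume of the unit ball in $\R^d$. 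Combining with the subadditivity step yields $\CN(\CX,\epsilon)\le N\,(2\bar\epsilon+1)^d\,\epsilon^{-d}$, which is exactly the claim with $C_{d,\bar\epsilon}=(2\bar\epsilon+1)^d$.

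The only subtlety — and the step I would be most careful about — is the requirement in \cref{def:covering_number} that cover centers lie in the set being covered (an ``internal'' covering number), which rules out naively inheriting a cover of an ambient ball. The maximal-$\epsilon$-separated-set construction sidesteps this because its points are drawn from $\CX_i$ by construction, so both the subadditivity over the pieces $\CX_i$ and the packing bound respect the definition. Everything else is the routine volume comparison, and the hypothesis $\epsilon\in(0,1)$ enters only to keep the enlarged radius $\bar\epsilon+\epsilon/2$ below a constant depending on $\bar\epsilon$ alone.
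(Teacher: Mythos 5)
Your proof is correct. The paper does not actually prove this statement---it is quoted verbatim as Lemma~6 of \cite{chen2023sample} with no accompanying argument in the present text---so there is no in-paper proof to compare against. Your argument is the canonical two-scale volumetric one (coarse $\bar\epsilon$-cover, subadditivity over the pieces $\CX_i$, maximal $\epsilon$-separated set to get an \emph{internal} $\epsilon$-cover, volume comparison of disjoint $\epsilon/2$-balls inside $B(s_i,\bar\epsilon+\tfrac12)$), and every step checks out: the pieces are bounded so a maximal separated set exists and is finite, the separated set lies in $\CX_i\subset\CX$ so the cover is admissible under Definition~\ref{def:covering_number}, and the hypothesis $\epsilon<1$ is used exactly where you say, to uniformize the enclosing radius and obtain $C_{d,\bar\epsilon}=(2\bar\epsilon+1)^d$. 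Your remark about the internal-vs-ambient covering-number subtlety is the right thing to flag, and the packing construction handles it cleanly.
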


Following the statistical analysis in \cite{chen2023sample}, we can bound the statistical error $d_{\Gamma_\Sigma}(\widehat{\mu}_n,\mu)$ as follows.

\begin{proof}[Proof of \cref{lemma:statistical-target}] By the definition of the probability metric $d_\Gamma$ \eqref{eq:IPM}, we have
    \begin{align}\label{eq:IPMsup}
    d_{\Gamma_\Sigma}(\widehat{\mu}_n,\mu)&=\abs{d_{\Gamma_\Sigma}(\widehat{\mu}_n,\mu)}\nonumber\\
    &= \abs{ \sup_{\gamma\in\Gamma_{\Sigma}}\left\{\frac{1}{n}\sum_{i=1}^n\gamma(x_i)-\E_{\mu}[\gamma]\right\}}\nonumber\\
    &\leq\sup_{\gamma\in\Gamma_{\Sigma}}\abs{\E_{\mu}[\gamma]-\frac{1}{n}\sum_{i=1}^n\gamma(x_i)}\nonumber\\
    &=\sup_{\gamma\in\Gamma_{\Sigma}}\abs{\E_\mu[\gamma]-\frac{1}{n}\sum_{i=1}^n\gamma\left(T_0(x_i)\right)}\nonumber\\
    &\stackrel{\text{$(a)$}}{\leq} \sup_{\gamma\in\text{Lip}_H(\mathcal{X}_0)}\abs{\E_{\mu_{\mathcal{X}_0}}[\gamma]-\frac{1}{n}\sum_{i=1}^n\gamma\left(T_0(x_i)\right)},
\end{align}
where inequality $(a)$ is due to the fact that $\E_{\mu}[\gamma] = \E_{\mu_{\mathcal{X}_0}}[\gamma|_{\mathcal{X}_0}]$ since $\mu$ is $\Sigma$-invariant and $\gamma\in\Gamma_{\Sigma}$, and the fact that if $\gamma\in\Gamma_{\Sigma}$, then $\gamma|_{\mathcal{X}_0}\in\text{Lip}_H(\mathcal{X}_0)$, where $\gamma|_{\mathcal{X}_0}$ is the restriction of $\gamma$ on $\mathcal{X}_0$.

Denote by $X' = \{x'_1,x'_2,\dots,x'_n\}$ i.i.d. samples drawn from $\mu_{\mathcal{X}_0}$. Also note that \[T_0(x_1),\dots,T_0(x_n)\] can be viewed as i.i.d. samples on $\mathcal{X}_0$ drawn from $\mu_{\mathcal{X}_0}$. Therefore the expectation
\begin{align*}
\E_{X}\sup_{\gamma\in\text{Lip}_H(\mathcal{X}_0)}\abs{E_{\mu_{\mathcal{X}_0}}[\gamma]-\frac{1}{n}\sum_{i=1}^n\gamma(T_0(x_i))}
\end{align*}
can be replaced by the equivalent quantity
\[
\E_{X}\sup_{\gamma\in\text{Lip}_H(\mathcal{X}_0)}\abs{E_{\mu_{\mathcal{X}_0}}[\gamma]-\frac{1}{n}\sum_{i=1}^n\gamma(x_i)},
\]
where $X = \{x_1,x_2,\dots,x_m\}$ are i.i.d. samples on $\mathcal{X}_0$ drawn from $\mu_{\mathcal{X}_0}$. Then we have
\begin{align*}
\E_{X}\sup_{\gamma\in\text{Lip}_H(\mathcal{X}_0)}\abs{E_{\mu_{\mathcal{X}_0}}[\gamma]-\frac{1}{n}\sum_{i=1}^n\gamma(x_i)}
&= \E_{X}\sup_{\gamma\in\text{Lip}_H(\mathcal{X}_0)}\abs{E_{X'}\left(\frac{1}{n}\sum_{i=1}^n\gamma(x'_i)\right)-\frac{1}{n}\sum_{i=1}^n\gamma(x_i)}\\
&\leq \E_{X,X'}\sup_{\gamma\in\text{Lip}_H(\mathcal{X}_0)}\abs{\frac{1}{n}\sum_{i=1}^n\gamma(x'_i)-\frac{1}{n}\sum_{i=1}^n\gamma(x_i)}\\
&=\E_{X,X',\xi}\sup_{\gamma\in\text{Lip}_H(\mathcal{X}_0)}\abs{\frac{1}{n}\sum_{i=1}^n\xi_i\left(\gamma(x'_i)-\gamma(x_i)\right)}\\
&\leq 2\E_{X,\xi}\sup_{\gamma\in\text{Lip}_H(\mathcal{X}_0)}\abs{\frac{1}{n}\sum_{i=1}^n\xi_i\gamma(x'_i)}\\
&= \inf_{\alpha>0} 8\alpha+\frac{24}{\sqrt{n}}\int_{\alpha}^{M}\sqrt{\log\mathcal{N}(\mathcal{F}_0,\epsilon,\,\norm{\cdot}_{\infty})}\diff{\epsilon},
\end{align*}
where $\mathcal{F}_0 = \{\gamma\in\text{Lip}_H(\mathcal{X}_0):\norm{\gamma}_\infty\leq M\}$. 

For $d\geq2$, from Lemma \ref{lemma:coverbycover}, we have $\log\mathcal{N}(\mathcal{F}_0,\epsilon,\,\norm{\cdot}_{\infty})\leq\mathcal{N}(\mathcal{X}_0,\frac{c_2\epsilon}{H})\log(\frac{c_1M}{\epsilon})$. We fix an $\bar{\epsilon}>0$ such that $\mathcal{N}(\mathcal{X},\frac{c_2\bar{\epsilon}}{H})=1$, and select $\epsilon^*$ such that $\frac{c_2\epsilon^*}{H}\leq1$ and $\frac{c_2\epsilon^*}{H}\leq\epsilon_\Sigma$; that is, $\epsilon^*\leq\min\left(\frac{H}{c_2},\frac{H\epsilon_\Sigma}{c_2}\right)$, so that by Assumption~\ref{assumption:new}, Lemma~\ref{lemma:ratioofcovering} and Lemma~\ref{lemma:scalingofcoveringnumber}, we have 
\begin{align*}
    \mathcal{N}(\mathcal{X}_0,\frac{c_2\epsilon}{H})\log(\frac{c_1M}{\epsilon})
    &\leq \CN(\CX_0\backslash A_0(\frac{c_2\epsilon}{H}),\frac{c_2\epsilon}{H})\log(\frac{c_1M}{\epsilon}) + \CN(A_0(\frac{c_2\epsilon}{H}),\frac{c_2\epsilon}{H})\log(\frac{c_1M}{\epsilon})\\
    &\leq \frac{\mathcal{N}(\mathcal{X},\frac{c_2\epsilon}{H})}{\abs{\Sigma}}\log(\frac{c_1M}{\epsilon}) + \bar{c}\epsilon^r\CN(\CX,\frac{c_2\epsilon}{H})\log(\frac{c_1M}{\epsilon})\\
    &\leq \frac{C_{d,\bar{\epsilon}}H^d}{\abs{\Sigma}c_2^d\epsilon^d}\log(\frac{c_1M}{\epsilon}) + \frac{\bar{c}C_{d,\bar{\epsilon}}H^d}{c_2^d\epsilon^{d-r}}\log(\frac{c_1M}{\epsilon})
\end{align*}
when $\epsilon<\epsilon^*$, where $\bar{c}>0$ is some constant implied by Assumption~\ref{assumption:new}. Therefore, for sufficiently small $\alpha$, we have 
\begin{align}\label{eq:dudleybound}
&\int_{\alpha}^{M}\sqrt{\log\mathcal{N}(\mathcal{F}_0,\epsilon,\,\norm{\cdot}_{\infty})}\diff{\epsilon}\nonumber\\
=& \int_{\alpha}^{\epsilon^*}\sqrt{\log\mathcal{N}(\mathcal{F}_0,\epsilon,\,\norm{\cdot}_{\infty})}\diff{\epsilon} + \int_{\epsilon^*}^{M}\sqrt{\log\mathcal{N}(\mathcal{F}_0,\epsilon,\,\norm{\cdot}_{\infty})}\diff{\epsilon}\nonumber\\
\leq & \int_{\alpha}^{\epsilon^*}\sqrt{\frac{C_{d,\bar{\epsilon}}H^d}{\abs{\Sigma}c_2^d\epsilon^d}\log(\frac{c_1M}{\epsilon})}\diff{\epsilon} + \int_{\alpha}^{\epsilon^*}\sqrt{\frac{\bar{c}C_{d,\bar{\epsilon}}H^d}{\abs{\Sigma}c_2^d\epsilon^{d-r}}\log(\frac{c_1M}{\epsilon})}\diff{\epsilon} + \int_{\epsilon^*}^{M}\sqrt{\log\mathcal{N}(\mathcal{F}_0,\epsilon,\,\norm{\cdot}_{\infty})}\diff{\epsilon}.
\end{align}
For any $s>0$, we can choose $\epsilon^*$ to be sufficiently small, such that $\log(\frac{c_1M}{\epsilon})\leq\frac{1}{\epsilon^s}$ when $\epsilon\leq\epsilon^*$. Therefore, if we let $D_{\mathcal{X},H}=\sqrt{\frac{C_{d,\bar{\epsilon}}H^d}{c_2^d}}$, then for the first term in \eqref{eq:dudleybound}, we have 
\begin{align*}      \int_{\alpha}^{\epsilon^*}\sqrt{\frac{C_{d,\bar{\epsilon}}H^d}{\abs{\Sigma}c_2^d\epsilon^d}\log(\frac{c_1M}{\epsilon})}\diff{\epsilon}
&\leq D_{\mathcal{X},H}\int_{\alpha}^{\epsilon^*}\sqrt{\frac{1}{\abs{\Sigma}\epsilon^{d+s}}}\diff{\epsilon}\\
&\leq D_{\mathcal{X},H}\int_{\epsilon}^{\infty}\sqrt{\frac{1}{\abs{\Sigma}\epsilon^{d+s}}}\diff{\epsilon}\\
&=\frac{D_{\mathcal{X},H}}{\sqrt{\abs{\Sigma}}}\cdot\frac{\alpha^{1-\frac{d+s}{2}}}{\frac{d+s}{2}-1}.
\end{align*}
Notice that the third integral in \eqref{eq:dudleybound} is bounded while the first integral diverges as $\alpha$ tends to zero; the second integral is either bounded (if $d-r<2$ and $s$ is small) or is of order $\simeq \alpha^{1-\frac{d-r+s}{2}}$ (if $d-r\geq 2$), so we can optimize the majorizing terms
\[
8\alpha + \frac{24}{\sqrt{n}}\cdot\frac{D_{\mathcal{X},H}}{\sqrt{\abs{\Sigma}}}\cdot\frac{\alpha^{1-\frac{d+s}{2}}}{\frac{d+s}{2}-1}
\]
with respect to $\alpha$, to obtain \[
\alpha = \left(\frac{9}{n}\right)^{\frac{1}{d+s}}\cdot\left(\frac{D_{\mathcal{X},H}^2}{\abs{\Sigma}}\right)^{\frac{1}{d+s}},
\]
so that
\begin{align}\label{eq:hiddenconstant}
    &\inf_{\alpha>0} 8\alpha+\frac{24}{\sqrt{n}}\int_{\alpha}^{M}\sqrt{\log\mathcal{N}(\mathcal{F}_0,\epsilon,\,\norm{\cdot}_{\infty})}\diff{\epsilon}\nonumber\\
    &\leq 8\left(\frac{9}{n}\right)^{\frac{1}{d+s}}\cdot\left(\frac{D_{\mathcal{X},H}^2}{\abs{\Sigma}}\right)^{\frac{1}{d+s}} + \frac{24}{(\frac{d+s}{2}-1)}\left(\frac{9}{n}\right)^{\frac{1}{d+s}}\cdot\left(\frac{D_{\mathcal{X},H}^2}{\abs{\Sigma}}\right)^{\frac{1}{d+s}} + o\left(\frac{1}{n^{\frac{1}{d+s}}}\right).
\end{align}
Therefore, for sufficiently large $n$, we have 
\begin{align}\label{eq:fullconstant}
\E_{X}\sup_{\gamma\in\text{Lip}_H(\mathcal{X}_0)}\abs{E_{\mu_{\mathcal{X}_0}}[\gamma]-\frac{1}{n}\sum_{i=1}^n\gamma(x_i)}&\leq \left(8+\frac{24}{(\frac{d+s}{2}-1)}\right)\left(\frac{9D_{\mathcal{X},H}^2}{\abs{\Sigma}n}\right)^{\frac{1}{d+s}}
+ o\left(\frac{1}{n^{\frac{1}{d+s}}}\right)\nonumber\\
&:= C_{\CX,H,d,s}\left(\frac{1}{\abs{\Sigma}n}\right)^{\frac{1}{d+s}}+ o\left(\left(\frac{1}{n}\right)^{\frac{1}{d+s}}\right).
\end{align}

For $d=1$, the first integral in \eqref{eq:dudleybound} does not have a singularity at $\alpha = 0$ if $1+s<2$. On the other hand, replacing the interval $[0,1]$ by an interval of length $\text{diam}(\mathcal{X}_0)$ in Lemma 5.16 in \cite{van2014probability}, there exists a constant $c>0$ such that 
    \[
    \mathcal{N}(\mathcal{F}_0,\epsilon,\,\norm{\cdot}_{\infty})\leq e^{\frac{cH\cdot\text{diam}(\mathcal{X}_0)}{\epsilon}}\,\,\, \text{for}\,\, \epsilon<M.\]
Therefore, we have 
\begin{align*}
    8\alpha+\frac{24}{\sqrt{n}}\int_{\alpha}^{M}\sqrt{\log\mathcal{N}(\mathcal{F}_0,\epsilon,\,\norm{\cdot}_{\infty})}\diff{\epsilon} \leq 8\alpha + \frac{24}{\sqrt{n}}\int_{\alpha}^{M}\sqrt{\frac{cH\cdot\text{diam}(\mathcal{X}_0)}{\epsilon}}\diff{\epsilon},
\end{align*}
whose minimum is achieved at $\alpha = \frac{9cH\cdot\text{diam}(\mathcal{X}_0)}{n}$. This implies that
\begin{align*}
&\inf_{\alpha>0} 8\alpha+\frac{24}{\sqrt{n}}\int_{\alpha}^{M}\sqrt{\log\mathcal{N}(\mathcal{F}_0,\epsilon,\,\norm{\cdot}_{\infty})}\diff{\epsilon}\\
\leq & \frac{72cH\cdot\text{diam}(\mathcal{X}_0)}{n} + \frac{48H\sqrt{c}\cdot\text{diam}(\mathcal{X}_0)}{\sqrt{n}} - \frac{144cH\cdot\text{diam}(\mathcal{X}_0)}{n}\\
= & \frac{48H\sqrt{c}\cdot\text{diam}(\mathcal{X}_0)}{\sqrt{n}} - \frac{72cH\cdot\text{diam}(\mathcal{X}_0)}{n}.
\end{align*}
Hence, we have
\begin{align*}
\E_{X}\sup_{\gamma\in\text{Lip}_H(\mathcal{X}_0)}\abs{\E_{\mu_{\mathcal{X}_0}}[\gamma]-\frac{1}{n}\sum_{i=1}^n\gamma(x_i)}\leq \frac{48H\sqrt{c}\cdot\text{diam}(\mathcal{X}_0)}{\sqrt{n}} - \frac{72cH\cdot\text{diam}(\mathcal{X}_0)}{n}.
\end{align*}
This completes the proof.
\end{proof}

\subsection{Bound for $\Delta_4$}\label{proof:Delta4}
To prove the bound for $\E[d_{\DNNS\circ \GNN}(\rho,\widehat{\rho}_m)]$, we introduce the notion of pseudo-dimension from \cite{bartlett2019nearly}, which is another measure of complexity for a class of functions.
\begin{definition}[Pseudo-dimension]\label{definition:pseudo}
    Let $\CF$ be a class of functions that map $\CX$ to $\mathbb{R}$. The pseudo-dimension of $\CF$, denoted by $\text{Pdim}(\CF)$, is the largest integer $n$ for which there exists $(x_1,\dots,x_n,y_1,\dots,y_n)\in\CX^n\times\mathbb{R}^n$ such that for any $(b_1,\dots,b_n)\in\{0,1\}^n$, there exists $f\in\CF$ such that 
    \[
    \forall i: f(x_i)>y_i \quad\text{iff}\quad b_i=1.
    \]
\end{definition}
\begin{proof}[Proof of \cref{lemma:statistical-source}]
    First we show that $\sup_{f\in\DNNS\circ \GNN}\norm{f}_\infty$ is bounded. This is straightforward since we can add an additional clipping layer to the output of $\DNNS$ so that its output lies within, for example, $[-2M,2M]$. Note that such clipping does not impact the invariant discriminator approximation error, if we require $\epsilon\leq\frac{M}{2}$ in Lemma~\ref{lemma:discriminator}. By Corollary 35 in \cite{huang2022error}, we have 
    \[
    d_{\DNNS\circ \GNN}(\rho,\widehat{\rho}_m)\lesssim\sqrt{\frac{\text{Pdim}(\DNNS\circ \GNN)\log m}{m}}.
    \]
    By Theorem 7 in \cite{bartlett2019nearly}, we have $\text{Pdim}(\mathcal{NN}(W,L,N))\lesssim NL\log N$. Note that we can rewrite $\DNNS$ defined in \eqref{eq:invariant-discriminator} as a ReLU network with a larger width (increased by a factor of the group size) and the same depth $L_1$ as $\DNN$, despite the first layer where we multiply input $x$ with $W_{\sigma_i}$'s and the last averaging layer. Importantly, the number of free parameters $N_1$ remains exactly the same as $\DNN$. Hence we have
    \[
    \E[d_{\DNNS\circ \GNN}(\rho,\widehat{\rho}_m)]\lesssim \sqrt{\frac{(N_1+W_2^2L_2)(L_1+L_2)\log(N_1+W_2^2L_2)\log m}{m}},
    \]
    where we use the trivial bound $N_2\simeq W_2^2 L_2$. By Lemma~\ref{lemma:discriminator} and Lemma~\ref{lemma:generator}, we have $N_1\lesssim n\log n$ and $W_2^2 L_2\lesssim n$, so we have
    \begin{align*}
        \E[d_{\DNNS\circ \GNN}(\rho,\widehat{\rho}_m)]
        &\lesssim \sqrt{\frac{(n\log n+n)(\log n+n)\log(n\log n+n)\log m}{m}}\\
        &\lesssim \sqrt{\frac{n^2\log^2 n\log m}{m}}.
    \end{align*}
    Thus if $m\gtrsim n^{2+2/d}\log^3n$, we have $\E[d_{\DNNS\circ \GNN}(\rho,\widehat{\rho}_m)]\lesssim n^{-1/d}$.
\end{proof}

\subsection{Theorem~\ref{theorem:main} for $d=1$}

\begin{theorem}[Main Theorem ($d=1$)]\label{theorem:main1d}
    Let $\CX=\Sigma\times\CX_0$ be a subset of $\mathbb{R}$ and $\Gamma=\text{Lip}_H(\CX)$ and $\CX_0$ is an interval of finite length. Suppose the target distribution $\mu$ is $\Sigma$-invariant on $\CX$ and the noise source distribution $\rho$ is absolutely continuous on $\mathbb{R}$. Then there exists $\Sigma$-invariant discriminator architecture $\DNNS=S_\Sigma[\DNN]$, where $\DNN=\mathcal{NN}(W_1,L_1,N_1)$ as defined in \eqref{eq:invariant-discriminator} with $N_1\lesssim n\log n$ and $L_1\lesssim \log n$, and $\Sigma$-invariant generator architecture $\GNNS$, where $\GNN = \mathcal{NN}(W_2,L_2)$, with $W_2^2L_2\lesssim n$, such that if $m\gtrsim n^{4}\log^3n$, we have
    \begin{equation*}
        \E\left[d_{\Gamma}(S^\Sigma[(g_{n,m}^*)_\sharp\rho],\mu)\right] \lesssim \frac{\text{diam}(\CX_0)}{\sqrt{n}}.
    \end{equation*}
\end{theorem}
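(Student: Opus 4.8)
The plan is to mirror the proof of Theorem~\ref{theorem:main} essentially verbatim, merely substituting the one-dimensional branches of the supporting lemmas. As recalled at the start of Section~\ref{section:euclidean}, we may take $\Gamma=\{f\in\text{Lip}_H(\CX):\norm{f}_\infty\leq M\}$ with $M=H\cdot\diam(\CX_0)$, so Lemma~\ref{lemma:errordecompsition} gives $d_\Gamma(S^\Sigma[(g_{n,m}^*)_\sharp\rho],\mu)\leq\Delta_1+\Delta_2+\Delta_3+\Delta_4$. It then suffices to exhibit architectures for which $\Delta_1,\Delta_2\lesssim n^{-1/2}$ deterministically and $\E[\Delta_3],\E[\Delta_4]\lesssim\diam(\CX_0)/\sqrt{n}$ under $m\gtrsim n^4\log^3 n$.

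First I would fix the discriminator. Invoking Lemma~\ref{lemma:discriminator} with $d=1$ and $\epsilon\simeq n^{-1/2}$ produces $\DNNS=S_\Sigma[\mathcal{NN}(W_1,L_1,N_1,K_1)]$ with $L_1\lesssim\log(1/\epsilon)\simeq\log n$ and $N_1\lesssim\epsilon^{-1}\log(1/\epsilon)\simeq\sqrt{n}\,\log n\lesssim n\log n$, hence $\Delta_1=2\sup_{f\in\Gamma_\Sigma}\inf_{f_\omega\in\DNNS}\norm{f-f_\omega}_\infty\lesssim n^{-1/2}$ once $n$ is large enough that $\epsilon\leq M/2$ (the mild size requirement needed for the clipping layer of Lemma~\ref{lemma:statistical-source}); this also fixes the uniform Lipschitz constant $\tilde H$ of $\DNNS$ noted after the proof of Lemma~\ref{lemma:discriminator}. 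Next, taking $\GNN=\mathcal{NN}(W_2,L_2)$ with $W_2\geq 8$, $L_2\geq 2$ and $W_2^2L_2\simeq n$ chosen so that $n\leq\tfrac{W_2-2}{2}\lfloor\tfrac{W_2-2}{6}\rfloor\lfloor\tfrac{L_2}{2}\rfloor+2$ (possible since that quantity is $\simeq W_2^2L_2$), Lemma~\ref{lemma:generator} with $d=1$ yields $\Delta_2=\inf_{g\in\GNN}d_{\DNNS}(S^\Sigma[g_\sharp\rho],\widehat{\mu}_n)=0$.

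It remains to handle the two statistical errors with these architectures. For $\Delta_3=d_{\Gamma_\Sigma}(\widehat{\mu}_n,\mu)$, I would apply part~(2) of Lemma~\ref{lemma:statistical-target}, whose hypothesis ``$\CX_0$ is an interval'' is exactly what the theorem assumes; it gives $\E[\Delta_3]\leq cH\cdot\diam(\CX_0)/\sqrt{n}+o(n^{-1})\lesssim\diam(\CX_0)/\sqrt{n}$. Here the group savings do not appear as a power of $\abs{\Sigma}$ but are absorbed into $\diam(\CX_0)$, which for the linear 1-Lipschitz actions admissible in $\R$ (essentially the reflection $x\mapsto -x$) is a fraction $1/\abs{\Sigma}$ of $\diam(\CX)$. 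For $\Delta_4=2d_{\DNNS\circ\GNN}(\rho,\widehat{\rho}_m)$, the architecture parameters chosen above ($L_1\lesssim\log n$, $N_1\lesssim n\log n$, $W_2^2L_2\lesssim n$) are precisely the hypotheses of Lemma~\ref{lemma:statistical-source}, and its conclusion with $d=1$ states that $m\gtrsim n^4\log^3 n$ forces $\E[\Delta_4]\lesssim n^{-1}\lesssim n^{-1/2}$. Summing the four estimates gives $\E[d_\Gamma(S^\Sigma[(g_{n,m}^*)_\sharp\rho],\mu)]\lesssim\diam(\CX_0)/\sqrt{n}$, as claimed.

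There is no genuinely difficult step; the only thing demanding attention is internal consistency of the choices. The single scale $\epsilon\simeq n^{-1/2}$ used to make $\Delta_1$ small must put $(L_1,N_1)$ into the regime in which Lemma~\ref{lemma:statistical-source} controls $\Delta_4$; the generator dimensions must simultaneously satisfy $W_2^2L_2\simeq n$ and the breakpoint-counting inequality behind $\Delta_2=0$; and one needs $n$ large enough for the clipping layer. All of these are compatible, so a single $\epsilon$ and a single $(W_2,L_2)$ suffice, and checking the condition on $m$ is a one-line computation (using $\log m\simeq\log n$).
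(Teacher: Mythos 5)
Your argument is correct and follows essentially the same route as the paper's: invoke the error decomposition of Lemma~\ref{lemma:errordecompsition}, then plug the $d=1$ branches of Lemmas~\ref{lemma:discriminator}, \ref{lemma:generator}, \ref{lemma:statistical-target}, and \ref{lemma:statistical-source} into the four terms. The only deviation is cosmetic: the paper picks $\epsilon\simeq n^{-1}$ in Lemma~\ref{lemma:discriminator} (matching the $\Delta_4\lesssim n^{-1}$ rate and saturating $N_1\lesssim n\log n$), whereas you pick $\epsilon\simeq n^{-1/2}$, which gives $N_1\simeq\sqrt{n}\log n$ and a looser $\Delta_1\lesssim n^{-1/2}$; both choices sit below the dominant $\Delta_3\lesssim\text{diam}(\CX_0)/\sqrt{n}$ and satisfy the architectural constraints, so the conclusion is unchanged.
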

\begin{proof}
    It suffices to choose $\epsilon\simeq n^{-1}$ in Lemma~\ref{lemma:discriminator} with $L_1\lesssim \log n$ and $N_1\lesssim n\log n$, so that $\sup_{f\in\Gamma_\Sigma}\inf_{f_\omega\in\DNNS}\norm{f-f_\omega}_\infty\lesssim n^{-1}$. On the other hand, we can make \[\inf_{g\in\GNN}d_{\DNNS}(S^\Sigma[g_\sharp\rho],\widehat{\mu}_n)=0\] with $W_2^2L_2\lesssim n$ by Lemma~\ref{lemma:generator}. Finally, $\E[d_{\Gamma_\Sigma}(\widehat{\mu}_n,\mu)]\lesssim \frac{\text{diam}(\CX_0)}{\sqrt{n}}$ and $\E[d_{\DNNS\circ \GNN}(\rho,\widehat{\rho}_m)]\lesssim n^{-1}$ by Lemma~\ref{lemma:statistical-target} and Lemma~\ref{lemma:statistical-source} respectively.
\end{proof}

\section{Proof of \cref{theorem:lowdimensional} when $d=1$}\label{proof:lowdimensional}

\begin{theorem}[$1$-dimensional submanifolds]\label{theorem:lowdimensional1d}
    Let $\CX=\Sigma\times\CX_0\subset\mathbb{R}^d$ and $\Gamma=\text{Lip}_H(\CX)$. Suppose $\CX_0$ is diffeomorphic to some interval of finite length and the target distribution $\mu$ is $\Sigma$-invariant on $\CX$ and the noise source distribution $\rho$ is absolutely continuous on $\mathbb{R}$. Then there exists $\Sigma$-invariant discriminator architecture $\DNNS=S_\Sigma[\DNN]$, where $\DNN=\mathcal{NN}(W_1,L_1,N_1)$ as defined in \eqref{eq:invariant-discriminator} with $N_1\lesssim n\log n$ and $L_1\lesssim \log n$, and $\Sigma$-invariant generator architecture $\GNNS$, where $\GNN = \mathcal{NN}(W_2,L_2)$, with $W_2^2L_2\lesssim n$, such that if $m\gtrsim n^{4}\log^3n$, we have
    \begin{equation*}
        \E\left[d_{\Gamma}(S^\Sigma[(g_{n,m}^*)_\sharp\rho],\mu)\right] \lesssim \frac{\text{peri}(\CX_0)}{\sqrt{n}},
    \end{equation*}
    where $\text{peri}(\CX_0)$ denotes the perimeter of $\CX_0$ in $\mathbb{R}^d$.
\end{theorem}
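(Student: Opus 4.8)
\textit{Sketch of the proof of Theorem~\ref{theorem:lowdimensional1d}.}
The plan is to reuse the four--term decomposition of Lemma~\ref{lemma:errordecompsition}, dispose of $\Delta_1,\Delta_2,\Delta_4$ essentially as in Theorem~\ref{theorem:main1d}, and re--derive only the target statistical error $\Delta_3=\E[d_{\Gamma_\Sigma}(\widehat{\mu}_n,\mu)]$ so that it is governed by the length $\text{peri}(\CX_0)$ of the curve $\CX_0$ rather than by the diameter of an interval. In contrast to the higher--dimensional Theorem~\ref{theorem:lowdimensional}, no analogue of Assumption~\ref{assumption:new} is required here: in the one--dimensional regime the Dudley integral already converges at $\alpha=0$, so the $\abs{\Sigma}$--splitting device is unnecessary and the improvement is absorbed into $\text{peri}(\CX_0)$.

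For $\Delta_3$ I would first run the reduction to the fundamental domain exactly as in the proof of Lemma~\ref{lemma:statistical-target}: using $T_0$ and the $\Sigma$--invariance of $\mu$,
\[
\E[d_{\Gamma_\Sigma}(\widehat{\mu}_n,\mu)]\le \E_X\sup_{\gamma\in\text{Lip}_H(\CX_0)}\abs{\E_{\mu_{\CX_0}}[\gamma]-\frac1n\sum_{i=1}^n\gamma(x_i)},
\]
and the symmetrization/Dudley argument (Lemma~\ref{lemma:Dudley}) bounds the right--hand side by $\inf_{\alpha>0}\bigl\{8\alpha+\tfrac{24}{\sqrt n}\int_\alpha^M\sqrt{\log\CN(\CF_0,\epsilon,\norm{\cdot}_\infty)}\,d\epsilon\bigr\}$ with $\CF_0=\{\gamma\in\text{Lip}_H(\CX_0):\norm{\gamma}_\infty\le M\}$. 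The new ingredient is the entropy of $\CF_0$. Since $\CX_0$ is diffeomorphic to an interval of finite length, parametrize it by arc length, $\psi:[0,\text{peri}(\CX_0)]\to\CX_0\subset\mathbb{R}^d$; this $\psi$ has unit speed, so $\norm{\psi(s)-\psi(t)}_2\le\abs{s-t}$ because a chord is no longer than the subtending arc. Hence $\gamma\mapsto\gamma\circ\psi$ maps $\CF_0$ into $\{f\in\text{Lip}_H([0,\text{peri}(\CX_0)]):\norm{f}_\infty\le M\}$, and since $\psi$ is onto $\CX_0$ this map preserves $\norm{\cdot}_\infty$; therefore $\CN(\CF_0,\epsilon,\norm{\cdot}_\infty)$ is controlled by the covering number of the corresponding one--dimensional Lipschitz ball. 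Invoking Lemma~5.16 in \cite{van2014probability} with $[0,1]$ replaced by an interval of length $\text{peri}(\CX_0)$ gives $\CN(\CF_0,\epsilon,\norm{\cdot}_\infty)\le e^{cH\,\text{peri}(\CX_0)/\epsilon}$ for $\epsilon<M$. Plugging this into the Dudley bound repeats part~(2) of the proof of Lemma~\ref{lemma:statistical-target} almost verbatim: the integrand has no singularity at $\alpha=0$, and minimizing $8\alpha+\tfrac{24}{\sqrt n}\int_\alpha^M\sqrt{cH\,\text{peri}(\CX_0)/\epsilon}\,d\epsilon$ over $\alpha$ (optimum near $\alpha\simeq H\,\text{peri}(\CX_0)/n$) yields $\Delta_3\lesssim H\,\text{peri}(\CX_0)/\sqrt n$.

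For the other three terms I would follow Theorem~\ref{theorem:main1d}: $\Delta_2=0$ by Lemma~\ref{lemma:generator} with $W_2^2L_2\lesssim n$ (the transport--map construction uses only that $\rho$ is absolutely continuous on $\mathbb{R}$ and that the $n$ samples lie in $\mathbb{R}^d$, so it is indifferent to the low--dimensional support); $\Delta_4\lesssim n^{-1}$ by Lemma~\ref{lemma:statistical-source} once $m\gtrsim n^{2+2/d^*}\log^3n=n^4\log^3n$; and $\Delta_1\lesssim n^{-1}$ by taking the discriminator accuracy $\epsilon_1\simeq n^{-1}$ in a manifold--adapted version of Lemma~\ref{lemma:discriminator}, i.e.\ approximating the restriction of $f$ to the one--dimensional set $\CX$ through local arc--length charts and a partition of unity, which keeps $N_1\lesssim n\log n$ and $L_1\lesssim\log n$. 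Summing, $\Delta_1+\Delta_2+\Delta_3+\Delta_4\lesssim H\,\text{peri}(\CX_0)/\sqrt n$, i.e.\ the asserted bound after absorbing $H$ into $\lesssim$. The step I expect to be the real obstacle is precisely this intrinsic control of $\Delta_1$: the arc--length parametrization handles $\Delta_3$ cleanly, but one must verify that the reparametrization (or the chart--based partition--of--unity construction) is realizable by a ReLU network of size still $N_1\lesssim n\log n$, so that $\Delta_1$ stays below the $n^{-1/2}$ target rate even when the ambient dimension $d$ is large.
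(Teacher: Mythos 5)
Your treatment of $\Delta_3$ is essentially the paper's argument: the paper simply cites Theorem~XIV of Tikhomirov's classical metric--entropy theory for Lipschitz functions on a curve, while you re--derive the same entropy bound $\CN(\CF_0,\epsilon,\norm{\cdot}_\infty)\le e^{cH\,\text{peri}(\CX_0)/\epsilon}$ by arc--length reparametrization (unit--speed $\psi$ makes $\gamma\mapsto\gamma\circ\psi$ an $\norm{\cdot}_\infty$--isometric injection of $\CF_0$ into the Lipschitz ball on $[0,\text{peri}(\CX_0)]$, which suffices for an upper bound; surjectivity onto that ball is not needed) and van~Handel's one--dimensional entropy estimate. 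Both routes give $\Delta_3\lesssim H\,\text{peri}(\CX_0)/\sqrt{n}$ after optimizing the Dudley integral, so on this term you match the paper.

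Your reservation about $\Delta_1$ is a genuine issue, and notably it is present in the paper's own proof, which dismisses the remaining terms with ``it suffices to show $\Delta_3\lesssim\text{peri}(\CX_0)/\sqrt{n}$'' without verification. Lemma~\ref{lemma:discriminator} is built on Yarotsky's cube--based approximation extended via Kirszbraun to a cube in $\mathbb{R}^d$, so it yields $N_1\lesssim\epsilon^{-d}\log(1/\epsilon)$ with $d$ the \emph{ambient} dimension. Within the theorem's budget $N_1\lesssim n\log n$ one can only take $\epsilon\simeq n^{-1/d}$, giving $\Delta_1\simeq n^{-1/d}$, which dominates the target rate $n^{-1/2}$ whenever $d>2$; conversely, forcing $\Delta_1\lesssim n^{-1/2}$ would require $N_1\lesssim n^{d/2}\log n$, exceeding the stated bound. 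Closing this requires replacing Lemma~\ref{lemma:discriminator} by a manifold--adapted discriminator approximation result whose rate depends only on the intrinsic dimension $d^*=1$ (for instance via local charts and a ReLU partition of unity), which neither the paper nor your sketch actually supplies. Your $\Delta_3$ derivation is correct and matches the paper; the $\Delta_1$ flag is exactly the right one to raise, and it identifies a step the paper leaves unverified.
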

\begin{proof}
    It suffices to show that $\Delta_3\lesssim \frac{\text{peri}(\CX_0)}{\sqrt{n}}$, which directly follows the proof of Theorem XIV in \cite{tikhomirov1993varepsilon}.
\end{proof}

\section{Acknowledgement}
Z. Chen, M. Katsoulakis, L. Rey-Bellet are partially funded by AFOSR grant FA9550-21-1-0354. M.K. and L.
R.-B. are partially funded by NSF DMS-2307115. M.K. is partially funded by NSF TRIPODS CISE-1934846. Z. Chen
and W. Zhu are partially supported by NSF under DMS-2052525, DMS-2140982, and DMS-2244976. We also thank the anonymous reviewers for their constructive feedback to improve this manuscript.

\bibliographystyle{plain}
\bibliography{mybibfile.bib}

\begin{thebibliography}{10}

\bibitem{arjovsky2017wasserstein}
Martin Arjovsky, Soumith Chintala, and L{\'e}on Bottou.
\newblock Wasserstein {G}enerative {A}dversarial {N}etworks.
\newblock In {\em International Conference on Machine Learning}, pages
  214--223. PMLR, 2017.

\bibitem{arora2017generalization}
Sanjeev Arora, Rong Ge, Yingyu Liang, Tengyu Ma, and Yi~Zhang.
\newblock Generalization and {E}quilibrium in {G}enerative {A}dversarial {N}ets
  ({GAN}s).
\newblock In {\em International Conference on Machine Learning}, pages
  224--232. PMLR, 2017.

\bibitem{bartlett2017spectrally}
Peter~L Bartlett, Dylan~J Foster, and Matus~J Telgarsky.
\newblock Spectrally-{N}ormalized {M}argin {B}ounds for {N}eural {N}etworks.
\newblock {\em Advances in Neural Information Processing Systems}, 30, 2017.

\bibitem{bartlett2019nearly}
Peter~L Bartlett, Nick Harvey, Christopher Liaw, and Abbas Mehrabian.
\newblock Nearly-tight {VC}-{D}imension and {P}seudodimension {B}ounds for
  {P}iecewise {L}inear {N}eural {N}etworks.
\newblock {\em J. Mach. Learn. Res.}, 20(1):2285--2301, 2019.

\bibitem{bilovs2021scalable}
Marin Bilo{\v{s}} and Stephan G{\"u}nnemann.
\newblock Scalable {N}ormalizing {F}lows for {P}ermutation {I}nvariant
  {D}ensities.
\newblock In {\em International Conference on Machine Learning}, pages
  957--967. PMLR, 2021.

\bibitem{birrell2020f}
Jeremiah Birrell, Paul Dupuis, Markos~A Katsoulakis, Yannis Pantazis, and Luc
  Rey-Bellet.
\newblock $(f,{\Gamma})$-{D}ivergences: Interpolating between $f$-{D}ivergences
  and {I}ntegral {P}robability {M}etrics.
\newblock {\em J. Mach. Learn. Res.}, 23(39):1--70, 2022.

\bibitem{birrell2022structure}
Jeremiah Birrell, Markos Katsoulakis, Luc Rey-Bellet, and Wei Zhu.
\newblock Structure-{P}reserving {G}ans.
\newblock In {\em International Conference on Machine Learning}, pages
  1982--2020. PMLR, 2022.

\bibitem{boyda2021sampling}
Denis Boyda, Gurtej Kanwar, S{\'e}bastien Racani{\`e}re, Danilo~Jimenez
  Rezende, Michael~S Albergo, Kyle Cranmer, Daniel~C Hackett, and Phiala~E
  Shanahan.
\newblock Sampling using {SU}({N}) {G}auge {E}quivariant {F}lows.
\newblock {\em Physical Review D}, 103(7):074504, 2021.

\bibitem{chen2020distribution}
Minshuo Chen, Wenjing Liao, Hongyuan Zha, and Tuo Zhao.
\newblock Distribution {A}pproximation and {S}tatistical {E}stimation
  {G}uarantees of {G}enerative {A}dversarial {N}etworks.
\newblock {\em arXiv preprint arXiv:2002.03938}, 2020.

\bibitem{chen2023sample}
Ziyu Chen, Markos Katsoulakis, Luc Rey-Bellet, and Wei Zhu.
\newblock Sample {C}omplexity of {P}robability {D}ivergences under {G}roup
  {S}ymmetry.
\newblock In {\em International Conference on Machine Learning}, pages
  4713--4734. PMLR, 2023.

\bibitem{chen2024equivariant}
Ziyu Chen, Markos~A Katsoulakis, and Benjamin~J Zhang.
\newblock Equivariant score-based generative models provably learn
  distributions with symmetries efficiently.
\newblock {\em arXiv preprint arXiv:2410.01244}, 2024.

\bibitem{cohen2016group}
Taco Cohen and Max Welling.
\newblock Group {E}quivariant {C}onvolutional {N}etworks.
\newblock In {\em International Conference on Machine Learning}, pages
  2990--2999. PMLR, 2016.

\bibitem{cohen2019general}
Taco~S Cohen, Mario Geiger, and Maurice Weiler.
\newblock A {G}eneral {T}heory of {E}quivariant {CNN}s on {H}omogeneous
  {S}paces.
\newblock {\em Advances in Neural Information Processing Systems}, 32, 2019.

\bibitem{daubechies2022nonlinear}
Ingrid Daubechies, Ronald DeVore, Simon Foucart, Boris Hanin, and Guergana
  Petrova.
\newblock Nonlinear {A}pproximation and ({D}eep) {R}elu {N}etworks.
\newblock {\em Constr. Approx.}, 55(1):127--172, 2022.

\bibitem{dey2020group}
Neel Dey, Antong Chen, and Soheil Ghafurian.
\newblock Group {E}quivariant {G}enerative {A}dversarial {N}etworks.
\newblock In {\em International Conference on Learning Representations}, 2021.

\bibitem{flamary2021pot}
R{\'e}mi Flamary, Nicolas Courty, Alexandre Gramfort, Mokhtar~Z. Alaya,
  Aur{\'e}lie Boisbunon, Stanislas Chambon, Laetitia Chapel, Adrien Corenflos,
  Kilian Fatras, Nemo Fournier, L{\'e}o Gautheron, Nathalie~T.H. Gayraud,
  Hicham Janati, Alain Rakotomamonjy, Ievgen Redko, Antoine Rolet, Antony
  Schutz, Vivien Seguy, Danica~J. Sutherland, Romain Tavenard, Alexander Tong,
  and Titouan Vayer.
\newblock P{OT}: {P}ython {O}ptimal {T}ransport.
\newblock {\em J. Mach. Learn. Res.}, 22(78):1--8, 2021.

\bibitem{flamary2024pot}
R{\'e}mi Flamary, C{\'e}dric Vincent-Cuaz, Nicolas Courty, Alexandre Gramfort,
  Oleksii Kachaiev, Huy Quang~Tran, Laurène David, Cl{\'e}ment Bonet, Nathan
  Cassereau, Th{\'e}o Gnassounou, Eloi Tanguy, Julie Delon, Antoine Collas,
  Sonia Mazelet, Laetitia Chapel, Tanguy Kerdoncuff, Xizheng Yu, Matthew
  Feickert, Paul Krzakala, Tianlin Liu, and Eduardo Fernandes~Montesuma.
\newblock Pot python optimal transport (version 0.9.5), 2024.

\bibitem{folland1999real}
Gerald~B Folland.
\newblock {\em Real {A}nalysis: {M}odern {T}echniques and {T}heir
  {A}pplications}, volume~40.
\newblock John Wiley \& Sons, 1999.

\bibitem{garcia2021n}
Victor Garcia~Satorras, Emiel Hoogeboom, Fabian Fuchs, Ingmar Posner, and Max
  Welling.
\newblock E(n) {E}quivariant {N}ormalizing {F}lows.
\newblock {\em Advances in Neural Information Processing Systems},
  34:4181--4192, 2021.

\bibitem{goodfellow2014generative}
Ian Goodfellow, Jean Pouget-Abadie, Mehdi Mirza, Bing Xu, David Warde-Farley,
  Sherjil Ozair, Aaron Courville, and Yoshua Bengio.
\newblock Generative {A}dversarial {N}ets.
\newblock {\em Advances in Neural Information Processing Systems}, 27, 2014.

\bibitem{gottlieb2017efficient}
Lee-Ad Gottlieb, Aryeh Kontorovich, and Robert Krauthgamer.
\newblock Efficient {R}egression in {M}etric {S}paces via {A}pproximate
  {L}ipschitz {E}xtension.
\newblock {\em IEEE Trans. Inform. Theory}, 63(8):4838--4849, 2017.

\bibitem{gulrajani2017improved}
Ishaan Gulrajani, Faruk Ahmed, Martin Arjovsky, Vincent Dumoulin, and Aaron~C
  Courville.
\newblock Improved {T}raining of {W}asserstein {GAN}s.
\newblock {\em Advances in Neural Information Processing Systems}, 30, 2017.

\bibitem{hoogeboom2022equivariant}
Emiel Hoogeboom, V{\i}ctor~Garcia Satorras, Cl{\'e}ment Vignac, and Max
  Welling.
\newblock Equivariant {D}iffusion for {M}olecule {G}eneration in 3{D}.
\newblock In {\em International Conference on Machine Learning}, pages
  8867--8887. PMLR, 2022.

\bibitem{huang2022error}
Jian Huang, Yuling Jiao, Zhen Li, Shiao Liu, Yang Wang, and Yunfei Yang.
\newblock An {E}rror {A}nalysis of {G}enerative {A}dversarial {N}etworks for
  {L}earning {D}istributions.
\newblock {\em J. Mach. Learn. Res.}, 23(116):1--43, 2022.

\bibitem{jacobs2021symmetries}
Caspar Jacobs.
\newblock {\em Symmetries as a {G}uide to the {S}tructure of {P}hysical
  {Q}uantities}.
\newblock PhD thesis, University of Oxford, 2021.

\bibitem{kang2023scaling}
Minguk Kang, Jun-Yan Zhu, Richard Zhang, Jaesik Park, Eli Shechtman, Sylvain
  Paris, and Taesung Park.
\newblock Scaling up {GAN}s for {T}ext-to-{I}mage {S}ynthesis.
\newblock In {\em 2023 IEEE/CVF Conference on Computer Vision and Pattern
  Recognition (CVPR)}, pages 10124--10134. IEEE, 2023.

\bibitem{kirszbraun1934zusammenziehende}
Mojzesz Kirszbraun.
\newblock {\"U}ber die zusammenziehende und lipschitzsche transformationen.
\newblock {\em Fund. Math.}, 22(1):77--108, 1934.

\bibitem{kohler2020equivariant}
Jonas K{\"o}hler, Leon Klein, and Frank No{\'e}.
\newblock Equivariant {F}lows: {E}xact {L}ikelihood {G}enerative {L}earning for
  {S}ymmetric {D}ensities.
\newblock In {\em International Conference on Machine Learning}, pages
  5361--5370. PMLR, 2020.

\bibitem{kolmogorov1959e}
Andrey~N Kolmogorov and Vladimir~M Tikhomirov.
\newblock $\epsilon$-entropy and $\epsilon$-capacity of {S}ets in {F}unction
  {S}paces.
\newblock {\em Uspekhi Mat. Nauk}, 14(2):3--86, 1959.

\bibitem{li2018neural}
Shuo-Hui Li and Lei Wang.
\newblock Neural {N}etwork {R}enormalization {G}roup.
\newblock {\em Physical Review Letters}, 121(26):260601, 2018.

\bibitem{liang2021well}
Tengyuan Liang.
\newblock How {W}ell {G}enerative {A}dversarial {N}etworks {L}earn
  {D}istributions.
\newblock {\em J. Mach. Learn. Res.}, 22(1):10366--10406, 2021.

\bibitem{lu2024structure}
Haoye Lu, Spencer Szabados, and Yaoliang Yu.
\newblock Structure {P}reserving {D}iffusion {M}odels.
\newblock {\em arXiv preprint arXiv:2402.19369}, 2024.

\bibitem{martinez2023ld}
Emmanuel Martinez, Roman Jacome, Alejandra Hernandez-Rojas, and Henry Arguello.
\newblock L{D}-{GAN}: {L}ow-{D}imensional {G}enerative {A}dversarial {N}etwork
  for {S}pectral {I}mage {G}eneration with {V}ariance {R}egularization.
\newblock In {\em Proceedings of the IEEE/CVF Conference on Computer Vision and
  Pattern Recognition}, pages 265--275, 2023.

\bibitem{muller1997integral}
Alfred M{\"u}ller.
\newblock Integral {P}robability {M}etrics and {T}heir {G}enerating {C}lasses
  of {F}unctions.
\newblock {\em Adv. in Appl. Probab.}, 29(2):429--443, 1997.

\bibitem{nicoli2020asymptotically}
Kim~A Nicoli, Shinichi Nakajima, Nils Strodthoff, Wojciech Samek, Klaus-Robert
  M{\"u}ller, and Pan Kessel.
\newblock Asymptotically {U}nbiased {E}stimation of {P}hysical {O}bservables
  with {N}eural {S}amplers.
\newblock {\em Physical Review E}, 101(2):023304, 2020.

\bibitem{noid2013perspective}
William~George Noid.
\newblock Perspective: {C}oarse-{G}rained {M}odels for {B}iomolecular
  {S}ystems.
\newblock {\em The Journal of Chemical Physics}, 139(9), 2013.

\bibitem{ohlsson2020symmetry}
Fredrik Ohlsson, Johannes Borgqvist, and Marija Cvijovic.
\newblock Symmetry {S}tructures in {D}ynamic {M}odels of {B}iochemical
  {S}ystems.
\newblock {\em Journal of the Royal Society Interface}, 17(168):20200204, 2020.

\bibitem{pak2018advances}
Alexander~J Pak and Gregory~A Voth.
\newblock Advances in {C}oarse-{G}rained {M}odeling of {M}acromolecular
  {C}omplexes.
\newblock {\em Current Opinion in Structural Biology}, 52:119--126, 2018.

\bibitem{pretti2020symmetry}
Evan Pretti, Vincent~K Shen, Jeetain Mittal, and Nathan~A Mahynski.
\newblock Symmetry-based {C}rystal {S}tructure {E}numeration in {T}wo
  {D}imensions.
\newblock {\em The Journal of Physical Chemistry A}, 124(16):3276--3285, 2020.

\bibitem{reed2016generative}
Scott Reed, Zeynep Akata, Xinchen Yan, Lajanugen Logeswaran, Bernt Schiele, and
  Honglak Lee.
\newblock Generative {A}dversarial {T}ext to {I}mage {S}ynthesis.
\newblock In {\em International Conference on Machine Learning}, pages
  1060--1069. PMLR, 2016.

\bibitem{sauer2023stylegan}
Axel Sauer, Tero Karras, Samuli Laine, Andreas Geiger, and Timo Aila.
\newblock Style{GAN}-t: {U}nlocking the {P}ower of {GAN}s for {F}ast
  {L}arge-{S}cale {T}ext-to-{I}mage {S}ynthesis.
\newblock In {\em International Conference on Machine Learning}, pages
  30105--30118. PMLR, 2023.

\bibitem{sokolic2017generalization}
Jure Sokolic, Raja Giryes, Guillermo Sapiro, and Miguel Rodrigues.
\newblock Generalization {E}rror of {I}nvariant {C}lassifiers.
\newblock In {\em Artificial Intelligence and Statistics}, pages 1094--1103.
  PMLR, 2017.

\bibitem{sriperumbudur2012empirical}
Bharath~K Sriperumbudur, Kenji Fukumizu, Arthur Gretton, Bernhard
  Sch{\"o}lkopf, and Gert~RG Lanckriet.
\newblock On the {E}mpirical {E}stimation of {I}ntegral {P}robability
  {M}etrics.
\newblock {\em Electron. J. Stat.}, 6:1550--1599, 2012.

\bibitem{tahmasebi2023sample}
Behrooz Tahmasebi and Stefanie Jegelka.
\newblock Sample {C}omplexity {B}ounds for {E}stimating {P}robability
  {D}ivergences under {I}nvariances.
\newblock In {\em Forty-first International Conference on Machine Learning}.

\bibitem{tikhomirov1993varepsilon}
Vladimir~M Tikhomirov.
\newblock $\varepsilon$-{E}ntropy and $\varepsilon$-{C}apacity of {S}ets in
  {F}unctional {S}paces.
\newblock {\em Selected Works of AN Kolmogorov: Volume III: Information Theory
  and the Theory of Algorithms}, pages 86--170, 1993.

\bibitem{van2014probability}
Ramon Van~Handel.
\newblock Probability in {H}igh {D}imension.
\newblock Technical report, PRINCETON UNIV NJ, 2014.

\bibitem{vershynin2018high}
Roman Vershynin.
\newblock {\em High-dimensional {P}robability: {A}n {I}ntroduction with
  {A}pplications in {D}ata {S}cience}, volume~47.
\newblock Cambridge University Press, 2018.

\bibitem{weiler2019general}
Maurice Weiler and Gabriele Cesa.
\newblock General {E}(2)-{E}quivariant {S}teerable {CNN}s.
\newblock {\em Advances in Neural Information Processing Systems}, 32, 2019.

\bibitem{yang2022capacity}
Yunfei Yang, Zhen Li, and Yang Wang.
\newblock On the {C}apacity of {D}eep {G}enerative {N}etworks for
  {A}pproximating {D}istributions.
\newblock {\em Neural Networks}, 145:144--154, 2022.

\bibitem{yarotsky2017error}
Dmitry Yarotsky.
\newblock Error {B}ounds for {A}pproximations with {D}eep {R}elu {N}etworks.
\newblock {\em Neural Networks}, 94:103--114, 2017.

\bibitem{yu2018generative}
Jiahui Yu, Zhe Lin, Jimei Yang, Xiaohui Shen, Xin Lu, and Thomas~S Huang.
\newblock Generative {I}mage {I}npainting with {C}ontextual {A}ttention.
\newblock In {\em Proceedings of the IEEE Conference on Computer Vision and
  Pattern Recognition}, pages 5505--5514, 2018.

\bibitem{zhou2020universality}
Ding-Xuan Zhou.
\newblock Universality of {D}eep {C}onvolutional {N}eural {N}etworks.
\newblock {\em Appl. Comput. Harmon. Anal.}, 48(2):787--794, 2020.

\bibitem{zhu2017unpaired}
Jun-Yan Zhu, Taesung Park, Phillip Isola, and Alexei~A Efros.
\newblock Unpaired image-to-image {T}ranslation using cycle-consistent
  {A}dversarial {N}etworks.
\newblock In {\em Proceedings of the IEEE International Conference on Computer
  Vision}, pages 2223--2232, 2017.

\bibitem{zhu2024noise}
Ziran Zhu, Tongda Xu, Ling Li, and Yan Wang.
\newblock Noise {D}imension of {GAN}: {A}n {I}mage {C}ompression {P}erspective.
\newblock {\em arXiv preprint arXiv:2403.09196}, 2024.

\end{thebibliography}

\end{document}